\documentclass[twoside,11pt]{article}

\usepackage{blindtext}

\usepackage{jmlr2e}
\usepackage{tikz}
\usepackage{amsmath}
\usepackage{amssymb}
\usepackage{mathrsfs}
\usepackage{booktabs,cleveref}
\usetikzlibrary{arrows.meta, positioning, patterns, calc, backgrounds, shapes.geometric, shadows, decorations.pathmorphing, fit, 3d}
\usetikzlibrary{decorations.pathreplacing}
\usepackage{algpseudocode,algorithm}

\definecolor{trapblue}{RGB}{52, 101, 164}
\definecolor{trapbg}{RGB}{232, 240, 254}
\definecolor{funnelred}{RGB}{192, 57, 43}
\definecolor{funnelbg}{RGB}{253, 237, 236}
\definecolor{wallgray}{RGB}{100, 110, 125}
\definecolor{agentblue}{RGB}{41, 128, 185}
\definecolor{arrowgray}{RGB}{130, 140, 155}
\definecolor{labelgray}{RGB}{80, 90, 105}

\newtheorem{assumption}[theorem]{Assumption}
\newcommand{\UM}{\textsf{UM}}

\newcommand{\Lib}{\mathscr{L}}
\newcommand{\Width}{\mathcal{W}}
\newcommand{\R}{\mathbb{R}}

\newcommand{\eps}{\varepsilon}

\usepackage{lastpage}
\jmlrheading{23}{2026}{1-\pageref{LastPage}}{1/21; Revised 5/22}{9/22}{21-0000}{Li}

\ShortHeadings{Structural Learning Theory}{Li}
\firstpageno{1}

\begin{document}

\title{Structural Learning Theory: A Metric-Topology Factorization Approach}

\author{\name Xin Li \email xli48@albany.edu \\
       \addr Department of Computer Science\\
       University at Albany\\
       Albany, NY 12222, USA
       }

\editor{TBD}

\maketitle

\newcommand{\figTrapFunnel}{%
\begin{figure}[h]
  \centering
  \resizebox{.66\linewidth}{!}{
\begin{tikzpicture}[
  >=Stealth,
  every node/.style={font=\small},
  thick
]
 
\begin{scope}[shift={(-4.6,0)}]
  \fill[trapbg, rounded corners=6pt] (-0.25,-0.22) rectangle (3.45,3.45);
  \draw[wallgray, line width=1.6pt, rounded corners=2pt]
    (0,0) rectangle (3.2,3.2);
  \draw[wallgray, line width=1.3pt] (0,2.2)   -- (1.8,2.2);
  \draw[wallgray, line width=1.3pt] (1.2,0)   -- (1.2,1.4);
  \draw[wallgray, line width=1.3pt] (1.8,0.9) -- (3.2,0.9);
  \draw[wallgray, line width=1.3pt] (1.8,1.4) -- (1.8,2.2);
  \draw[wallgray, line width=1.3pt] (0.5,0.9) -- (1.2,0.9);
  \draw[wallgray, line width=1.3pt] (2.3,1.4) -- (3.2,1.4);
  \draw[wallgray, line width=1.3pt] (0,1.4)   -- (0.6,1.4);
  \foreach \x/\y/\lab in {
    0.55/2.7/$B_1$,
    2.5/2.7/$B_2$,
    0.55/1.1/$B_3$,
    2.5/0.45/$B_4$}
  { \node[funnelred, font=\scriptsize\bfseries] at (\x,\y) {\lab}; }
  \fill[agentblue] (0.55,0.45) circle (0.14);
  \draw[agentblue!40!white, line width=0.5pt] (0.55,0.45) circle (0.22);
  \node[trapblue!80!black, font=\large\bfseries] at (1.62,1.18) {?};
  \node[trapblue!80!black, font=\normalsize\bfseries] at (1.6,-0.55) {T\,R\,A\,P};
  \node[labelgray, font=\scriptsize, text width=4.1cm, align=center]
    at (1.6,-1.15) {Which basin is in?\\[-1pt](discrete/combinatorial)};
\end{scope}
 
\draw[->, arrowgray, line width=1.8pt] (-0.95,1.6) -- (0.55,1.6);
\node[labelgray, font=\scriptsize, align=center] at (-0.20,2.05) {identify};
\node[labelgray, font=\scriptsize, align=center] at (-0.20,1.18) {context};
 
\begin{scope}[shift={(0.9,0)}]
  \fill[funnelbg, rounded corners=6pt] (-0.25,-0.22) rectangle (3.45,3.45);
  \draw[wallgray, line width=1.6pt]
    (0,3.1) .. controls (0,1.0) and (1.1,0.05) .. (1.6,0.05)
             .. controls (2.1,0.05) and (3.2,1.0) .. (3.2,3.1);
  \fill[funnelred!6]
    (0,3.1) .. controls (0,1.0) and (1.1,0.05) .. (1.6,0.05)
             .. controls (2.1,0.05) and (3.2,1.0) .. (3.2,3.1) -- cycle;
  \draw[->, agentblue!75, line width=1.1pt] (0.45,2.85) -- (1.00,2.05);
  \draw[->, agentblue!75, line width=1.1pt] (2.75,2.85) -- (2.20,2.05);
  \draw[->, agentblue!75, line width=1.1pt] (0.75,2.00) -- (1.15,1.25);
  \draw[->, agentblue!75, line width=1.1pt] (2.45,2.00) -- (2.05,1.25);
  \draw[->, agentblue!75, line width=1.1pt] (1.10,1.20) -- (1.40,0.55);
  \draw[->, agentblue!75, line width=1.1pt] (2.10,1.20) -- (1.80,0.55);
  \fill[funnelred] (1.6,0.18) circle (0.12);
  \draw[funnelred!50, line width=0.5pt] (1.6,0.18) circle (0.21);
  \node[funnelred, font=\scriptsize\bfseries, above=4pt] at (1.6,0.18) {$G_c^\star$};
  \node[funnelred!80!black, font=\normalsize\bfseries] at (1.6,-0.55) {F\,U\,N\,N\,E\,L};
  \node[labelgray, font=\scriptsize, text width=3.1cm, align=center]
    at (1.6,-1.15) {Optimize within basin\\[-1pt](continuous/metric)};
\end{scope}
 
\end{tikzpicture}%
}
\caption{The trap-funnel decomposition.
    \textbf{Left:} The trap is the problem of determining which structural
    basin ($B_1,\dots,B_4$) governs the current input.
    This is combinatorial and has no useful gradient.
    \textbf{Right:} The funnel is the problem of converging to the local
    attractor $G_c^\star$ within the correct basin.
    This is continuous and amenable to gradient-based optimisation.}
  \label{fig:trap-funnel}
\end{figure}
}

\newcommand{\figBouquet}{%
\begin{figure}[h]
\centering
\begin{tikzpicture}[scale=1.0, every node/.style={font=\small}]
  \fill[black] (0,0) circle (0.08);
  \node[below, font=\scriptsize] at (0,-0.15) {$x_0$};
  
  \def\radius{1.1}
  
  \draw[thick, red!70] (0,0) ++ (90:\radius) circle (\radius);
  \node[red!70, font=\scriptsize\bfseries] at (0,2*\radius+0.15) {$C_1$};
  
  \draw[thick, blue!70] (0,0) ++ (30:\radius) circle (\radius);
  \node[blue!70, font=\scriptsize\bfseries] at ({2*\radius*cos(30)+0.2},{2*\radius*sin(30)+0.1}) {$C_2$};
  
  \draw[thick, green!60!black] (0,0) ++ (-30:\radius) circle (\radius);
  \node[green!60!black, font=\scriptsize\bfseries] at ({2*\radius*cos(-30)+0.2},{2*\radius*sin(-30)-0.1}) {$C_3$};
  
  \draw[thick, orange!80] (0,0) ++ (-90:\radius) circle (\radius);
  \node[orange!80, font=\scriptsize\bfseries] at (0,-2*\radius-0.2) {$C_4$};
  
  \draw[thick, purple!70] (0,0) ++ (210:\radius) circle (\radius);
  \node[purple!70, font=\scriptsize\bfseries] at ({2*\radius*cos(210)-0.2},{2*\radius*sin(210)-0.1}) {$C_5$};
  
  \draw[thick, teal!70] (0,0) ++ (150:\radius) circle (\radius);
  \node[teal!70, font=\scriptsize\bfseries] at ({2*\radius*cos(150)-0.2},{2*\radius*sin(150)+0.1}) {$C_6$};
  
  \node[font=\small, text width=5cm, align=center] at (6,0) {%
    $w = 6 = \beta_1(M)$\\[4pt]
    Connected space\\
    (Laplacian sees 1)\\[4pt]
    Width = 6\\
    (CS operator sees 6)
  };
  
  \draw[<-, gray, thick] (0,0) -- (3.5,0.8);
  \node[font=\scriptsize, gray] at (3.3,0.9) {shared basepoint};
\end{tikzpicture}
\caption{A bouquet of \(w=6\) circles. All circles share the common basepoint \(x_0\), so the space is connected and the graph Laplacian detects only one component. However, each circle requires its own contractive predictor, so the width is \(w=6=\beta_1(M)\). This is the canonical example for Theorem~\ref{thm:strict-hierarchy}.}
\label{fig:bouquet}
\end{figure}
}

\newcommand{\figPhaseTransition}{%
\begin{figure}[h]
\centering
\includegraphics[width=\linewidth]{phase_transition_SLT.png}
\caption{Phase transition in the star-spoke experiment ($\mathbb{R}^{12}$, $w=8$, ridge regression, 50 trials per configuration). \textbf{(a)}~MSE vs.\ structural budget $K$ for sample sizes $n \in \{100, 200, 400, 800, 1600, 3200, 6400\}$. A sharp drop occurs at $K = w = 8$ (dashed red line): for $K < w$, MSE remains high regardless of $n$; for $K \ge w$, MSE drops and follows standard $O(1/n)$ decay. \textbf{(b)}~MSE vs.\ sample size for selected values of $K$. Curves with $K < w$ (dashed) plateau---more data does not help because the error is structural. Curves with $K \ge w$ (solid) decrease steadily. \textbf{(c)}~Gap ratio $\mathrm{MSE}(K{=}7)/\mathrm{MSE}(K{=}8)$ vs.\ sample size. The ratio \emph{widens} with more data, from ${\sim}1\times$ at $n=100$ to ${\sim}26\times$ at $n=6400$. This widening is the opposite of statistical overfitting and is the signature of structural under-resolution predicted by Theorem~\ref{thm:phase-transition}: the metric learner improves within each cell as $n$ grows, making the structural limitation more visible, not less.}
\label{fig:phase-transition}
\end{figure}
}

\newcommand{\figSlingshot}{%
\begin{figure}[t]
\centering
\begin{tikzpicture}[
  scale=0.9,
  every node/.style={font=\small},
  block/.style={draw, thick, rounded corners, minimum height=1cm, minimum width=2.2cm, align=center},
  arrow/.style={->, thick, >=stealth}
]
  \node[block, fill=blue!10] (X) at (0, 0) {Input\\$X$};
  
  \node[block, fill=green!10] (phi) at (3.5, 0) {Embedding\\$\phi: X \to Z$};
  
  \node[block, fill=yellow!15] (Z) at (7, 0) {Latent space\\$(Z, d_Z)$\\$d_Z = 2$};
  
  \node[block, fill=red!10] (Sigma) at (3.5, 2.8) {Indexer\\$\Sigma: X \to \Delta(\mathcal{C})$};
  
  \node[block, fill=purple!10] (G0) at (7, 2.8) {Contraction maps\\$\{G_c^0\}$ (frozen)};
  
  \node[block, fill=orange!10] (pi) at (10.5, 0) {Readout\\$\pi: Z \to Y$};
  
  \node[block, fill=gray!10] (Y) at (10.5, 2.8) {Output\\$\hat{y}$};
  
  \draw[arrow] (X) -- (phi);
  \draw[arrow] (phi) -- (Z);
  \draw[arrow] (Z) -- (pi);
  \draw[arrow] (X) |- (Sigma);
  \draw[arrow] (Sigma) -- (G0);
  \draw[arrow] (G0) -- (Y);
  \draw[arrow, dashed, gray] (pi) -- (Y);
  \draw[arrow, dashed, gray] (Z) -- (G0) node[midway, right, font=\scriptsize] {CS op.};
  
  \node[font=\scriptsize, blue!70, text width=2cm, align=center] at (3.5, 1.2) {trained by\\spatial prediction};
  \node[font=\scriptsize, red!70, text width=2cm, align=center] at (3.5, 4.2) {trained by\\mismatch signal};
  \node[font=\scriptsize, purple!70] at (7, 4.2) {frozen};
  
  \draw[decorate, decoration={brace, amplitude=5pt, mirror}, thick, gray!50] 
    (-0.3, -0.7) -- (4.8, -0.7) node[midway, below=8pt, font=\scriptsize, gray] {$\nabla_{\theta_\Sigma} L_{\mathrm{funnel}} = 0$};
\end{tikzpicture}
\caption{The metric slingshot architecture. The embedding \(\phi\) maps inputs from the high-dimensional space \(X\) into the low-dimensional latent space \(Z\). The topological indexer \(\Sigma\) routes inputs to contexts using mismatch signals (not task loss). The contraction maps \(\{G_c^0\}\) are pre-built and frozen. Structural decoupling holds by construction: no task-loss gradient flows into the indexer.}
\label{fig:slingshot}
\end{figure}
}

\newcommand{\figLayerStructure}{%
\begin{figure}[t]
\centering
\begin{tikzpicture}[
  scale=0.85,
  every node/.style={font=\small},
  layer/.style={draw, thick, rounded corners=3pt, minimum height=1.1cm, align=center}
]
  \node[layer, fill=blue!12, minimum width=12cm] (L0) at (0, 0) {%
    \textbf{Layer 0: Width Theory} \quad
    Width $w$, phase transition, VC--width separation
  };
  
  \node[layer, fill=green!12, minimum width=11cm] (L1) at (0, 1.5) {%
    \textbf{Layer 1: Width Estimation} \quad
    CS operator, split--merge, structural ERM
  };
  
  \node[layer, fill=yellow!15, minimum width=10cm] (L2) at (0, 3.0) {%
    \textbf{Layer 2: Metric Slingshot} \quad
    $d_X \to d_Z{=}2$, grid cells, CLS mapping
  };
  
  
  
  
  \node[font=\scriptsize, rotate=90, gray] at (7.8, 2.0) {increasing generality $\longrightarrow$};
  
  
\end{tikzpicture}
\caption{Layer structure of StrLT. Each layer builds on the ones below it. Layer~0 defines width and the phase transition. Layer~1 makes width estimable. Layer~2 reduces the geometric cost. 
Classical SLT governs only the within-cell component at Layer~0; StrLT encompasses the full stack.}
\label{fig:layer-structure}
\end{figure}
}

\newcommand{\figComplexityClasses}{%
\begin{figure}[t]
\centering
\begin{tikzpicture}[scale=0.85, every node/.style={font=\small}]
  \draw[thick, purple!60, fill=purple!5, rounded corners=15pt] 
    (-5.0, -2.5) rectangle (5.0, 3.5);
  \node[purple!70, font=\bfseries] at (3.8, 3.1) {S-OPEN};
  
  \draw[thick, blue!60, fill=blue!5, rounded corners=12pt] 
    (-4.2, -2.0) rectangle (3.5, 2.8);
  \node[blue!70, font=\bfseries] at (2.4, 2.4) {S-TRACK};
  
  \draw[thick, green!60!black, fill=green!8, rounded corners=8pt] 
    (-3.4, -1.4) rectangle (2.0, 2.0);
  \node[green!60!black, font=\bfseries] at (0.8, 1.6) {S-LEARN};
  
  \draw[thick, red!60, fill=red!5, rounded corners=12pt, fill opacity=0.3] 
    (1.5, -2.2) rectangle (4.7, 1.5);
  \node[red!70, font=\bfseries] at (3.8, 1.1) {S-HARD};
  
  \node[font=\scriptsize, text width=3.5cm, align=center, green!40!black] at (-0.7, 0.2) {%
    Fixed width $w$\\
    Finite $d_{\mathrm{str}}$, $\mathrm{Pdim}(\mathcal{G})$\\
    Uniform gap $\eta^\star > 0$
  };
  
  \node[font=\scriptsize, text width=2.5cm, align=center, blue!60] at (-3.0, 2.5) {%
    $|w_{t+1}-w_t| \le 1$\\
    Tracking with\\
    bounded delay
  };
  
  \node[font=\scriptsize, text width=2.5cm, align=center, purple!60] at (-3.8, -1.7) {%
    $w_t \to \infty$\\
    Unbounded growth
  };
  
  \node[font=\scriptsize, text width=2.2cm, align=center, red!60] at (3.5, -0.5) {%
    $d_{\mathrm{str}} = \infty$\\
    or $\eta^\star \to 0$\\
    or super-poly.
  };
  
  \node[font=\scriptsize, text width=2cm, align=center, gray] at (2.5, -1.8) {%
    $\mathrm{S\text{-}OPEN} \cap \mathrm{S\text{-}HARD}$\\$\neq \varnothing$
  };
\end{tikzpicture}
\caption{Structural complexity classes. S-LEARN (fixed width) is strictly contained in S-TRACK (bounded width drift), which is strictly contained in S-OPEN (unbounded width growth). S-HARD (structurally intractable) overlaps with S-OPEN: some open-ended problems are also hard. The inclusions \(\mathrm{S\text{-}LEARN} \subsetneq \mathrm{S\text{-}TRACK} \subsetneq \mathrm{S\text{-}OPEN}\) are proved in Theorem~\ref{thm:slearn-strict-strack} and Proposition~\ref{prop:strack-strict-sopen}.}
\label{fig:complexity-classes}
\end{figure}
}

\newcommand{\figDecomposition}{%
\begin{figure}[t]
\centering
\begin{tikzpicture}[scale=0.85, every node/.style={font=\small}]
  \draw[thick] (0,4.5) -- (10,4.5);
  \node[font=\bfseries, above] at (5,4.5) {Total structural complexity $\mathfrak{R}_n^S$};
  
  \node[font=\Large] at (5,3.7) {$=$};
  
  \draw[thick, fill=red!15, rounded corners=3pt] (0.2, 2.2) rectangle (4.3, 3.2);
  \node[text width=3.8cm, align=center] at (2.25, 2.7) {%
    \textbf{Trap term}\\[-2pt]
    {\footnotesize $\sqrt{d_{\mathrm{str}}\log n / n}$}
  };
  
  \node[font=\Large] at (5, 2.7) {$+$};
  
  \draw[thick, fill=blue!15, rounded corners=3pt] (5.7, 2.2) rectangle (9.8, 3.2);
  \node[text width=3.8cm, align=center] at (7.75, 2.7) {%
    \textbf{Funnel term}\\[-2pt]
    {\footnotesize $LK \cdot \mathfrak{R}_n(\mathcal{G})$}
  };
  
  \draw[->, gray] (2.25, 2.0) -- (2.25, 1.0);
  \node[font=\scriptsize, text width=3.5cm, align=center, gray] at (2.25, -0.3) {%
    Controlled by\\
    structural graph dimension\\
    $d_{\mathrm{str}}(K,\delta,\mathcal{G})$\\[3pt]
    \emph{Structural Learning Theory}
  };
  
  \draw[->, gray] (7.75, 2.0) -- (7.75, 1.0);
  \node[font=\scriptsize, text width=3.5cm, align=center, gray] at (7.75, -0.3) {%
    Controlled by\\
    pseudo-dimension\\
    $\mathrm{Pdim}(\mathcal{G})$\\[3pt]
    \emph{Statistical Learning Theory}
  };
  
  \node[font=\scriptsize, text width=4cm, align=center, red!60!black] at (5, 1.5) {%
    No cross-term: trap and funnel\\
    contribute independently
  };
\end{tikzpicture}
\caption{The decomposition theorem (Theorem~\ref{thm:structural-decomposition}): structural Rademacher complexity decomposes additively into a trap term governed by the structural graph dimension and a funnel term governed by the predictor class Rademacher complexity. There is no cross-term, formalizing the Metric--Topology Factorization at the complexity-theoretic level.}
\label{fig:decomposition}
\end{figure}
}

\begin{abstract}
Learning in structured, multi-context, or non-stationary environments involves two orthogonal difficulties. The first is \emph{metric}: once the correct context is known, how hard is prediction within it? This is the domain of Statistical Learning Theory (SLT). The second is \emph{structural}: how many local contexts are required, and how can they be discovered from data? This paper develops \emph{Structural Learning Theory} (StrLT) for the structural axis.
We introduce \emph{width}, the minimum number of jointly contractive and low-risk cells needed to cover a learning problem. Width is incomparable with VC dimension: either can diverge while the other remains bounded. We show that width induces a \emph{phase transition}: if the allocated number of cells \(K<w\), learning suffers an irreducible structural error floor; if \(K\ge w\), the problem reduces to ordinary within-cell statistical learning. To estimate width, we introduce the \emph{contractive-similarity} (CS) operator, a task-adaptive graph kernel combining geometric locality with predictive compatibility. Its CS Laplacian exposes contractive basins through spectral separation. We further develop the \emph{metric slingshot}, which reuses low-dimensional latent contraction maps to reduce funnel-learning cost.
Together, width, CS estimation, and the slingshot decompose learning into trap discovery and funnel generalization, with deep implications for continual and lifelong learning in an open-ended environment.
\end{abstract}

\begin{keywords}
  Metric-Topology Factorization (MTF); Width Theory; VC-Width Separation; Metric Slingshot; Urysohn Machine
\end{keywords}

\section{Introduction}
\label{sec:introduction}

\subsection{Motivation: the Two Complexities of Learning}
\label{subsec:intro_motivation}

Consider a mobile robot operating in a building composed of multiple rooms, where each room obeys a different local dynamics law. In one room, the floor is slippery, in another, friction is high, and in a third, the mapping from motor command to displacement is perturbed by a local magnetic field. From the standpoint of prediction and control \citep{spong2020robot}, the robot faces two logically distinct problems. First, it must determine \emph{which room it is in}, which is a discrete, structural, and essentially combinatorial problem \citep{russell1995modern}: an incorrect room assignment routes the observation to the wrong local model, and no amount of gradient refinement within that local model can repair the error. Second, once the correct room has been identified, the robot must learn \emph{how to act within that room}, which is a continuous, metric problem, typically well described by local smoothness, approximation, and generalization arguments of the type developed in statistical learning theory \citep{vapnik2013nature}.

We will refer to these two problems as the \emph{trap} and the \emph{funnel} \citep{lindgren1983multiple}. The trap is the problem of structural localization: which latent context or basin governs the current input? The funnel is the problem of local prediction once the correct context is known. Note that the trap-funnel distinction is not peculiar to robotics. In continual learning \citep{wang_comprehensive_2024}, the learner must decide whether a new sample belongs to an existing task or signals a new task. In multi-task learning \citep{zhang2021survey}, the learner must infer task identity or task membership before exploiting shared structure. In reinforcement learning \citep{sutton1998reinforcement}, the classical exploration-exploitation dichotomy (a.k.a. stability-plasticity dilemma \citep{kim2023stability}) has an analogous geometric reading: exploration is partly the discovery of the correct basin of behavior, while exploitation is optimization within that basin. In biological learning \citep{mcclelland1995there}, the same distinction appears in the complementary learning systems picture, where fast contextual indexing and slow within-context optimization are functionally separated.

The central claim of this paper is that these two forms of difficulty are governed by two \emph{orthogonal} complexity axes. The familiar axis is \emph{metric and continuous}: it measures how hard it is to learn \emph{within} a fixed context. Vapnik's Statistical Learning Theory (SLT) \citep{vapnik2013nature}, through VC dimension \citep{devroye2013probabilistic}, Rademacher complexity \citep{bartlett2002rademacher}, uniform convergence \citep{vapnik1971uniform}, and PAC learnability \citep{valiant1984theory}, gives a deep and essentially complete account of this axis. The missing axis is structural and topological: it measures how hard it is to discover \emph{which} context applies. Our goal is to formalize this second axis and to show that it cannot be reduced to the first.
The organizing principle is \emph{Metric-Topology Factorization} (MTF).

\figTrapFunnel

A learning problem decomposes into a topological indexing problem and a metric prediction problem (Fig. \ref{fig:trap-funnel}). The topological component determines which local basin the input belongs to; the metric component determines how to predict within that basin. The main complexity measure on the structural side is \emph{width} \citep{li2026local}: the minimum number of locally contractive cells required to cover the problem. Informally, width counts how many distinct ``contexts'' are needed before local learning becomes possible. As we will show, width behaves differently from VC dimension. The central negative result of the paper proves that width and VC dimension are fully \emph{orthogonal}: there are families of problems whose width diverges while VC dimension remains bounded, and families whose VC dimension diverges while width remains one.
The width-VC orthogonality has a direct conceptual implication - e.g., scaling model capacity \citep{snell2024scaling}, increasing the richness of the predictor class \citep{jacobs1991adaptive}, or refining standard statistical bounds can improve only the funnel. None of these operations can remove a structural deficiency on the trap side. If the learner allocates fewer contexts than the problem requires, then there is an irreducible structural error floor that does not vanish with more data. This is the phase transition at the heart of width theory \citep{nakahara_geometry_2018}.

\subsection{Relationship to existing theory}
\label{subsec:intro_related}

The natural point of departure is SLT \citep{vapnik2013nature,vapnik1971uniform}. SLT characterizes learnability within a fixed hypothesis class by combinatorial and empirical-process quantities such as VC dimension, growth functions, covering numbers, and Rademacher complexity \citep{shalev2014understanding,bartlett2002rademacher,koltchinskii2002rademacher,anthony1999learning}. These notions control approximation and estimation once the ambient representational geometry is fixed. In the language developed here, SLT governs the funnel. What it does not address is the structural question of how many local basins are required before a problem becomes metrically learnable. The present paper does not compete with SLT; it complements it. Our thesis is that classical SLT is exactly the right theory for one axis of learning, and silent on the other.
The multi-task and meta-learning literatures \citep{baxter2000model,maurer2016benefit} study how shared representations can reduce sample complexity across tasks. In StrLT terms, these works address the within-cell structure of the funnel: they show how shared features can amortize the cost of learning local predictors. StrLT complements this by addressing the prior question of how many distinct tasks (cells) must be distinguished. The finite mixture model literature \citep{mclachlan2000finite} is also adjacent: width can be seen as a structural generalization of the number of mixture components, but with the additional constraint that each component must be jointly contractive and feasible.
In reinforcement learning, the classical exploration-exploitation dichotomy has a structural counterpart in the options and temporal-abstraction framework \citep{sutton1999between}, where the agent must discover which high-level behavioral mode to invoke before optimizing within it. The sample complexity of reinforcement learning \citep{kakade2003sample} is typically analyzed within a fixed Markov Decision Process (MDP) \cite{russell1995modern}; StrLT addresses the structural cost of discovering which MDP, or which region of state space, the agent currently occupies.

Architecturally, the closest machine learning relatives are modular methods such as mixture-of-experts and sparse expert routing \citep{jacobs1991adaptive,shazeer2017outrageously,fedus2022switch}. These methods assume, explicitly or implicitly, that one should allocate different submodels to different regimes of the input space. In our notation, the number of experts \(K\) is the structural budget. StrLT predicts a simple and falsifiable phenomenon: modularity helps only once \(K\) reaches the true width \(w\). If \(K<w\), then the model is structurally underparameterized, and more local metric capacity cannot repair that mismatch.
There is also a meaningful contrast with domain adaptation \citep{ben2010theory}. Domain adaptation studies how to transfer between related distributions, typically under discrepancy or divergence assumptions that preserve some common structure. Instead, StrLT studies the more primitive question of how many genuinely distinct local regimes must be represented in the first place. It is therefore concerned not merely with shifted domains, but with the combinatorics of context discovery itself. The related literature on domain generalization \citep{gulrajani2020search} shares with StrLT the concern for robustness across environments, but typically seeks a single invariant predictor rather than the minimal partition into context-specific predictors that width formalizes.

The empirical literature on continual and lifelong learning has long emphasized that non-stationary environments require context-sensitive adaptation \citep{ring1994continual,thrun1998lifelong}. Catastrophic forgetting \citep{kirkpatrick2017overcoming,zenke2017continual}, task inference, task switching, and domain routing are central phenomena in that literature \citep{parisi2019continual}. Yet the field lacks a foundational complexity theory analogous to VC dimension on the structural side. Existing work is algorithmic, empirical, or architectural; it does not provide a canonical measure of structural difficulty. Structural Learning Theory is intended to fill that gap.
The neuroscience literature provides a striking qualitative parallel. Complementary Learning Systems (CLS) \citep{mcclelland1995there,kumaran2016learning,o2014complementary} posits a division between fast hippocampal encoding and slower neocortical integration. The hippocampal system rapidly distinguishes episodes and contexts; cortical systems gradually learn structured regularities. That is, CLS already contains a qualitative trap-funnel decomposition. What has been missing is a mathematical framework that makes this decomposition precise, quantifiable, and testable. One aim of the present theory is to supply such a framework.

\figLayerStructure

\subsection{Contributions and Paper Structure}
\label{subsec:intro_contrib}

The paper develops StrLT in layers (Fig. \ref{fig:layer-structure}), progressing from a structural complexity measure to estimation, reduction, learnability, and dynamic regimes. The main results may be summarized as follows.
Layer 0 introduces \emph{width}, the minimum size of a joint \((\gamma,\delta)\)-feasible contractive cover. Width is the fundamental structural complexity parameter of the theory. We prove a strict hierarchy theorem identifying width with the first Betti number on bouquet-of-circles constructions; a topology-geometry scaling law showing that width grows with both loop count and loop length relative to contraction scale; a VC-width separation theorem establishing that width and VC dimension are orthogonal; a structural sample-complexity lower bound of order \(\Omega(w\log w)\); and a phase transition theorem showing that when the number of cells \(K\) is below the true width \(w\), an irreducible structural gap \(\eta(w,K)>0\) remains no matter how much data is available.
Layer 1 develops the estimation theory. We introduce the contractive-similarity operator, show that it amplifies the spectral gap associated with contractive cells, prove local stability and uniform convergence of width estimation, and analyze a split-merge algorithm whose global convergence is proved under mild binary-dominant spectral regularity conditions. We also show that a correctly penalized structural ERM procedure consistently recovers width.
Layer 2 introduces the \emph{metric slingshot}, a navigation-inspired embedding architecture that reduces the effective geometric burden of width estimation. The slingshot transfers structural complexity into a low-dimensional metric coordinate system and gives a concrete bridge to biological representations such as grid cells and place cells.
Table~\ref{tab:slt-strlt-analogy} summarizes the analogy with Statistical Learning Theory that guides the paper.

\begin{table}[t]
\centering
\caption{Analogy between SLT and StrLT.  SLT controls the \emph{funnel}: prediction within a fixed hypothesis class.  StrLT controls the \emph{trap}: discovery, estimation, and reuse of contractive structural basins.}
\label{tab:slt-strlt-analogy}
\resizebox{\linewidth}{!}{
\begin{tabular}{lll}
\toprule
SLT object & StrLT analog & Section \\
\midrule
Hypothesis class \(\mathcal H\) 
& Structural problem \(P\) with contractive cells 
& \S\ref{sec:width-theory} \\

VC dimension \(d\) 
& Width \(w(P;\gamma,\delta)\) 
& \S\ref{sec:width-theory} \\

Empirical risk \(\widehat R_n(f)\) 
& CS width estimate \(\widehat w_n(G)\) 
& \S\ref{sec:width-estimation} \\

Risk minimization 
& Split--merge width estimation 
& \S\ref{sec:width-estimation} \\

Kernel / Gram matrix 
& Contractive-similarity kernel \(W^{\mathrm{CS}}(G)\) 
& \S\ref{sec:width-estimation} \\

Graph Laplacian / spectral regularization 
& CS Laplacian \(L_{\mathrm{CS}}(G)\) for basin discovery 
& \S\ref{sec:width-estimation} \\

Uniform convergence of empirical risk 
& Uniform convergence of CS width estimation 
& \S\ref{sec:width-estimation} \\

Model selection / SRM penalties 
& Penalized structural ERM over \(K\) contexts 
& \S\ref{sec:width-estimation} \\

Feature map / representation 
& Metric slingshot \(\phi:X\to Z\) 
& \S\ref{sec:metric-slingshot} \\

Local hypothesis class 
& Local funnel \(\pi_c\circ G_c^0\circ\phi\) 
& \S\ref{sec:metric-slingshot} \\

Capacity control within a class 
& Contraction transfer \(L_{\pi,c}\gamma_Z L_\phi(U)<1\) 
& \S\ref{sec:metric-slingshot} \\

Classical generalization rate 
& Per-funnel generalization after trap resolution 
& \S\ref{sec:metric-slingshot} \\




\bottomrule
\end{tabular}
}
\end{table}

\section{Foundational Layer: Width Theory}
\label{sec:width-theory}

This section develops the foundational layer of structural learning theory. We first formalize the metric-topology factorization, then define width, establish its basic properties, and prove the five core results within two categories: 1) \emph{system architecture} - the strict hierarchy theorem, the VC-width separation theorem, and the phase transition theorem; 2) \emph{computational complexity} - the topology-geometry scaling law and the structural sample-complexity lower bound.

\paragraph{Notation}
Throughout, \((X,d_X)\) denotes a compact metric input space, \(Y\) an output space, and \(P\) a distribution on \(X\times Y\). The loss function is
$\ell:\widehat Y\times Y\to[0,1]$,
assumed measurable and \(L\)-Lipschitz in its first argument. The predictor class is \(\mathcal G\), with generic element \(g:X\to\mathbb R\). The pseudo-dimension of \(\mathcal G\), denoted \(\mathrm{Pdim}(\mathcal G)\), is the largest \(d\) such that there exist points \(x_1,\dots,x_d\) and thresholds \(t_1,\dots,t_d\) for which every binary labeling \((b_1,\dots,b_d)\in\{0,1\}^d\) is realized by some \(g\in\mathcal G\) via \(g(x_i)\ge t_i \iff b_i=1\); it is the real-valued generalization of VC dimension and governs the metric (within-cell) sample complexity throughout the paper. The structural thresholds are \(\gamma\in(0,1)\) for contraction and \(\delta>0\) for risk.
Given a measurable set \(U\subseteq X\) and predictor \(g\in\mathcal G\), we write
$\kappa(g,U)
:=
\sup_{\substack{x,x'\in U\\x\neq x'}}
\frac{|g(x)-g(x')|}{d_X(x,x')}$
for the Lipschitz modulus of \(g\) on \(U\). A partition into \(K\) cells is denoted
$\Pi_K = \{U_1,\dots,U_K\}$.
The width of \(P\) at thresholds \((\gamma,\delta)\) is denoted
$w(P;\gamma,\delta)$,
or simply \(w(P)\) when \((\gamma,\delta)\) is fixed throughout. We will introduce the structural gap as \(\eta(w,K)\) in Theorem~\ref{thm:phase-transition}.
Later sections introduce the structural assignment class \(H_{K,\delta}(\mathcal G)\), the structural graph dimension \(d_{\mathrm{str}}\), and the structural Rademacher complexity \(\mathfrak R_n^S\), which we forward-reference here because their introduction is motivated by the width theory \footnote{Unless otherwise stated, all logarithms are natural.}.

\paragraph{The Metric-Topology Factorization}

The starting point of the theory is the observation that prediction in a structured environment can be decomposed into \emph{context identification} and \emph{local prediction}, which we formalize at the levels of both system architecture and computational complexity as shown in Table \ref{tab:decomp}.

\begin{table}[t]
\centering
\small
\caption{The five Layer~0 results decomposed by level (architecture vs.\
complexity) and axis (trap vs.\ funnel). The total sample complexity
$n_{\mathrm{total}} = \Omega(w\log w) + w \cdot O(\mathrm{Pdim}(\mathcal{G})/\varepsilon^2)$
reflects the additive decomposition: no cross-term.}
\label{tab:layer0-decomposition}
\begin{tabular}{@{}p{2.2cm}p{5.2cm}p{5.2cm}@{}}
\toprule
& \textbf{Context identification (trap)} & \textbf{Local prediction (funnel)} \\
\midrule
\textbf{Architecture} \newline\scriptsize\emph{What the system must be}
& \textbf{Strict hierarchy:} problems of width $w = 1, 2, 3, \dots$ exist
  with no collapse; the architecture must support a variable number of
  structural slots. \newline
  \textbf{Phase transition} ($K \ge w$ is necessary): fewer than $w$ slots
  $\Rightarrow$ irreducible error floor $\eta(w,K) > 0$.
& \textbf{VC-width separation:} metric capacity (VC dim, parameters,
  depth) is orthogonal to structural capacity; the two subsystems must
  be architecturally independent. \newline
  \textbf{Phase transition} ($K \ge w$ is sufficient): once $K \ge w$,
  standard SLT convergence applies within each cell. \\
\midrule
\textbf{Complexity} \newline\scriptsize\emph{What the system must pay}
& \textbf{Scaling law:} $w \sim L/D_0 \times \text{topology}$; structural
  complexity scales with the ratio of problem diameter to contraction
  scale, modulated by Betti numbers. \newline
  \textbf{Sample cost:} $\Omega(w \log w)$ observations needed to
  identify $w$ contexts (coupon-collector bound).
& \textbf{Scaling law:} (governed by VC dimension/Pdim and Rademacher complexity in conventional SLT). \newline \textbf{Sample cost:} $w \cdot O(\mathrm{Pdim}(\mathcal{G})/\varepsilon^2)$
  observations required for $\varepsilon$-accurate prediction across all
  $w$ cells (standard uniform convergence per cell). \\
\bottomrule
\end{tabular}
\label{tab:decomp}
\end{table}

\begin{definition}[Metric-Topology Factorization]
\label{def:mtf}
A learning system for a problem \(P\) on \(X\times Y\) consists of:
1) a \emph{topological indexer}
$\Sigma:X\to \Delta([K])$,
mapping each input to a distribution over \(K\) contexts;
2) a family of \emph{metric learners}
$\{G_c\}_{c=1}^K,\quad G_c:X\to \widehat Y$,
one predictor per context;
3) a prediction rule that routes \(x\) through the selected context:
$\widehat y = G_{\Sigma(x)}(x)$,
where the notation suppresses either hard routing or sampling from \(\Sigma(x)\).
\end{definition}

\begin{definition}[Structural decoupling]
\label{def:structural-decoupling}
Let \(L_{\mathrm{trap}}\) denote the loss associated with context identification and \(L_{\mathrm{funnel}}\) the within-context prediction loss. The system is \emph{structurally decoupled} if
$\nabla_{\theta_\Sigma}L_{\mathrm{funnel}}=0
\quad\text{and}\quad
\nabla_{\theta_G}L_{\mathrm{trap}}=0$,
where \(\theta_\Sigma\) are the parameters of the indexer and \(\theta_G\) the parameters of the local predictors.
\end{definition}
\noindent
Structural decoupling expresses the idealized principle that trap and funnel should not interfere at the gradient level. In practice, many machine learning architectures violate this principle by forcing shared parameters to absorb both structural and metric burden \citep{shalev2014understanding}. The point of Definition~\ref{def:structural-decoupling} is not that every useful system must satisfy exact gradient orthogonality, but that the two roles are mathematically distinct and should be treated as such in the complexity theory \citep{sipser_introduction_1996}. The structural unit of the theory is a set on which local prediction is simultaneously geometrically contractive and statistically accurate.
The remainder of this section develops the complexity measure attached to the trap component.


\begin{definition}[\((\gamma,\delta)\)-contractive set]
\label{def:contractive-set}
Let \(\gamma\in(0,1)\) and \(\delta>0\). A measurable set \(U\subseteq X\) is \emph{\((\gamma,\delta)\)-contractive} for problem \(P\) with respect to predictor class \(\mathcal G\) if there exists \(g\in\mathcal G\) such that
$\kappa(g,U)\le \gamma
\quad\text{and}\quad
\mathbb E[\ell(g(X),Y)\mid X\in U]\le \delta$,
whenever \(P(X\in U)>0\).
\end{definition}
\noindent
Definition~\ref{def:contractive-set} uses the joint \((\gamma,\delta)\)-feasibility notion that will later allow the structural theory of width to interface cleanly with the risk-based learning theory (an accompanying paper still under development). The contraction condition \(\kappa(g,U)\le\gamma\) and the risk condition \(\mathbb E[\ell]\le\delta\) are the geometric and statistical requirements. Neither implies the other in general, but both are needed for the full theory of StrLT.

\paragraph{Width: definition and basic properties}

Urysohn's Lemma \citep{munkres2025elements} states that in a normal
topological space, any two disjoint closed sets \(A\) and \(B\) can be
separated by a continuous function \(f: X \to [0,1]\) with \(f|_A = 0\)
and \(f|_B = 1\). The lemma guarantees the \emph{existence} of a
separator but says nothing about how to construct one or how smooth it
can be. Its proof proceeds by constructing, for each dyadic rational
\(r \in [0,1]\), an open set \(U_r\) satisfying a nested chain of
inclusions \(A \subset U_r \subset \overline{U_r} \subset U_s \subset B^c\)
whenever \(r < s\), and then defining \(f\) as the infimum over these
inclusions. The construction succeeds precisely because the two closed
sets are disjoint (there is topological room between them for the nested
open sets to fit). 
Width arises from asking: \emph{what if the closed sets are not disjoint?}
In a multi-regime learning problem \citep{osmani2026general}, the supports of distinct basins
\(B_1, \dots, B_w\) may overlap in the input space or at least may not
be separated by any margin in the original metric. A single Urysohn
separator cannot be constructed because the disjointness precondition
fails. The learner's task is to \emph{manufacture} the disjointness that
Urysohn's Lemma requires by partitioning the input space into cells
within which the prediction problem is locally coherent. Each cell
\(U_k\) is a region where one predictor \(g_k\) is
\((\gamma,\delta)\)-feasible, meaning the cell is metrically contracted
and the prediction risk is controlled. Once such a partition is found,
the supports of different regimes are separated into different cells, and
a Urysohn-type separator \emph{between cells} becomes constructible.

\begin{definition}[Width]
\label{def:width}
Fix \((\gamma,\delta)\). The \emph{width} of problem \(P\) is
$w(P;\gamma,\delta)
:=
\min\Bigl\{
K\in\mathbb N:
\exists\ \text{open cover }\{U_1,\dots,U_K\}\text{ of }X
\text{ with each }U_k\text{ \((\gamma,\delta)\)-contractive}
\Bigr\}$.
When \((\gamma,\delta)\) is fixed, we abbreviate \(w(P;\gamma,\delta)\)
to \(w(P)\) or \(w\).
\end{definition}

\noindent
The use of an open cover rather than a measurable partition is
deliberate. It makes the definition topologically natural and allows
width to interact cleanly with standard covering arguments
\citep{conway2013sphere}. 
Width is the minimum number of local Urysohn constructions
needed to resolve structural conflicts in the problem. Each cell
\(U_k\) is a region where the Urysohn precondition is locally
satisfied: within \(U_k\), the prediction landscape has a single basin,
so there is nothing to separate. The structural difficulty is entirely
in determining how many such regions are needed and where their
boundaries lie. A problem of width \(w = 1\) satisfies the Urysohn
precondition globally: the entire input space is one basin and a
single continuous predictor suffices. A problem of width \(w > 1\)
requires the learner to partition the space into \(w\) regions, each
locally Urysohn-compatible, before prediction becomes well-posed.
The Urysohn perspective clarifies why width is a topological invariant because width counts the
number of topological obstructions to single-predictor coverage (the
number of independent conflicts that must be resolved by partitioning).
The metric enters only through the feasibility thresholds
\((\gamma, \delta)\), which determines how much contraction and how
little risk each cell must achieve. In summary, width itself is most naturally a cover quantity with the following basic monotonicity and finiteness properties.

\begin{proposition}[Basic properties of width]
\label{prop:width-basic}
Let \(P\) be a problem on compact metric space \((X,d_X)\), and let \(\mathcal G\) be a predictor class.
(1) If \(\gamma_1\le \gamma_2\) and \(\delta_1\le \delta_2\), then
$w(P;\gamma_2,\delta_2)\le w(P;\gamma_1,\delta_1)$.
(2) If \(X\) is compact and there exists a finite open cover of \(X\) by sets on which the target admits continuous local predictors in \(\mathcal G\), then \(w(P;\gamma,\delta)<\infty\) for sufficiently relaxed \((\gamma,\delta)\).
(3) \(w(P;\gamma,\delta)=1\) if and only if the entire problem admits a global \((\gamma,\delta)\)-contractive predictor.
\end{proposition}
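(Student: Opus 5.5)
All three parts follow directly from Definitions~\ref{def:contractive-set} and~\ref{def:width}. The plan is to treat width as the minimum of a set of admissible cover sizes and argue by inclusion of these sets. Write $\mathcal K(\gamma,\delta)\subseteq\mathbb N$ for the set of $K$ admitting an open cover $\{U_1,\dots,U_K\}$ of $X$ by $(\gamma,\delta)$-contractive sets. Then $w(P;\gamma,\delta)=\min\mathcal K(\gamma,\delta)$, with the convention $\min\emptyset=\infty$.

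\textbf{Part (1).} I would first show that contractivity is monotone in the thresholds. Suppose $U$ is $(\gamma_1,\delta_1)$-contractive with witness $g\in\mathcal G$. Then $\kappa(g,U)\le\gamma_1\le\gamma_2$ and $\mathbb E[\ell(g(X),Y)\mid X\in U]\le\delta_1\le\delta_2$, so the same $g$ witnesses $(\gamma_2,\delta_2)$-contractivity. If $P(X\in U)=0$, the risk condition is vacuous and only the Lipschitz bound matters, which is again monotone. Hence every cover admissible at $(\gamma_1,\delta_1)$ is admissible at $(\gamma_2,\delta_2)$. This gives $\mathcal K(\gamma_1,\delta_1)\subseteq\mathcal K(\gamma_2,\delta_2)$, and taking minima yields the inequality; the case $\mathcal K=\emptyset$ is covered by the $\infty$ convention.

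\textbf{Part (2).} This is where some care is needed, because the hypothesis is only stated informally. I would read ``continuous local predictors'' as follows: there is a finite open cover $\{V_1,\dots,V_m\}$ (compactness lets us pass from an arbitrary cover to a finite one) and $g_j\in\mathcal G$ with finite Lipschitz modulus $\kappa(g_j,V_j)<\infty$. Then
$\gamma^\ast:=\max_j\kappa(g_j,V_j)$ and $\delta^\ast:=\max_j\mathbb E[\ell(g_j(X),Y)\mid X\in V_j]\le 1$,
where the bound $\delta^\ast\le1$ holds because $\ell$ takes values in $[0,1]$. For any $\gamma\ge\gamma^\ast$ with $\gamma<1$ and any $\delta\ge\delta^\ast$, each $V_j$ is $(\gamma,\delta)$-contractive, so $w\le m<\infty$.

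The main obstacle is that the constraint $\gamma<1$ may exclude $\gamma^\ast$. I would handle this by refining the cover. By Lebesgue-number arguments one can take $V_j$ small, but smallness does not reduce the Lipschitz modulus. So I would state the assumption explicitly: the local predictors satisfy $\kappa<1$, or equivalently ``sufficiently relaxed'' is understood to allow the relevant $\gamma$. Part (1) then extends finiteness to all larger thresholds.

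\textbf{Part (3).} If $w=1$, the minimal cover is a single open set $U_1$ covering $X$, so $U_1=X$. Its witness $g$ is a global predictor with $\kappa(g,X)\le\gamma$ and $\mathbb E[\ell(g(X),Y)]\le\delta$. Conversely, a global $(\gamma,\delta)$-contractive $g$ makes $\{X\}$ an admissible cover, because $X$ is open in itself, so $1\in\mathcal K(\gamma,\delta)$. Since $\mathcal K\subseteq\mathbb N$ starts at $1$, we get $w=1$.
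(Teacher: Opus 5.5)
Your proposal is correct and follows the paper's proof essentially step for step: monotonicity of admissible covers for (1), the one-set cover $\{X\}$ for (3), and for (2) the same caveat the paper ends with, namely that ``sufficiently relaxed'' must permit a $\gamma<1$ dominating the local moduli. Assuming outright that the local predictors have finite Lipschitz modulus is actually cleaner than the paper's attempt to derive this from uniform continuity on compact closures, since uniform continuity alone does not imply a finite Lipschitz modulus.
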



\subsection{System Architecture Results}

\paragraph{The Strict Hierarchy Theorem}

The first theorem identifies width with a topological invariant on a canonical family of examples \citep{hatcher2002algebraic}.

\begin{theorem}[Strict Hierarchy]
\label{thm:strict-hierarchy}
Let \(M\) be a bouquet of \(w\) circles, i.e.,
$M = \bigvee_{j=1}^w S^1_j$,
with common identification point \(x_0\). Let \(P\) be a learning problem on \(M\) such that:
1) for each circle \(C_j\), there exists a predictor \(g_j\in\mathcal G\) making a neighborhood of \(C_j\) \((\gamma,\delta)\)-contractive;
2) no predictor in \(\mathcal G\) is simultaneously \((\gamma,\delta)\)-contractive on any open set that intersects two distinct circles \(C_a\) and \(C_b\) in neighborhoods of \(x_0\).
Then
$w(P;\gamma,\delta)=w=\beta_1(M)$,
where \(\beta_1(M)\) is the first Betti number of \(M\).
\end{theorem}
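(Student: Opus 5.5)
We prove the two inequalities $w(P;\gamma,\delta)\le w$ and $w(P;\gamma,\delta)\ge w$ separately, and then invoke the standard computation $\beta_1(\bigvee_{j=1}^w S^1_j)=w$. The latter follows from $H_1(M)\cong\mathbb Z^w$, via Mayer--Vietoris or cellular homology of the CW structure with one $0$-cell and $w$ $1$-cells.

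\textbf{Upper bound.} By hypothesis~1, for each $j$ there is $g_j\in\mathcal G$ and an open neighborhood $V_j\supseteq C_j$ that is $(\gamma,\delta)$-contractive with witness $g_j$. Since $M=\bigcup_j C_j$, the family $\{V_1,\dots,V_w\}$ is an open cover of $M$ by $(\gamma,\delta)$-contractive sets. Definition~\ref{def:width} then gives $w(P;\gamma,\delta)\le w$.

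\textbf{Lower bound.} Let $\{U_1,\dots,U_K\}$ be any open cover of $M$ by $(\gamma,\delta)$-contractive sets. We show $K\ge w$ with a pigeonhole argument at the wedge point.
\begin{itemize}
\item Some $U_k$ contains $x_0$. Since $U_k$ is open, it contains a small punctured neighborhood of $x_0$. In a bouquet, every neighborhood of $x_0$ meets all $w$ circles near $x_0$.
\item So this $U_k$ intersects two distinct circles $C_a,C_b$ in neighborhoods of $x_0$ (assuming $w\ge2$; the case $w=1$ is trivial since width is at least $1$).
\item This contradicts hypothesis~2, since $U_k$ is $(\gamma,\delta)$-contractive.
\end{itemize}
Hence no open contractive set can contain $x_0$, and such a cover cannot exist at all. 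Read literally, this would make the width infinite rather than $w$.

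\textbf{Main obstacle.} The literal hypotheses are therefore inconsistent with the upper bound: each $V_j$ from hypothesis~1 is open and contains $x_0$, so it meets every circle near $x_0$. I would resolve this by interpreting hypothesis~2 as forbidding contractivity on any open set that meets \emph{punctured} arcs of two distinct circles beyond a fixed radius $r>0$ from $x_0$. Equivalently, each contractive set can ``own'' at most one circle's germ away from the basepoint, while a small ball around $x_0$ may be shared.

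Under this interpretation, the pigeonhole argument runs as follows.
\begin{itemize}
\item For each $j$, choose a point $p_j\in C_j$ at distance exactly $r$ from $x_0$.
\item Any cover element containing $p_j$ contains an open arc around $p_j$, so it meets $C_j$ beyond radius $r$.
\item By the reinterpreted hypothesis~2, no single $U_k$ can contain both $p_a$ and $p_b$ for $a\neq b$.
\item Hence the map $j\mapsto$ (some $k$ with $p_j\in U_k$) is injective, giving $K\ge w$.
\end{itemize}
For the upper bound, I would choose the $V_j$ to be $C_j$ together with a small ball of radius less than $r$ around $x_0$. These sets are consistent with the reinterpreted hypothesis~2.

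Making this reinterpretation precise, and checking that it is exactly what hypothesis~2 intends, is the delicate step. The remaining steps are routine.
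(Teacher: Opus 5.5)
Your diagnosis of the literal hypotheses is correct, and it also applies to the paper's own proof. The paper proves the lower bound exactly as in your first pass: some cover element contains $x_0$, so it meets two circles near $x_0$, so hypothesis~2 says it is not $(\gamma,\delta)$-contractive. That argument never uses that the cover has only $w-1$ members. It therefore excludes \emph{every} feasible open cover, including the cover by open neighborhoods of the $C_j$ on which the paper's own upper bound relies. For $w\ge 2$ the paper's proof is valid only vacuously, because hypotheses~1 and~2 contradict each other. You could record this observation directly as a complete proof of the statement as written.

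Your repaired argument is a genuinely different route, and it differs in the step that matters. The paper uses a single witness at the basepoint. You place $w$ witnesses $p_j$ away from $x_0$ and use your reformulated hypothesis~2 to make the assignment $j\mapsto k$ injective. This kind of counting is what can yield $K\ge w$ from hypotheses that are actually consistent. The paper's approach is shorter but proves nothing beyond the contradiction. Yours costs a modified hypothesis~2, and hence a corrected rather than literal statement, but it gives a non-vacuous theorem that matches the paper's stated intent that each loop forces its own predictor. Under your reading the proof is correct.

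Two small points to tighten:
\begin{itemize}
\item Take $r$ smaller than half the length of every circle. This ensures the $p_j$ exist and the open arcs around them reach distance at least $r$ from $x_0$.
\item The upper bound needs no choice of $V_j$. Whatever open contractive neighborhood of $C_j$ hypothesis~1 supplies already works, and your hypothesis~2 automatically confines it to within radius $r$ on the other circles. You could not shrink it to $C_j\cup B(x_0,r')$ yourself in any case, because the conditional-risk part of $(\gamma,\delta)$-contractivity is not inherited by subsets.
\end{itemize}
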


\noindent
\begin{remark}
Theorem~\ref{thm:strict-hierarchy} shows that width can coincide exactly with a topological invariant of the underlying problem manifold. Each independent loop that forces its own local predictor contributes one unit of width (Fig. \ref{fig:bouquet}). Note that such topological complexity (\(\beta_1(M)\)) is blind to the graph Laplacian operator in spectral clustering \citep{ng2001spectral} because Laplacian can only read off the zeroth Betti number (the connected component).
\end{remark}

\figBouquet

\paragraph{The VC-width separation theorem}
Next, we show how width is related to the classical complexity measure of VC dimension \citep{vapnik1971uniform}. Width can be arbitrarily large while VC dimension remains bounded, and VC dimension can be arbitrarily large while width remains one; therefore, width and VC dimension are fully orthogonal complexity measures.

\begin{theorem}[VC-width separation]
\label{thm:vc-width-separation}
Width and VC dimension are orthogonal complexity measures:
(1) There exists a sequence of problems \(\{P_w\}\) with \(w(P_w)\to\infty\) while \(\operatorname{VC}(\mathcal G)=O(1)\);
(2) There exists a sequence of classes \(\{\mathcal G_d\}\) and problems \(\{Q_d\}\) with \(\operatorname{VC}(\mathcal G_d)\to\infty\) while \(w(Q_d)=1\).
\end{theorem}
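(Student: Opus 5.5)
Both parts are proved by explicit construction, and the two constructions are almost dual. For (1) we fix a predictor class of bounded VC dimension and let the problem force many cells. For (2) we let the class grow rich while keeping one globally feasible predictor. Because the theorem speaks of \(\operatorname{VC}(\mathcal G)\) for a real-valued class, we read it as the VC dimension of the thresholded class \(\{\mathbf 1[g(x)\ge t]\}\), i.e.\ \(\mathrm{Pdim}(\mathcal G)\), and we use the same reading in both parts.

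\textbf{Part (1).} Take \(\mathcal G=\{x\mapsto a\}\cup\{x\mapsto b\}\) as two constants \(a\neq b\), or the class of all constant functions on \(\mathbb R\). Its pseudo-dimension is \(1\), so \(\operatorname{VC}(\mathcal G)=O(1)\) for every \(w\). For each \(w\), let \(X=[0,1]\) and choose \(w\) disjoint closed intervals \(I_1,\dots,I_w\) separated by gaps. On \(I_j\) set \(Y=j\) deterministically, and place \(P_X\) uniform on \(\bigcup_j I_j\).

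Use the loss \(\ell(\hat y,y)=\min(1,|\hat y-y|)\), and choose any \(\delta<1/2\) and \(\gamma\in(0,1)\). Every constant has Lipschitz modulus \(0\le\gamma\), so contraction holds on every set. Risk then does all the work. Suppose an open set \(U\) meets \(I_a\) and \(I_b\) with \(a\neq b\) in sets of positive mass. Any constant \(c\) satisfies \(|c-a|+|c-b|\ge 1\), so \(c\) incurs loss at least \(1/2\) on one of the two pieces. That alone does not bound the conditional risk by \(1/2\) when the masses are unequal, so we replace the risk condition by a pointwise (almost-sure) margin. The cleanest fix is to use the \(0\)-\(1\) loss with \(\delta<\min_j\) (mass fraction). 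Simpler still, we modify the construction so that each open set meeting two intervals is forced to contain a full neighbourhood of a gap point. Alternatively we use \(\delta\) smaller than the ratio of the masses any such \(U\) can carry.

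We expect this step to be the main obstacle: making the risk condition rule out every mixed open cell, not merely the balanced ones. What we would try first is a concrete construction that makes this immediate. Let the \(I_j\) be separated by gaps of \(P_X\)-measure zero. Note that a cover of \(X\) must still cover the gaps, but the gaps carry no mass and so impose no constraint. Then show that any \((\gamma,\delta)\)-contractive \(U\) carries mass on at most one \(I_j\). The way to get this is a \(\delta\) that fails for every nondegenerate mixture: work with the conditional loss on each piece, that is, require the bound for every positive-mass subset. If that strengthening is not available, we fall back to a \(0\)-\(1\) loss on integer labels with \(\delta\) below the smallest achievable minority fraction. The intervals \(I_j\) themselves, slightly enlarged, give a feasible cover of size \(w\) with the gaps absorbed. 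Hence \(w(P_w)=w\to\infty\).

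\textbf{Part (2).} Let \(X=[0,1]\) and let \(Q_d\) have the constant target \(Y=0\). Take \(\mathcal G_d\) to be the polynomials of degree at most \(d\), with coefficients bounded so that the Lipschitz constant stays finite. This class contains the zero function, and its pseudo-dimension is \(d+1\to\infty\) by the standard Vandermonde shattering argument. The zero predictor has \(\kappa=0\le\gamma\) and risk \(0\le\delta\) on all of \(X\). By Proposition~\ref{prop:width-basic}(3), this gives \(w(Q_d)=1\) for every \(d\), and the two parts together establish orthogonality.
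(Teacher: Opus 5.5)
Your Part (2) is correct and is essentially the paper's argument. The paper keeps polynomials of degree at most \(d\) and uses the contractive target \(f^\star(x)=\lambda x\) with \(\lambda<\gamma\); your zero target is the same idea.

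The gap is the lower bound in Part (1). Everything rests on the claim that a \((\gamma,\delta)\)-contractive \(U\) carries mass on at most one \(I_j\). This is false for \emph{every} \(\delta>0\), so no choice of \(\delta\) can deliver it. Open sets need not be connected. Take \(U=I_a^{\epsilon}\cup J\), where \(I_a^{\epsilon}\) is a thin open neighbourhood of \(I_a\) and \(J\) is a short open interval inside \(I_b\). This \(U\) is feasible with the constant \(a\), since its conditional risk is at most \(P(J)/(P(I_a)+P(J))\to 0\) as \(J\) shrinks. Consequently none of your repairs works:
\begin{itemize}
\item There is no positive ``smallest achievable minority fraction''.
\item Forcing a cell to contain a neighbourhood of a gap point constrains nothing. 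The gap has no mass, and even a connected cell straddling a boundary can put arbitrarily little mass on one side.
\item Requiring the risk bound on every positive-mass subset changes Definition~\ref{def:contractive-set}.
\item The \(0\)-\(1\) fallback fails for the same reason, and it also violates the standing assumption that \(\ell\) is Lipschitz in its first argument.
\end{itemize}
Your stated conclusion \(w(P_w)=w\) is in fact false at fixed \(\delta\). Once \(\delta>1/w\), consider the \(w-1\) cells \(I_k^{\epsilon}\cup V_k\), each with constant \(k\), where the \(V_k\) are short open intervals of mass just over \(1/(w(w-1))\) that jointly cover \(I_w\). Each has risk just over \(1/w\), so these cells already form a feasible cover.

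The missing idea is to replace per-cell purity by a global counting argument. Take \(P(I_j)=1/w\) and a feasible cover \(U_1,\dots,U_K\) with constants \(c_k\).
\begin{itemize}
\item Each \(c_k\) lies within \(1/2\) of at most one label \(j_k\); set \(A_k=0\) if there is none. Every other label costs loss at least \(1/2\).
\item Write \(A_k=P(U_k\cap I_{j_k})\le 1/w\) and \(B_k=P(U_k)-A_k\). Feasibility gives \(B_k/(2(A_k+B_k))\le\delta\), i.e.\ \(B_k\le \frac{2\delta}{1-2\delta}\cdot\frac1w\) for \(\delta<1/2\).
\item At least \(w-K\) labels are dominant in no cell. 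Because the \(U_k\) cover \(X\), their intervals, of total mass at least \((w-K)/w\), are covered entirely by impurity.
\item Hence \((w-K)/w\le\sum_k B_k\le \frac{2\delta K}{(1-2\delta)w}\), which rearranges to \(K\ge(1-2\delta)w\).
\end{itemize}
Thus \((1-2\delta)w\le w(P_w)\le w\). This diverges for any fixed \(\delta<1/2\), and it equals \(w\) exactly when \(\delta<1/(2w)\).

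With this repair your Part (1) is a valid route that differs from the paper's. The paper posits bouquet problems satisfying the hypotheses of Theorem~\ref{thm:strict-hierarchy} together with a fixed bounded-VC class such as affine thresholds. The no-mixed-cell property you were trying to derive enters there as hypothesis (2) of that theorem, so it is assumed rather than proved. Your repaired construction is fully explicit and derives the obstruction from the risk condition. The price is a lower bound of only \((1-2\delta)w\) at fixed \(\delta\).
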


\noindent
\emph{Corollary: scaling cannot repair structural deficiency.}
Increasing the richness of \(\mathcal G\) increases VC dimension but does not affect width. In other words, scaling improves the funnel continuously \citep{snell2024scaling}; touching the trap requires crossing a discrete phase transition. Since no continuous operation can produce a discrete jump, the phase transition becomes a mathematical inevitability for any system that addresses structural complexity. 

\paragraph{The phase transition at the true width.}
The orthogonality results above establish that width \emph{exists} as an
independent complexity axis. We now show that it has sharp operational
consequences: a discrete phase transition divides the learner's behavior
into two qualitatively different regimes. When the structural budget
$K$ meets or exceeds the true width $w$, each basin can be assigned its
own cell and standard metric learning takes over - more data yields
better predictions at the usual $O(1/\sqrt{n})$ rate. When $K < w$, the
pigeonhole principle forces at least one cell to mix points from two
incompatible basins, and no within-cell predictor, however
powerful, can serve both basins well. The resulting error floor is not a
finite-sample artifact or an optimization failure; it is a
population-level obstruction that persists for all $n$. To state this
precisely, we need a mild non-degeneracy condition ensuring that mixing
two basins incurs a detectable cost.

\begin{assumption}[Structural non-degeneracy]
\label{ass:nondegeneracy}
There exists \(\eta(w,K)>0\) for every \(K<w\) such that if a \(K\)-cell cover forces at least one cell to mix points from two distinct true basins, then every predictor on that cell incurs excess risk at least \(\eta(w,K)\).
\end{assumption}

\begin{theorem}[Phase transition at \(K=w\)]
\label{thm:phase-transition}
Let \(w=w(P;\gamma,\delta)\) and let \(K\) be the structural budget.
(1) If \(K\ge w\), the generalization gap obeys the metric-rate bound
$\mathrm{Gap}(n;K) \le C\sqrt{\frac{K\log N(\mathcal Z_c,\varepsilon)+\log(K/\delta')}{n}}$.
(2) If \(K<w\), under Assumption~\ref{ass:nondegeneracy},
$\mathrm{Gap}(n;K)\ge \eta(w,K)>0$
for all \(n\).
\end{theorem}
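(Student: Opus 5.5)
The two parts need different tools: part (1) is a union-bound uniform-convergence argument over $K$ cells, and part (2) is a pigeonhole argument at the population level, closed off by Assumption~\ref{ass:nondegeneracy}. Throughout, I read $\mathrm{Gap}(n;K)$ as the excess risk of the best $K$-cell learner over the Bayes-optimal structured predictor. Part (2) therefore has to bound an approximation term from below, and this interpretation is what lets it hold uniformly in $n$.

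For part (1), fix $K\ge w$. By Definition~\ref{def:width} there is an open cover $\{U_1,\dots,U_w\}$ with each $U_k$ $(\gamma,\delta)$-contractive. I pad it with $K-w$ empty or duplicate cells and refine it to a measurable partition $\Pi_K$, assigning each point to its first covering set. Contractivity and the risk bound are both inherited by subsets (the Lipschitz modulus is monotone under restriction), so there is no approximation penalty from the structure.

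Conditionally on the partition, I apply standard uniform convergence on each cell. Let $\mathcal Z_c=\{(x,y)\mapsto \ell(g(x),y)\mathbf 1[x\in U_c]: g\in\mathcal G\}$ be the cell-$c$ loss class. Since $\ell$ is bounded in $[0,1]$ and $L$-Lipschitz, an $\varepsilon$-net argument with Hoeffding's inequality gives, for each $c$, deviation at most $\sqrt{(\log N(\mathcal Z_c,\varepsilon)+\log(1/\delta_c))/n}$ plus an $O(\varepsilon)$ discretization term. I take a union bound over the $K$ cells with $\delta_c=\delta'/K$ and sum the cell-wise deviations, weighted by cell masses. Cauchy–Schwarz converts the sum of square roots into $\sqrt{K\sum_c(\cdot)}$, or alternatively the joint net over all cells has log-cardinality $K\log N$. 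Either route gives the stated $C\sqrt{(K\log N(\mathcal Z_c,\varepsilon)+\log(K/\delta'))/n}$ after absorbing constants and $\varepsilon$, taking $N(\mathcal Z_c,\varepsilon)$ to be the largest over $c$. This step is routine.

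For part (2), let $K<w$ and take any $K$-cell cover or partition the learner might use, including data-dependent ones. Suppose, for contradiction, that no cell mixes points from two distinct true basins. Then each cell lies inside a single basin's region and is served by that basin's contractive predictor. The cells would then form a $K$-cell $(\gamma,\delta)$-contractive cover of $X$, contradicting the minimality of $w$. So some cell mixes two basins; this is the pigeonhole step.

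By Assumption~\ref{ass:nondegeneracy}, every predictor on that cell incurs excess risk at least $\eta(w,K)$. This is a population quantity, independent of the sample, so it is an infimum over all learners and all $n$, and hence $\mathrm{Gap}(n;K)\ge\eta(w,K)$ for every $n$. Any cell-mass weighting can be taken as already folded into $\eta$, as the assumption's phrasing suggests.

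I expect the main obstacle to be the contradiction step, which requires "no mixing implies a valid $K$-cover". To make this rigorous, I would first fix the true basins as the cells of a minimal cover, and argue that a non-mixing cell is contained in one basin and therefore inherits $(\gamma,\delta)$-contractivity. That shows the learner's cells constitute a feasible cover of size $K<w$. A subtlety is that measurable partitions are not open covers. I would handle it either by slightly enlarging cells to open neighborhoods, using compactness and the Lipschitz slack, or by noting that the non-degeneracy assumption is stated for covers and applying it directly.
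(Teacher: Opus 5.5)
Your plan is the same as the paper's proof. For $K\ge w$, the paper conditions on allocating the width-realizing cover to the $K$ slots, applies per-cell uniform convergence, and takes a union bound over cells. For $K<w$, it argues from the minimality of $w$ that some cell must mix two basins, then reads Assumption~\ref{ass:nondegeneracy} directly as an $n$-independent population excess-risk floor, just as you do.

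In part (1), your joint-net route is actually cleaner than the paper's, which assumes balanced sampling $n_c\gtrsim n/K$. On a fixed partition only one cell's predictor is active at each point, so the product net has log-cardinality $K\log N$ and gives the stated form directly. Your Cauchy--Schwarz route, as written, does not work. The indicator inside $\mathcal Z_c$ already weights by cell mass, so summing $K$ Hoeffding deviations of size $\sqrt{(\log N+\log(K/\delta'))/n}$ yields a bound of order $\sqrt K$ larger than claimed. That route would need conditional per-cell classes with $n_c\approx P(U_c)n$, or a variance-sensitive bound, so rely on the joint net.

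The step that fails is your claim that contractivity and the risk bound are both inherited by subsets. Only the Lipschitz modulus is monotone under restriction. The bound $\mathbb E[\ell(g(X),Y)\mid X\in V]\le\delta$ does not follow from the same bound on a superset $U\supseteq V$, because the average over $U$ can hide a high-loss subregion. This breaks your part (2) contradiction as stated: a non-mixing cell need not be $(\gamma,\delta)$-contractive, so the learner's cells need not form a feasible $K$-cover.

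You can repair this by using the basins instead of the cells. If each cell $V_i$ lies in a single basin $B_{j(i)}$, then $\{B_{j(1)},\dots,B_{j(K)}\}$ consists of at most $K$ open $(\gamma,\delta)$-contractive sets covering $X$. Hence $w\le K$, a contradiction. This also removes your measurable-versus-open worry. You should drop the enlargement idea, since enlarging a cell can destroy both its single-basin property and its risk bound.

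The same false inheritance underlies your ``no approximation penalty'' remark in part (1). It is harmless only if $\mathrm{Gap}$ for $K\ge w$ is read as the uniform generalization gap over the $K$ local classes. The paper makes this explicit: it uses that reading for part (1) and the population excess-risk reading for part (2). Under your single reading throughout (excess risk over the Bayes-optimal structured predictor), part (1) would also need approximation control, and your refinement argument does not supply it.
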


\begin{remark}[Anatomy of the phase transition]
The two parts of the theorem describe qualitatively different regimes
separated by a sharp boundary.
\emph{Part (1)} is reassuring: once the structural budget meets the
true width, the problem reduces to $K$ independent within-cell learning
problems, each governed by standard uniform convergence. The covering
number $N(\mathcal{Z}_c, \varepsilon)$ and the $\sqrt{1/n}$ rate are
familiar from classical SLT where Vapnik's theory takes over in full. The
factor of $K$ reflects the union bound over cells and the splitting of
$n$ samples across $K$ groups. In this regime, more data always helps,
and scaling model capacity within each cell yields standard returns.
\emph{Part (2)} is the surprise: the error floor $\eta(w,K) > 0$ is a
\emph{population-level} quantity which does not depend on $n$, on the
algorithm, or on the predictor class. It depends only on the
relationship between $K$ and $w$. By pigeonhole, at least one of the
$K$ cells must contain points from two distinct basins. Every predictor
on that cell must compromise between the two basins, and the price of
that compromise is at least $\eta(w,K)$. A more powerful predictor
fits the compromise more precisely, but the compromise itself is the
error. This floor is the \emph{trap}: the irreducible cost of
structural under-resolution.
\end{remark}

Three features distinguish Theorem \ref{thm:phase-transition} from familiar learning-theoretic
lower bounds. First, the floor does not vanish with $n$ (it is not a
finite-sample artifact but a permanent obstruction). Second, the floor
does not depend on the complexity of $\mathcal{G}$ (it persists even
for predictor classes of infinite VC dimension). Third, the gap between
$K = w - 1$ and $K = w$ \emph{widens} as $n$ increases, because the
within-cell error (which shrinks with $n$) makes the structural floor
(which does not) proportionally more dominant. The gap widening is the
experimental signature of structural under-resolution and the opposite
of statistical overfitting, where gaps narrow with more data.
The transition is a quantitative form of Ashby's \emph{Law of Requisite
Variety} \citep{ashby1956introduction}: $K \ge w$ is the condition
under which the learner's structural capacity matches the
environment's structural complexity. Below the threshold, no amount
of metric improvement (e.g., more data/parameters/computation) can compensate. Structural
intervention (e.g, increasing $K$ from $w - 1$ to $w$) is needed.

\subsection{Computational Complexity Results}

The preceding results characterize the \emph{architecture} of structural
learning: what the system must be (a variable number of decoupled
structural slots) and when it succeeds or fails (the phase transition at
$K = w$). We now turn to the complementary question: \emph{what must the
system pay?} Two costs must be quantified. First, how large is $w$
itself? The width is defined abstractly as a minimum cover size, but a
useful theory must relate it to measurable geometric and topological
properties of the problem. Second, given that $w$ contexts exist, how
many observations are needed to discover all of them? The first question
is answered by the topology-geometry scaling law, which expresses width
as a function of the problem's Betti numbers (how many independent
topological obstructions exist) and its diameter-to-contraction ratio
(how geometrically spread out each obstruction is). The second is
answered by the structural sample-complexity lower bound, which
establishes that context discovery has an irreducible
$\Omega(w \log w)$ data cost - a coupon-collector price that no
algorithm can avoid \citep{neal2008generalised}. Together, these two results close the loop: the
scaling law tells you how structurally complex the problem is, and the
sample bound tells you how many observations that complexity demands.

\paragraph{Topology-geometry scaling}

The strict hierarchy theorem shows that topological complexity (measured by Betti numbers) drives width. Persistent topological structure in data \citep{carlsson2009topology} can be a source of structural learning difficulty. The next theorem adds geometry to the picture of scaling:

\begin{theorem}[Topology-geometry scaling]
\label{thm:topology-geometry}
Let \(M\) be a compact metric manifold containing \(\beta_1(M)\) independent fundamental cycles. Suppose each such cycle has length at least \(L\), and every \((\gamma,\delta)\)-contractive set has diameter at most \(D_0\). Then we have
$w(P;\gamma,\delta) \ge c\, \beta_1(M)\,\frac{L}{D_0}$
for a universal constant \(c>0\).
\end{theorem}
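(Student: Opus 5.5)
The plan is a counting argument: every fundamental cycle must be covered by many cells, and cells cannot be shared between distinct cycles. Fix a minimal open cover \(\{U_1,\dots,U_w\}\) of \(M\) by \((\gamma,\delta)\)-contractive sets, with \(w=w(P;\gamma,\delta)\). Choose representatives \(C_1,\dots,C_{\beta_1}\) of the \(\beta_1(M)\) independent fundamental cycles, each of length at least \(L\). The first step is a per-cycle covering bound. Parametrize \(C_j\) by arc length. Each \(U_k\) has diameter at most \(D_0\), so its trace \(U_k\cap C_j\) lies in a metric ball of radius \(D_0\). I will assume the ambient metric restricted to the cycle is comparable to arc length, i.e.\ the cycles are quasi-geodesic with a distortion constant absorbed into \(c\). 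Under that assumption, each trace covers arc length at most \(2D_0\). Since the cells cover \(C_j\), this gives
\[
\#\{k : U_k\cap C_j\neq\emptyset\}\ \ge\ \frac{L}{2D_0}.
\]
If the cycle were not quasi-geodesic, a short chord could let one cell swallow a long stretch of arc. I would therefore state this metric assumption explicitly, or replace \(L\) by an intrinsic length measured in \(d_X\).

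The second step turns the per-cycle bounds into a global bound. Summing the per-cycle bounds would overcount cells that meet several cycles. I see two ways to control this. The first is to argue, as in Theorem~\ref{thm:strict-hierarchy}, that a contractive cell cannot straddle two independent cycles away from their shared basepoint, so each cell is charged to \(O(1)\) cycles. The second, more robust route is a bounded-overlap argument. Because \(M\) is compact and the cycles are independent in \(H_1\), they admit disjoint "core arcs" of length \(\ge L/2\). These come from a tubular or graph-like decomposition, e.g.\ by deformation-retracting onto a graph whose \(\beta_1\) loops are separated away from a finite set of branch points. Two such core arcs sit at mutual distance at least some \(r>0\). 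Whenever \(D_0<r\), no cell meets two core arcs. Applying the first step to the core arcs gives at least \(L/(4D_0)\) cells per cycle, with no cell counted twice. Hence \(w\ge \beta_1(M)\,L/(4D_0)\), which yields the claim with \(c=1/4\), adjusted by the distortion constant.

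The third step handles the regime \(D_0\ge r\) and the case \(L/D_0<1\). In that case the claimed bound must still hold. Here I fall back on the hypothesis structure of Theorem~\ref{thm:strict-hierarchy}: distinct loops require distinct predictors, so \(w\ge\beta_1(M)\). That bound exceeds \(c\,\beta_1 L/D_0\) whenever \(L/D_0\le 1/c\), so together with the second step we get the statement in all regimes with a single \(c\).

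The main obstacle is the second step, specifically producing disjoint, well-separated core arcs from merely homologically independent cycles with a constant that does not depend on \(M\). In general manifolds, cycles may be forced to share long segments. An example is a theta graph, where every basis has two cycles sharing an edge. The disjointness then fails, and a constant \(c\) "universal" in \(M\) is only achievable if we count distinct edges rather than cycles. I expect to need an extra hypothesis for this step. The one I would try first is that the cycles are pairwise disjoint away from a basepoint, as in the bouquet. Alternatively, I would interpret \(L\) as the length of the non-shared portion of each cycle, then run the argument above on those portions.
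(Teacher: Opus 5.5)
Your skeleton (a per-cycle arclength count followed by double counting under an overlap bound) is the paper's. Your diagnosis that the overlap step needs an extra hypothesis is also correct; the paper does not derive it either. Its proof adds Assumption~\ref{ass:cycle-overlap}: each set of diameter at most $D_0$ meets at most $\Delta_0$ of the chosen cycle representatives ``in a length-relevant way''. It then sums the per-cycle bounds $\#\{i:U_i\cap C_j\neq\emptyset\}\ge c_0L/D_0$, bounds that sum by $K\Delta_0$, and obtains $c=c_0/\Delta_0$, which is not universal.

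Without some such hypothesis the statement is false. Take $\beta_1$ cycles that share one arc of length about $L$ and differ only by short detours, for instance a thickened graph with one long edge and $\beta_1$ short parallel edges. If the contractive sets are exactly the open sets of diameter at most $D_0$, then $O(L/D_0+1)$ cells cover $M$, which defeats any fixed $c$ once $\beta_1$ is large.

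Your Step~1 caveat is a real point the paper passes over. Its claim that $U_i\cap C_j$ covers only $O(D_0)$ arclength needs exactly your quasi-geodesic comparison, since a long cycle can be crumpled into a small ball.

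The proposal goes wrong where it tries to get the overlap control for free.
\begin{itemize}
\item Route (b) claims that $H_1$-independent cycles admit pairwise disjoint core arcs of length $\ge L/2$ at a fixed separation $r>0$. This is false in general: the shared-arc example above with $\beta_1\ge 3$ is a counterexample, because total length about $L$ cannot hold three disjoint arcs of length $L/2$. The theta graph, by contrast, is not an obstruction, since the shared edge can be divided between its two cycles.
\item Step~3 leaves a hole. The fallback $w\ge\beta_1(M)$ dominates $c\,\beta_1L/D_0$ only when $L/D_0\le 1/c$. So the regime $r\le D_0<L/4$, which is nonempty whenever $r<L/4$, is covered by neither step.
\item Both that fallback and route (a) import hypothesis (2) of Theorem~\ref{thm:strict-hierarchy}, which this theorem does not assume.
\end{itemize}

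The clean repair is to make your core arcs the hypothesis, at scale $D_0$ rather than at a fixed $r$. Assume there are sub-arcs $A_j\subseteq C_j$ of length at least $L/2$ with $d_X(A_i,A_j)>D_0$ for $i\neq j$. Then no cell meets two of the $A_j$. Each $A_j$ needs at least $\max\{1,L/(4D_0)\}$ cells, up to your distortion constant. Hence $w\ge\beta_1(M)\,L/(4D_0)$ in every regime, and Step~3 is unnecessary.

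This is the paper's double count with $\Delta_0=1$, and it is actually more precise than the paper's literal assumption. On the bouquet, every set containing $x_0$ meets all $\beta_1$ circles. A literal bound on intersections therefore forces $\Delta_0=\beta_1$ and cancels the $\beta_1$ factor. The paper's qualifier ``length-relevant'' is doing exactly the job your core arcs do.
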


\begin{remark}[Width as a geometric refinement of topology]
The lower bound $w \ge c\,\beta_1(M)\,L/D_0$ reveals that width is
controlled by two factors that are conceptually independent.
The first factor, $\beta_1(M)$, is purely topological: it counts how
many independent loops in the input manifold force structurally
incompatible prediction regimes. This is the qualitative skeleton of
the structural problem, which tells you how many distinct obstructions
exist, regardless of their size.
The second factor, $L/D_0$, is purely geometric: it measures how many
contractive cells are needed to tile each loop. A loop of length $L$
covered by cells of diameter $D_0$ requires at least $L/D_0$ cells,
just as a rope of length $L$ requires at least $L/D_0$ patches of tape
of width $D_0$ to cover it. This factor is independent of topology because it
applies even to a single loop ($\beta_1 = 1$).
\end{remark}

The product structure of Theorem \ref{thm:topology-geometry} means that the width can be large for two entirely
different reasons. A problem with many short loops ($\beta_1$ large,
$L/D_0$ moderate) has high width because of \emph{topological richness} - many
independent structural conflicts, each individually easy to resolve. A
problem with a single long loop ($\beta_1 = 1$, $L/D_0$ large) has
high width because of \emph{geometric extent} - one structural conflict that
is spread across a large region of the input space and requires many
cells to tile. The two sources of difficulty are multiplicative, not
additive, which explains why the width is not a homological count. The Betti number
$\beta_1$ alone would assign width $1$ to a single loop regardless of
its length, missing the geometric cost entirely. Conversely, the
diameter ratio $L/D_0$ alone would miss the fact that two disjoint
short loops contribute independently to the width. Width captures both
dimensions simultaneously, making it a \emph{geometric refinement of
topological complexity} rather than either one in isolation.

\paragraph{A structural sample-complexity lower bound.}
The topology-geometry scaling law tells us how large $w$ is; we next ask how many
observations are needed to discover all $w$ basins. The answer is
strictly super-linear in $w$, because the learner must not only visit
each basin but also accumulate enough evidence to distinguish the last
basin from the others (a coupon-collector cost that no algorithm can
circumvent). In contrast to the sample complexity lower bound in SLT \citep{devroye2013probabilistic}, we have the following result for StrLT.

\begin{theorem}[Structural sample-complexity lower bound]
\label{thm:sample-lower-bound}
Any algorithm that identifies all \(w\) contractive basins with probability bounded away from zero requires \(\Omega(w\log w)\) samples in the worst case.
\end{theorem}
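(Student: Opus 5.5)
We prove the bound by exhibiting a worst-case family of problems and running a coupon-collector argument on it. The statement is informal, so we first fix the model. The learner receives i.i.d.\ samples $(X_i,Y_i)\sim P$. It \emph{identifies all $w$ contractive basins} if it outputs a cover whose cells each contain observed points from exactly one true basin, with every basin represented. The hard instance has $X$ split into $w$ disjoint regions $B_1,\dots,B_w$, each a $(\gamma,\delta)$-contractive set carrying a distinct local predictor $g_j$. Assumption~\ref{ass:nondegeneracy} guarantees that merging any two regions is infeasible, so $w(P)=w$, in the spirit of Theorem~\ref{thm:strict-hierarchy}. The marginal is uniform: $P(X\in B_j)=1/w$.

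The key step is to show that identification is impossible unless every basin has been observed at least once. Suppose basin $B_j$ receives no samples. Then the data are distributed identically under $P$ and under a modified problem $P'$ in which $B_j$'s local law is replaced by a copy of some other basin's law, or is merged into a neighbour. $P'$ has width $w-1$ and a different set of basins. A standard two-point (Le Cam) argument then shows that on the event ``$B_j$ unobserved'' the algorithm errs on $P$ or on $P'$ with probability at least $1/2$. Averaging over the symmetric family of such perturbations reduces the success probability to at most $\Pr[\text{all basins observed}]+\tfrac12\Pr[\text{some basin unobserved}]$. So success probability bounded away from zero (say above $1-\epsilon_0$ for the chosen instance family, or via a randomized choice of which basin is ``real'') forces $\Pr[\text{all } w \text{ basins hit}]\ge c_0>0$.

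The remaining step is a pure coupon-collector estimate. With $n$ uniform draws over $w$ bins, let $Z$ be the number of empty bins. Then $\mathbb E Z=w(1-1/w)^n\approx we^{-n/w}$. The indicators are negatively associated, so $\mathrm{Var}\,Z\le\mathbb E Z$. If $n\le (1-\epsilon)w\log w$, then $\mathbb E Z\ge w^{\epsilon}/e\to\infty$, and Chebyshev gives $\Pr[Z=0]\le \mathrm{Var}Z/(\mathbb E Z)^2\le 1/\mathbb E Z\to 0$. Hence any $n$ achieving $\Pr[Z=0]\ge c_0$ must satisfy $n\ge c\,w\log w$ for large $w$. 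This is the claimed $\Omega(w\log w)$.

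The main obstacle is the indistinguishability step. We must make sure the perturbed problem $P'$ is a legitimate instance, meaning it satisfies Assumption~\ref{ass:nondegeneracy} and has a genuinely different basin structure, while still agreeing with $P$ on every observed sample. We must also ensure ``bounded away from zero'' is converted correctly: a trivial guesser could succeed with constant probability, so the instance family must be rich enough to rule that out. We would first handle this by randomizing which pairs of regions share a law, uniformly over a large symmetric family. Then an unobserved basin leaves its label uniformly ambiguous among $\ge 2$ options, which caps the conditional success probability at $1/2$, and the coupon-collector bound finishes the argument.
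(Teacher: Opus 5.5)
Your route is the paper's: $w$ basins of mass $1/w$, an unsampled basin cannot be identified, then coupon collecting. Your second-moment step is actually sharper than the paper's. The paper uses only $\mathbb E[T]=wH_w$ and the claim that at $n=cw\log w$ the probability of having seen every basin is bounded away from one. That rules out success probabilities close to one. The statement's ``bounded away from zero'' instead needs $\Pr[Z=0]\to 0$, and your bound $\Pr[Z=0]\le 1/\mathbb E Z\le e\,w^{-\epsilon}$ supplies exactly that.

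The gap is the step linking success to $Z=0$. Your two-point bound $\Pr[\text{success}]\le\Pr[Z=0]+\tfrac12\Pr[Z\ge 1]$ does not force $\Pr[Z=0]\ge c_0$ when the success probability is only some constant below $1/2$. The right-hand side stays close to $1/2$ even as $\Pr[Z=0]\to 0$. Your proposed repair, randomizing so that an unobserved basin caps the conditional success probability at $1/2$, has the same defect. The cap must decay with the \emph{number} of unobserved basins.

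One way to get this: draw the distinguishing feature of each basin independently from two admissible options. The feature should be something the output must name, e.g.\ its local law or attractor. The options must leave the $X$-marginal unchanged and keep the width equal to $w$. Given the sample, the labels of the $Z$ unobserved basins are then still i.i.d.\ uniform, so $\Pr[\text{success}\mid\text{sample}]\le 2^{-Z}$. Your own Chebyshev computation gives $\Pr[Z<\mathbb E Z/2]\le 4/\mathbb E Z$. Hence $\Pr[\text{success}]\le 4/\mathbb E Z+2^{-\mathbb E Z/2}\to 0$ for $n\le(1-\epsilon)w\log w$, and averaging over the prior shows some instance in the family is at least this hard.

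Alternatively, adopt the paper's convention and make ``every basin contains an observed point'' part of what identification means; your definition can already be read that way. Then success implies $Z=0$ directly, the two-point argument and the worry about $P'$ being a legitimate instance both disappear, and $\Pr[\text{success}]\le 1/\mathbb E Z$ closes the proof.
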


\begin{remark}[StrLT vs.\ SLT sample complexity]
The $\Omega(w\log w)$ lower bound is fundamentally different from the
classical statistical lower bound $\Omega(d/\varepsilon^2)$, where $d$
is the VC dimension or pseudo-dimension. The statistical bound measures
the cost of \emph{estimation}: how many samples are needed to learn a
good predictor once the prediction problem is fixed. It decreases
smoothly with tolerance $\varepsilon$ and is driven by the richness of
the function class. The structural bound measures the cost of
\emph{identification}: how many samples are needed to discover that $w$
distinct regimes exist and to observe at least one example from each.
The $\log w$ factor is a coupon-collector penalty, the expected wait
to see the last of $w$ equally likely regimes, and has no analog in
standard VC theory. Critically, the two costs are additive: the total
sample complexity is
$n_{\mathrm{total}} = \underbrace{\Omega(w\log w)}_{\text{identification}}
+ \underbrace{w \cdot O(\mathrm{Pdim}(\mathcal G)/\varepsilon^2)}_{\text{estimation}}$,
with no cross-term. A learner that has identified all $w$ basins still
faces the full statistical cost of fitting a predictor within each one;
a learner that has fitted a perfect predictor within $w-1$ basins has
made zero progress toward discovering the $w$-th. The two costs live
on orthogonal axes and must be paid independently.
\end{remark}

\section{Layer 1: Width Estimation via Urysohn Machine}
\label{sec:width-estimation}

Section~\ref{sec:width-theory} established width as the structural complexity parameter of learning. The main results there were population-level statements: width is defined through a contractive cover of the underlying problem, and the phase transition at \(K=w\) describes what happens if the learner allocates too few cells. The next question is \emph{algorithmic} and \emph{statistical}: given only finitely many samples, how can the width be estimated from data? We tackle this question in three steps: 1) Urysohn Machine construction, 2) Width Estimation via Contractive-Similarity (CS) Operator, and 3) Bilateral spectral clustering via CS Laplacian.

\subsection{Urysohn Machine Construction}
The width estimation problem has a natural computational interpretation through the Urysohn Machine (\textsf{UM}), a model of computation grounded in metric topology rather than symbolic manipulation \citep{sipser_introduction_1996}. The \textsf{UM} operates on a \emph{Metric Library}, a discrete metric space serving simultaneously as program and workspace, through a stack-based architecture whose fundamental unit is the \emph{Urysohn Triple}: a triple \(\tau = (\Sigma, \Pi, f)\) consisting of a support region \(\Sigma\) (where the triple is defined), a target partition \(\Pi\) (which concepts it discriminates), and a classifier \(f\) (mapping points to partition labels). Each Urysohn Triple instantiates Urysohn's Lemma \citep{urysohn_zum_1925} locally: it provides a concrete separating function for a specific partition within its support.
The correspondence with StrLT's width-realizing cover is direct. Each cell \(U_k\) in a width-realizing partition \(\Pi_w = \{U_1,\dots,U_w\}\) corresponds to a Urysohn Triple \(\tau_k = (U_k, \{B_j \cap U_k\}, g_k)\), where the support is the cell, the target partition is the restriction of the basin structure to that cell, and the classifier is the per-cell predictor. Width \(w\) is therefore the minimum number of Urysohn Triples needed to cover the input space such that each triple's classifier is \((\gamma,\delta)\)-feasible. The width estimation problem reduces to: \emph{construct the minimal Metric Library that resolves all structural boundaries in the data}.


\begin{definition}[Decision Boundary]\label{def:boundary}
  Given a Urysohn Triple $\tau = (\Sigma, \Pi, f)$, the \emph{decision boundary}
  $\partial\tau$ is the set of points in~$\Sigma$ at which~$f$ is discontinuous, or
  equivalently, the set of points whose every neighborhood contains points mapped to
  distinct labels. When~$\Sigma$ is embedded in a metric space, $\partial\tau$ inherits a
  natural measure of length (or more generally, $(n{-}1)$-dimensional Hausdorff measure).
\end{definition}

\begin{definition}[Decision-boundary Measure]\label{def:width}
  The \emph{decision-boundary measure} of a triple~$\tau$, denoted $\Width(\tau)$, is the total
  length (or Hausdorff measure) of its decision boundary~$\partial\tau$. For a collection
  of triples $T = \{\tau_1, \ldots, \tau_k\}$, the \emph{total measure of decision-boundary} is
  $\Width(T) = \sum_{i} \Width(\tau_i)$.
\end{definition}

Decision-boundary measure captures the intuition that some classification problems are intrinsically
harder than others, independent of the computational model used to solve them. A circle
with circumference~$2\pi r$ has a decision-boundary measure of~$2\pi r$. An Archimedean spiral with~$n$
turns has a decision-boundary measure that grows quadratically in~$n$. The spiral is intrinsically harder, not
because it requires a more clever program, but because its decision boundary is
longer (there is more topological structure to resolve).

\begin{theorem}[Amortized Separation Theorem]\label{thm:amortized}
  Let $\tau = (\Sigma, \Pi, f)$ be a Urysohn Triple with decision boundary measure of  $\Width(\tau)$. Let $\{B_1, \ldots, B_k\}$ be a collection of~$k$ basis
  functions (triples with simple supports) used to approximate~$f$ to accuracy~$\eps$
  on~$\Sigma$. Then $k \;\geq\; \frac{\Width(\tau)}{C \cdot \eps}\,$,
  where~$C$ is a constant depending only on the geometry of the ambient space and the
  Lipschitz constants of the basis functions.
\end{theorem}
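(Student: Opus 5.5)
The plan is a covering/charging argument: show that each basis function can \emph{account for} at most $C\eps$ units of decision-boundary measure, so that $k$ of them together account for at most $kC\eps$. This must be at least $\Width(\tau)$, which gives $k\ge \Width(\tau)/(C\eps)$.

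To make this precise, I would first fix the interpretation the statement leaves implicit.
\begin{itemize}
\item Each basis triple $B_i$ has a support $S_i$ and an $L_i$-Lipschitz classifier $f_i$; set $L_{\max}=\max_i L_i$.
\item The approximant is the routed combination of the $B_i$, and ``accuracy $\eps$'' means every point of $\Sigma$ at distance more than $\eps$ from $\partial\tau$ is assigned the correct label.
\end{itemize}

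\textbf{Step 1: every boundary point is resolved inside some support.} Take $x\in\partial\tau$. By Definition~\ref{def:boundary}, every neighborhood of $x$ contains points with distinct labels. Pick $y,y'$ within distance $2\eps$ of $x$ whose labels differ, and which lie just outside the $\eps$-tube around $\partial\tau$ on opposite sides. The approximant must label both correctly. Since a single Lipschitz $f_i$ cannot change label over a distance shorter than roughly $1/L_i$, the label change must be produced either by a switch between supports or by the transition region of one $f_i$. In either case $x$ lies within $O(\eps)$ of the \emph{active boundary} of some $B_i$: the level set of $f_i$ at the label threshold, or the relative boundary of $S_i$.

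\textbf{Step 2: bound how much boundary one basis function can cover.} Because the supports are ``simple'' (say, balls or convex sets of diameter at most $O(\eps)$ at resolution $\eps$), the $\eps$-neighbourhood of a single $B_i$'s active boundary meets $\partial\tau$ in a set of $(n{-}1)$-dimensional Hausdorff measure at most $C\eps$. The constant $C$ depends on the ambient dimension, through a packing or isoperimetric constant, and on $L_{\max}$. This is a standard estimate: the Hausdorff measure of a rectifiable set inside a ball of radius $r$ is at most $c_n r^{n-1}$ per sheet, and the Lipschitz bound limits how many sheets one $f_i$ can contribute.

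\textbf{Step 3: sum.} By Step 1, $\partial\tau$ is contained in the union of these $k$ neighbourhoods. By Step 2 and subadditivity of Hausdorff measure,
\[
\Width(\tau)\le \sum_{i=1}^k C\eps = kC\eps,
\]
which rearranges to the claimed bound.

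The main obstacle is Step 2. Without an assumption that each basis support has controlled diameter, or equivalently that each $f_i$ resolves boundary only at scale $\eps$, one basis function could trace a long stretch of boundary, and the bound fails. Scale-invariance already signals the problem: the boundary measure scales like length while $\eps$ is itself a length, so the dimensionless ratio needs a fixed reference scale. My first attempt would read ``simple supports'' as supports of diameter $O(\eps)$, or equivalently use the Lipschitz constant to force the $\eps$-accurate transition region of each $f_i$ into a strip of width about $1/L_{\max}$. I would then absorb the remaining factors into $C$, stating explicitly that $C$ depends on the dimension and on $L_{\max}$, as the theorem allows.
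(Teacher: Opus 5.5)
Your charging skeleton is the paper's proof. The paper also introduces portions $A_i\subseteq\partial\tau$ resolved by the triples $B_i$, uses $\eps$-accuracy to get $\partial\tau\subseteq\bigcup_i A_i$, bounds $\mathcal H^{d-1}(A_i)\le C\eps$, and finishes by subadditivity, as in your Step~3. The difference is that the paper does not derive your Steps~1 and~2. It reads ``every portion of $\partial\tau$ is resolved by some triple'' directly off the meaning of $\eps$-accuracy, and it imposes the per-triple bound as an explicit admissibility hypothesis on the basis family. Your attempt to derive these from ``simple supports'' and Lipschitz constants has a genuine gap.

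\textbf{Step~2 fails.} Support diameter $O(\eps)$ and the Lipschitz constants of the $f_i$ do not control how much of $\partial\tau$ lies in an $O(\eps)$-ball. That amount is a property of $\tau$: the ``sheets'' you count belong to $\partial\tau$, not to $f_i$. For example, on $\Sigma=[0,\ell]\times[-h,h]$ let $\partial\tau$ be the graph of $t\mapsto(\eps/2)\sin(\omega t)$.
\begin{itemize}
\item Any point at distance more than $\eps$ from this curve has $|y|>\eps/2$, so labelling by the sign of $y$ is $\eps$-accurate in your sense.
\item That labelling needs a number of triples independent of $\omega$: one half-plane triple, or $O(\ell h/\eps^2)$ triples of diameter $O(\eps)$.
\item Meanwhile $\Width(\tau)\approx \ell\eps\omega/\pi\to\infty$.
\end{itemize}
So both Step~2 and the conclusion fail under your reading.

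\textbf{The Lipschitz alternative does not help.} A Lipschitz bound controls the \emph{thickness} of a transition strip, not its length, so one linear classifier on a large support resolves an arbitrarily long straight boundary.

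\textbf{Step~1 also needs regularity.} Next to a class region thinner than $2\eps$, no point of that class lies outside the $\eps$-tube, so your $y,y'$ need not exist.

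\textbf{What repairing your route would cost.} You would need regularity of $\partial\tau$ at scale $\eps$, for instance an upper density bound $\mathcal H^{n-1}(\partial\tau\cap B(x,r))\le c\,r^{n-1}$ together with a reach bounded below by a multiple of $\eps$. That makes $C$ depend on $\tau$, which the statement forbids. Even then it gives a per-triple bound of order $\eps^{n-1}$, which has the stated $C\eps$ form only in the plane. The only way to get the theorem as stated is what the paper does: postulate $\mathcal H^{d-1}(A_i)\le C\eps$ and the covering $\partial\tau\subseteq\bigcup_i A_i$ as hypotheses rather than derive them.
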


The theorem has a natural interpretation: complex boundaries require proportionally more
computational resources, and no amount of clever programming can circumvent this lower
bound. This is the formal expression of intrinsic complexity.
A key consequence of the~\UM{} formalism concerns the role of metric contraction in
inference.

\begin{definition}[Class-aware contraction]\label{def:contraction}
  Let $(S,d)$ be a metric space with a classification $f\colon S \to \Pi$. A
  \emph{class-aware contraction} is a metric $d'\colon S \times S \to \R_{\geq 0}$ such
  that:
  \begin{enumerate}
    \item $d'(x,y) \leq \lambda \cdot d(x,y)$ for all $x,y$ with
      $f(x) = f(y)$, where $\lambda < 1$; and
    \item $d'(x,y) \geq d(x,y)$ for all $x,y$ with $f(x) \neq f(y)$.
  \end{enumerate}
\end{definition}

When the metric is contracted in this class-aware fashion, the topology of class regions
is preserved, but the metric budget for expressing that topology is reduced. More importantly,
the geodesic structure reshapes: paths that remain within a single class become metrically
shorter, while paths that cross boundaries become relatively longer.

\begin{theorem}[Geodesic Inference Bound]\label{thm:geodesic}
  Let $\Lib$ be a Metric Library with stack depth~$d$ and let $\lambda_i$ be the
  contraction factor at level~$i$. If the target classification has decision boundary measure of~$\Width$, then the inference path length under the contracted metric is at most
    $\sum_{i=1}^{d} \lambda_i \cdot \frac{\Width}{d}\,$,
  which for uniform contraction $\lambda_i = \lambda < 1$ yields total path length
  $\lambda \cdot \Width < \Width$.
\end{theorem}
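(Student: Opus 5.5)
The plan is to model inference in the Metric Library $\Lib$ as a path that, level by level, traverses the part of the decision boundary assigned to that level. The bound then follows from an allocation step (splitting the boundary across levels) and a contraction step (scaling each level's path by its factor).

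\textbf{Step 1: allocation.} I would fix the interpretation that a stack of depth $d$ resolves the target classification hierarchically. The Urysohn Triples at level $i$ are responsible for a portion $\partial_i$ of the total decision boundary, so that $\sum_i \Width(\partial_i)=\Width$ by additivity of Hausdorff measure over the triples (Definition~\ref{def:width}). I would also assume a balanced library, $\Width(\partial_i)=\Width/d$. If balance is not assumed, I would instead bound by $\max_i\lambda_i\cdot\Width$, and the balanced case recovers the stated sum.

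\textbf{Step 2: uncontracted path length at each level.} I would argue that the inference path at level $i$, measured in the original metric $d$, has length at most $\Width(\partial_i)$. The reasoning is that inference need only follow, or cross, the boundary pieces it must resolve; within-class segments can be collapsed because $f$ is constant there. This is the step I expect to be the main obstacle, since ``inference path'' is not formally defined earlier in the paper. I would try to define it as a concatenation of geodesic segments in $\Sigma$ whose total $d$-length is dominated by the $(n{-}1)$-measure of the boundary pieces visited. This is the same kind of covering/length comparison as in Theorem~\ref{thm:amortized}, run in the upper-bound direction.

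\textbf{Step 3: contraction.} At level $i$ the metric is replaced by a class-aware contraction $d_i'$ with factor $\lambda_i$ (Definition~\ref{def:contraction}). The path in Step 2 can be routed through within-class segments, where $d_i'\le\lambda_i d$, so its length under $d_i'$ is at most $\lambda_i\,\Width/d$. To justify this routing I would approximate each boundary-tracing segment by a path lying on the same-class side, using the fact that boundary points are limits of same-label points (Definition~\ref{def:boundary}). I would then pass to the limit by lower semicontinuity of length.

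\textbf{Step 4: summation.} Summing over the $d$ levels gives $\sum_{i=1}^d\lambda_i\,\Width/d$. Setting $\lambda_i=\lambda$ yields $\lambda\Width$, and $\lambda<1$ gives the strict inequality $\lambda\Width<\Width$ whenever $\Width>0$. If $\Width=0$ the bound degenerates to equality, so the strict claim requires $\Width>0$ as an implicit hypothesis.
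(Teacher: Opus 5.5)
Your skeleton is the paper's proof. The paper writes $\Width=\sum_{i=1}^d L_i$, with $L_i$ the boundary structure handled at level $i$, imposes the equipartition $L_i=\Width/d$, bounds the contracted contribution of level $i$ by $\lambda_i L_i$, and sums. Without balance it keeps $\sum_i\lambda_i L_i$, which is sharper than your $\max_i\lambda_i\,\Width$. The paper does not derive your Steps~2--3, however. Because ``inference path'' is never defined, it simply posits the per-level bound $\lambda_i L_i$, and its only justification is that class-aware contraction shrinks within-class geodesic distances by $\lambda_i$. Your remark that the strict inequality $\lambda\Width<\Width$ needs $\Width>0$ is a correct refinement that the paper omits.

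The derivation you sketch for Steps~2--3 would not go through, so those steps must be stated as hypotheses.

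\emph{The limit argument in Step~3.} Lower semicontinuity gives only
$L_{d'}(\gamma)\le\liminf_k L_{d'}(\gamma_k)\le\lambda_i\liminf_k L_d(\gamma_k)$.
To finish you need $\liminf_k L_d(\gamma_k)\le L_d(\gamma)$, which is the reverse of what semicontinuity provides. Definition~\ref{def:boundary} supplies no same-side approximants of controlled length.

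\emph{No upper bound across classes.} Definition~\ref{def:contraction} places no upper bound on $d'$ between different classes. On a bounded $S$, taking $d'=\lambda d$ within classes and $d'\equiv\operatorname{diam}(S)$ across classes is an admissible class-aware contraction. Under this $d'$, either $\gamma$ already lies in one class, so no approximation is needed, or the approximants do not converge to $\gamma$ in $d'$ at all. Worse, any segment of your Step~2 path that crosses the boundary has $d'$-length at least $\operatorname{diam}(S)$, which can exceed $\lambda\Width$.

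\emph{Dimension mismatch in Step~2.} Step~2 dominates a one-dimensional length by an $(n{-}1)$-dimensional measure. This fails for $n\ge3$: a long, thin boundary strip has small area but a long traversal.

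The consistent fix is the paper's. Assume that the level-$i$ contribution to the inference path consists of within-class geodesic pieces of total $d$-length at most $L_i$. Its $d'$-length is then at most $\lambda_i L_i$ directly from Definition~\ref{def:contraction}, and your Steps~1 and~4 finish the argument.
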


The~\UM{}'s stack architecture provides the discrete realization of this principle. Each
level of the stack represents a different scale of contraction: the bottom triple operates
at maximum contraction (coarsest classification), and the top triple operates at minimum
contraction (finest classification). The total inference path length is bounded by the sum
$\sum_{i=1}^{d} \lambda_i \cdot \frac{\Width}{d}\,$, which is fundamentally shorter than ambient-space search through
the full boundary structure of length~$\Width$.

\begin{remark}[Anisotropic contraction]\label{rem:anisotropic}
The bound in \Cref{thm:geodesic} assumes isotropic contraction within each class
region. Anisotropic contractions, contracting strongly parallel to the decision
boundary and weakly perpendicular to it, can yield strictly better bounds. Contracting along the boundary shortens the boundary (reducing $\Width$) without reducing the separation between classes. We conjecture that the optimal contraction for a given classification problem is determined by the principal curvatures of the decision boundary.
\end{remark}

\subsection{Width Estimation via Contractive-Similarity (CS) Operator}

\paragraph{The width estimation problem}

With the computational framing of \UM in place, we formalize the width estimation problem next. Let
$\{(x_i,y_i)\}_{i=1}^n \overset{\mathrm{i.i.d.}}{\sim} P$
be an observed sample from the problem distribution on \(X\times Y\). The width estimation problem is to construct an estimator
$\widehat w_n$
such that
$\Pr(\widehat w_n = w(P;\gamma,\delta)) \to 1
\quad\text{as }n\to\infty$.
At a conceptual level, the problem decomposes into three subproblems.
First, for a given predictor \(G\), we need to determine how many \((\gamma,\delta)\)-contractive cells are supported by the joint geometry of \(X\) and the output structure induced by \(G\). This is the problem of \emph{spectral structural detection}.
Second, we need to optimize over predictors in order to approximate the predictor that realizes the smallest feasible cover. This is a uniform convergence problem over a predictor class.
Third, having computed empirical structural risks at multiple values of \(K\), we need to select the correct \(K\) (the structural model-selection problem).
The rest of this section treats these in order. The first two culminate in a sample-based width estimator \(\widehat w_n(G)\) for fixed \(G\), and the third culminates in a globally consistent estimator \(\widehat w_n\).

\paragraph{Why the graph Laplacian fails}

A first attempt is spectral clustering \citep{von2007tutorial}. Given the sample points \(\{x_i\}_{i=1}^n\), build a similarity graph using only spatial proximity, form the graph Laplacian, and count the number of near-zero eigenvalues. If the number of connected or weakly connected components corresponded to the number of contractive cells, this would estimate width. The problem is that width does \emph{not} count connected components. It counts contractive cells, or equivalently, the minimum number of Urysohn Triples. Connectivity and contractivity are fundamentally different notions. In the \textsf{UM} framework, the graph Laplacian builds a Metric Library that ignores the classifier: it uses only the support geometry \(\Sigma\) while discarding the partition \(\Pi\) and the classifier \(f\). A valid Urysohn Triple requires all three components. We formalize our observation into the following proposition.

\begin{proposition}[Laplacian blindness]
\label{prop:laplacian-blindness}
Let \(M\) be a bouquet of \(w\ge 2\) circles equipped with the natural geodesic metric and sampled densely near the common basepoint \(x_0\). Then the standard graph Laplacian on the sample cloud has exactly one near-zero eigenvalue, while the true width is \(w\). Consequently, the plain graph Laplacian is not a consistent estimator of width in general.
\end{proposition}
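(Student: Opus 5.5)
The plan has two halves. First, show that the spatial similarity graph on the sample cloud is connected, so its Laplacian has a one-dimensional kernel and a positive spectral gap. Second, invoke Theorem~\ref{thm:strict-hierarchy} to identify the width with \(w=\beta_1(M)\). Since \(w\ge 2\) and the Laplacian count is \(1\), the estimator is wrong with probability tending to one, which gives inconsistency.

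\textbf{Step 1 (graph model).} I fix the standard construction: an \(\varepsilon\)-neighborhood graph, or a Gaussian kernel graph with bandwidth \(h\), built on the sample \(\{x_i\}_{i=1}^n\) using the geodesic metric of \(M\). I interpret ``sampled densely'' to mean that the sampling density is bounded below along every circle \(C_j\), and in particular near \(x_0\). A covering argument then shows that, with probability tending to one, consecutive samples along each circle lie within distance \(\varepsilon\). Moreover, every circle contains sample points within \(\varepsilon\) of \(x_0\), and any two such points are within \(2\varepsilon\) of each other. So each circle contributes a connected path of samples, and all these paths are joined through the cluster at the basepoint. This makes the graph connected.

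\textbf{Step 2 (spectral count).} For a connected weighted graph, the Laplacian \(L=D-W\) has \(\ker L=\mathrm{span}(\mathbf 1)\). This follows from \(f^\top L f=\tfrac12\sum_{ij}W_{ij}(f_i-f_j)^2\), which vanishes only when \(f\) is constant on components. So \(\lambda_1=0\) is simple and \(\lambda_2>0\).

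To make ``near-zero'' meaningful, I lower-bound \(\lambda_2\) with a Cheeger inequality, \(\lambda_2\ge h_G^2/(2d_{\max})\). The heavily sampled hub at \(x_0\) ensures that any cut separating circles must either cut through a circle or cut off part of the dense hub. Either cut has conductance bounded below, with the bound depending only on the circle length, \(\varepsilon\), and the sampling density. Hence \(\lambda_2\) stays above a fixed threshold, and exactly one eigenvalue is near zero.

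\textbf{Step 3 (width).} Under the hypotheses of Theorem~\ref{thm:strict-hierarchy}, which I take to be part of the setting of this proposition, that theorem gives \(w(P;\gamma,\delta)=w=\beta_1(M)\ge 2\). The estimator ``number of near-zero eigenvalues'' therefore converges to \(1\neq w\), so it is not consistent in general.

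\textbf{Main obstacle.} I expect the hardest step to be the quantitative gap in Step~2. The concern is whether, for finite but large \(n\), a cut along one loop could produce a small \(\lambda_2\), since long loops with sparse edges create bottlenecks. I would first settle for the qualitative statement: the zero eigenvalue has multiplicity exactly one, and the threshold is chosen below \(\lambda_2\). If a uniform statement is needed, I would then appeal to convergence of the graph Laplacian to the Laplace--Beltrami (metric graph) Laplacian on \(M\). Its spectral gap is a fixed positive constant determined by the circle lengths, and it is independent of \(w\)-counting, because \(M\) is connected.
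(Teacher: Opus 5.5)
Your proposal is correct and takes the same route as the paper: the proximity graph is connected through the basepoint, so the Laplacian has a simple zero eigenvalue, and Theorem~\ref{thm:strict-hierarchy} gives width \(w\ge 2\), so counting near-zero eigenvalues is inconsistent. Your Cheeger lower bound on \(\lambda_2\) at fixed \(\varepsilon\) adds rigor the paper omits, since the paper only asserts the ``near-zero'' count perturbatively; note, though, that your Laplace--Beltrami fallback needs an \(\varepsilon^{-2}\) rescaling, because without it all low eigenvalues shrink as \(\varepsilon\to 0\) and the count below a fixed threshold would grow.
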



\paragraph{The Contractive-Similarity (CS) operator}

To overcome the blindness of the graph Laplacian, we propose an empirical operator that distinguishes geometric connectedness from task-compatible connectedness.
The key insight from the \UM is that classification requires a \emph{task-aware} or \emph{class-aware} metric: one that contracts within-class distances while preserving or expanding cross-class distances. The CS operator implements this principle statistically with a bilateral kernel design.

\begin{definition}[Contractive-Similarity kernel]
\label{def:cs-kernel}
Fix a predictor \(G:X\to\mathbb R\), a spatial scale \(r_x>0\), and an output scale \(\sigma_y>0\). The \emph{contractive-similarity} (CS) kernel is
$W_{ij}(G)
=
\mathbf 1[d_X(x_i,x_j)\le r_x]
\exp\!\left(
-\frac{|G(x_i)-G(x_j)|^2}{\sigma_y^2}
\right)$.
\end{definition}
\noindent
The kernel combines geometric locality with predictive compatibility (conceptually similar to bilateral filtering for image processing \citep{tomasi1998bilateral}): two points are treated as structurally similar only if they are both spatially close \emph{and} their predictions agree. Based on the CS kernel, we can generalize the graph Laplacian into CS Laplacian for the task of width estimation. 

\begin{definition}[CS Laplacian]
\label{def:cs-laplacian}
Given the CS kernel \(W(G)\), let \(D(G)_{ii} := \sum_{j=1}^n W_{ij}(G)\) be the degree matrix. The normalized CS Laplacian is
$L_{\mathrm{CS}}(G)
:=
I - D(G)^{-1/2} W(G) D(G)^{-1/2}$.
\end{definition}

\begin{definition}[CS width estimate]
\label{def:cs-width-estimate}
Fix a spectral threshold \(\tau_{\mathrm{spec}}>0\). The CS width estimate associated with predictor \(G\) is
$\widehat w_n(G)
:=
\#\{\lambda_k(L_{\mathrm{CS}}(G)) : \lambda_k \le \tau_{\mathrm{spec}}\}$.
\end{definition}
\noindent

The definitions above turn the abstract notion of a contractive cover into an empirical spectral object.  
If a proposed cell is genuinely contractive, nearby points inside the cell should have similar predictions, so the CS graph remains well-connected.  
If a cell actually mixes two or more structural basins, then points may still be close in \(X\), but their predictions differ across the basin boundary; the CS kernel therefore weakens precisely those edges that a purely geometric graph Laplacian would keep \citep{chung1997spectral}.  
Consequently, the spectrum of \(L_{\mathrm{CS}}(G)\) separates within-basin connectivity from cross-basin incompatibility: near-zero modes count task-compatible components, while the gap after these modes measures how sharply the basins are separated.  
The following theorem makes this mechanism quantitative by showing that predictive disagreement exponentially suppresses inter-cell conductance and thereby amplifies the spectral gap used by the width estimator.

\begin{theorem}[Spectral gap amplification]
\label{thm:spectral-gap-amplification}
Let \(\phi_{\mathrm{in}},\phi_{\mathrm{out}}\) are within-cell and inter-cell conductance quantities, and \(\delta_y\) denote the within-cell prediction variation and \(\Delta_y\) the cross-cell prediction gap. The CS kernel suppresses inter-cell conductance by the factor
$\exp\!\left(
-\frac{\Delta_y^2-\delta_y^2}{\sigma_y^2}
\right)$.
The effective spectral gap satisfies
$g_{\mathrm{eff}}
\ge
c_1\frac{\phi_{\mathrm{in}}^2}{w^4}
-
C_1
\exp\!\left(
-\frac{\Delta_y^2-\delta_y^2}{\sigma_y^2}
\right)\phi_{\mathrm{out}}$,
where  \(c_1,C_1>0\) are universal constants.
\end{theorem}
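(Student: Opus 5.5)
We treat the CS graph as a perturbation of an idealized block-diagonal graph, one block per true contractive cell $U_1,\dots,U_w$, and combine a within-block Cheeger bound with a bound on the off-block perturbation. Write $W(G)=W^{\mathrm{in}}+W^{\mathrm{out}}$, where $W^{\mathrm{in}}$ keeps only edges with both endpoints in the same cell and $W^{\mathrm{out}}$ keeps only the cross-cell edges. The first step is the suppression factor. For a within-cell edge, $|G(x_i)-G(x_j)|\le\delta_y$, so its weight is at least $e^{-\delta_y^2/\sigma_y^2}\mathbf 1[d_X\le r_x]$. For a cross-cell edge, $|G(x_i)-G(x_j)|\ge\Delta_y$, so its weight is at most $e^{-\Delta_y^2/\sigma_y^2}\mathbf 1[d_X\le r_x]$. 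The conductance is a ratio of cut weight to volume, and the volume is dominated by within-cell edges. Normalizing therefore bounds the CS inter-cell conductance by $\phi_{\mathrm{out}}\exp(-(\Delta_y^2-\delta_y^2)/\sigma_y^2)$, where $\phi_{\mathrm{out}}$ is the purely geometric inter-cell conductance and the $e^{\delta_y^2/\sigma_y^2}$ comes from the lower bound on volume. This gives the first claim.

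The second step handles the unperturbed operator. $L^{\mathrm{in}}:=I-(D^{\mathrm{in}})^{-1/2}W^{\mathrm{in}}(D^{\mathrm{in}})^{-1/2}$ is block diagonal and has exactly $w$ zero eigenvalues, with eigenvectors $(D^{\mathrm{in}})^{1/2}\mathbf 1_{U_k}$. Its $(w{+}1)$-st eigenvalue is $\min_k\lambda_2(L_k)$. We lower-bound this by a Cheeger-type argument, and the $w^{-4}$ loss appears here. Following the higher-order Cheeger inequality of Lee--Oveis Gharan--Trevisan, $\lambda_{w+1}\ge c\,\rho(w+1)^2/w^4$, where $\rho(w+1)$ is the $(w{+}1)$-way expansion. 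Any $(w{+}1)$-partition must split some cell, so $\rho(w+1)\ge\phi_{\mathrm{in}}$ (up to the factor $e^{-\delta_y^2/\sigma_y^2}$, which we absorb into $c_1$ or $\phi_{\mathrm{in}}$). This gives $\lambda_{w+1}(L^{\mathrm{in}})\ge c_1\phi_{\mathrm{in}}^2/w^4$. Alternatively, the plain Cheeger bound per block gives $\phi_{\mathrm{in}}^2/2$, which is stronger, so the stated bound follows either way. We would present the higher-order route because it also certifies that the gap is robust to how the cells are chosen.

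The third step is the perturbation, and it is the main obstacle, because $L_{\mathrm{CS}}$ and $L^{\mathrm{in}}$ differ in both $W$ and $D$. We would bound $\|L_{\mathrm{CS}}-L^{\mathrm{in}}\|$ by writing $D=D^{\mathrm{in}}+D^{\mathrm{out}}$, with $\|(D^{\mathrm{in}})^{-1}D^{\mathrm{out}}\|\le\epsilon$, where $\epsilon:=C e^{-(\Delta_y^2-\delta_y^2)/\sigma_y^2}\phi_{\mathrm{out}}$ is the maximal relative cross-degree from the first step. Note this needs a degree-wise (not just cut-wise) version of $\phi_{\mathrm{out}}$, which we take as the definition. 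Then $(D^{1/2}-(D^{\mathrm{in}})^{1/2})$ has relative size $O(\epsilon)$, and the normalized off-block part $D^{-1/2}W^{\mathrm{out}}D^{-1/2}$ has operator norm at most its maximal row sum relative to degree, which is $\le\epsilon$ by Schur's test. So $\|L_{\mathrm{CS}}-L^{\mathrm{in}}\|\le C_1\epsilon$.

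Weyl's inequality then gives $\lambda_k(L_{\mathrm{CS}})\le C_1\epsilon$ for $k\le w$ and $\lambda_{w+1}(L_{\mathrm{CS}})\ge c_1\phi_{\mathrm{in}}^2/w^4-C_1\epsilon$. Defining $g_{\mathrm{eff}}:=\lambda_{w+1}(L_{\mathrm{CS}})$, or equivalently the separation of the $(w{+}1)$-st eigenvalue from the threshold region, yields the stated inequality.
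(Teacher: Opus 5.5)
Your proposal is correct and follows the paper's proof essentially step for step. Both arguments bound the weight ratio to get the suppression factor, bound $\lambda_{w+1}$ of the block-disconnected ideal Laplacian via the higher-order Cheeger inequality, and finish with a Weyl perturbation bound. Your handling of the perturbation fills in what the paper only asserts: you read $\phi_{\mathrm{out}}$ degree-wise, apply Schur's test to the off-block part, and control the degree-renormalization term. Your remark that per-block Cheeger already gives $\phi_{\mathrm{in}}^2/2$ is also right, and shows the $w^{-4}$ factor is unnecessary for the ideal graph.
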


The spectral gap theorem explains why the CS Laplacian can reveal the correct structural basins when the predictor is already close to the basin-wise solution.
However, in the actual \UM ~the predictor and the partition are learned together: an imperfect predictor produces an imperfect CS graph, whose spectral partition then determines the data used to update the predictor.  
Therefore, we also need a stability result showing that this feedback loop is well behaved.  
The next theorem provides exactly this local guarantee.  
It shows that small prediction error perturbs the CS spectrum only mildly; the resulting spectral partition remains close to the true basin partition; and retraining within that nearly correct partition improves the predictor up to a finite-sample error term.  
In this sense, the CS operator does not just detect width once the solution is known; more importantly, it supports a locally contractive refinement dynamics between prediction and partition.

\begin{theorem}[Local stability]
\label{thm:local-stability}
Let \(\eta := \|G-G^\star\|_\infty\). Under bounded outputs, bounded local sample degree, and uniform degree conditioning, we can establish the following stability bounds:
\smallskip\noindent\textbf{(1) Spectral stability.}
\(|\lambda_k(L_{\mathrm{CS}}(G))-\lambda_k(L^\star)| \le (8 B_Y s_x C_{\deg}/\sigma_y^2)\,\eta.\)
\smallskip\noindent\textbf{(2) Partition stability.}
\(d_{\mathrm{part}}(\Pi(G),\Pi^\star) \le (C_{\Pi}/\tau^\star)(8 B_Y s_x C_{\deg})/(g^\star \sigma_y^2)\,\eta.\)
\smallskip\noindent\textbf{(3) Affine contraction.}
\(\eta_{t+1} \le \alpha \eta_t + b_m\), where \(\alpha = L_{\mathrm{learn}} (C_{\Pi}/\tau^\star)(8 B_Y s_x C_{\deg})/(g^\star \sigma_y^2)\) and \(b_m = O(\sqrt{\log(K/\delta')/m})\).
\end{theorem}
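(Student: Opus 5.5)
The plan is to prove the three parts in sequence, each feeding the next: a kernel perturbation bound, then eigenvalue stability via Weyl's inequality, then eigenspace stability via Davis--Kahan, and finally a one-step retraining recursion.

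\textbf{Step 1 (entrywise kernel perturbation).} Write $W^\star:=W(G^\star)$ and $D^\star:=D(G^\star)$. For a fixed spatially admissible pair ($d_X(x_i,x_j)\le r_x$), set $u=G(x_i)-G(x_j)$ and $u^\star=G^\star(x_i)-G^\star(x_j)$, so that $|u-u^\star|\le 2\eta$. The function $t\mapsto e^{-t^2/\sigma_y^2}$ is Lipschitz on $[-2B_Y,2B_Y]$, since $|d/dt|\le 2|t|/\sigma_y^2\le 4B_Y/\sigma_y^2$, using bounded outputs $|G|,|G^\star|\le B_Y$. This gives
\[
|W_{ij}-W^\star_{ij}|\le (8B_Y/\sigma_y^2)\,\eta .
\]
Because each row has at most $s_x$ nonzero spatial entries (bounded local sample degree), the Schur/Gershgorin bound gives $\|W-W^\star\|_{\mathrm{op}}\le \max_i\sum_j|W_{ij}-W^\star_{ij}|\le 8B_Y s_x\eta/\sigma_y^2$. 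The same bound holds for $\|D-D^\star\|$.

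\textbf{Step 2 (normalized Laplacian and Weyl).} Write
\[
L_{\mathrm{CS}}(G)-L^\star=-(D^{-1/2}WD^{-1/2}-D^{\star-1/2}W^\star D^{\star-1/2})
\]
and telescope it into three terms: one carrying $W-W^\star$, and two carrying $D^{-1/2}-D^{\star-1/2}$. The uniform degree conditioning (degrees bounded below by some $d_{\min}>0$) controls $\|D^{-1/2}\|$ and the Lipschitz constant of $d\mapsto d^{-1/2}$. All of these factors are absorbed into the single constant $C_{\deg}$, giving $\|L_{\mathrm{CS}}(G)-L^\star\|\le (8B_Ys_xC_{\deg}/\sigma_y^2)\eta$. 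Weyl's inequality for symmetric matrices then yields part (1) directly. The main bookkeeping risk in the whole proof is this step: one must check that the degree terms do not introduce an extra factor of $s_x$. I would define $C_{\deg}$ to absorb whatever combination of $s_x/d_{\min}$ appears.

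\textbf{Step 3 (partition stability).} Let $g^\star$ be the eigengap of $L^\star$ between $\lambda_w$ and $\lambda_{w+1}$. By Theorem~\ref{thm:spectral-gap-amplification} this gap is bounded below. The Davis--Kahan $\sin\Theta$ theorem gives
\[
\|\sin\Theta(V,V^\star)\|\le \|L_{\mathrm{CS}}(G)-L^\star\|/g^\star .
\]
To pass from eigenspaces to partitions, I would use a rounding lemma for the spectral embedding, in the style of $k$-means rounding. The true embedding places the points of each basin near distinct centers with minimum separation $\tau^\star$. Consequently, the number or fraction of misassigned points is at most $(C_\Pi/\tau^\star)\|\sin\Theta\|$. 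This rounding step is the main obstacle, because it needs a separation assumption on the row-normalized embedding. I would state it as part of the hypothesis that defines $\tau^\star$, and take $C_\Pi$ as the constant produced by the rounding argument.

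\textbf{Step 4 (affine contraction).} Retrain the per-cell predictors on the partition $\Pi(G_t)$ with $m$ samples per cell. Decompose the error as
\[
\eta_{t+1}=\|G_{t+1}-G^\star\|_\infty\le \|G_{t+1}-\bar G_{t+1}\|+\|\bar G_{t+1}-G^\star\|,
\]
where $\bar G_{t+1}$ is the population-optimal predictor fitted on the perturbed partition. The second term is bounded by $L_{\mathrm{learn}}\,d_{\mathrm{part}}(\Pi(G_t),\Pi^\star)$, which is the learner's Lipschitz dependence on the training partition (taken as an assumption). The first term is the usual uniform convergence deviation within each cell, with a union bound over $K$ cells, giving $b_m=O(\sqrt{\log(K/\delta')/m})$. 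Substituting part (2) produces $\eta_{t+1}\le\alpha\eta_t+b_m$ with the stated $\alpha$.
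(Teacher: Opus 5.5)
Your proposal is correct and follows the paper's proof essentially step for step. Both arguments use an entrywise kernel bound, then row-sparsity to get an operator-norm bound with the degree-normalization factors absorbed into $C_{\deg}$, then Weyl, then Davis--Kahan with an assumed Lipschitz rounding constant $C_\Pi/\tau^\star$, and finally an assumed Lipschitz learning map $L_{\mathrm{learn}}$ plus a per-cell statistical error $b_m$. The differences are cosmetic: you differentiate $t\mapsto e^{-t^2/\sigma_y^2}$ directly on the signed difference, whereas the paper bounds $|a_{ij}^2-(a^\star_{ij})^2|\le 8B_Y\eta$ and uses that $u\mapsto e^{-u/\sigma_y^2}$ is $1/\sigma_y^2$-Lipschitz, which gives the same constant. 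You also spell out the $D^{-1/2}WD^{-1/2}$ telescoping that the paper only asserts.
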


The local stability theorem is a pointwise statement: for a fixed predictor \(G\) close to \(G^\star\), the CS spectrum, the induced partition, and the subsequent metric update are all stable.  
However, the \UM ~must compare many candidate experts and partitions for width estimation, so pointwise stability is not enough.  
We need a uniform guarantee that the empirical CS spectrum estimates the population structural width simultaneously over the relevant predictor class \(\mathcal G_K\).  
The next theorem provides this uniform control.  
Its sample complexity separates three costs: the entropy of the predictor class, the geometric occupancy cost needed to populate \(r_x\)-neighborhoods in \(d_X\) dimensions, and the spectral resolution cost governed by the effective gap \(g_{\mathrm{eff}}\).  
Once enough samples are available, the CS width estimate becomes a reliable empirical proxy for the population width throughout the candidate class.

\begin{theorem}[Uniform convergence of width estimation]
\label{thm:uniform-width-estimation}
We can show the following uniform convergence for width estimation:
$\Pr\!\left(\sup_{G\in \mathcal G_K} |\widehat w_n(G)-w_G(P)| \ge 1\right) \le \delta'$
whenever
$n \ge C \cdot \frac{H_{\mathcal G}(\eta_{\mathrm{stab}}/2) + d_X \log(1/r_x) + \log(1/\delta')}{r_x^{d_X}\, g_{\mathrm{eff}}^2}$.
\end{theorem}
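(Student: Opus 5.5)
The plan is to reduce the uniform statement to a finite union bound over a cover of \(\mathcal G_K\), and to use two earlier results. Theorem~\ref{thm:local-stability}(1) lets us pass from an arbitrary \(G\) to a nearby cover element. Theorem~\ref{thm:spectral-gap-amplification} guarantees an eigenvalue gap of size \(g_{\mathrm{eff}}\) around \(\tau_{\mathrm{spec}}\), so the count \(\widehat w_n(G)\) is insensitive to small spectral perturbations. Concretely, I would let \(w_G(P)\) be the number of eigenvalues of the population (continuum-limit) CS Laplacian \(L_{\mathrm{CS}}^{\infty}(G)\) below \(\tau_{\mathrm{spec}}\). I would assume, as part of the gap hypothesis, that no population eigenvalue lies in \((\tau_{\mathrm{spec}}-g_{\mathrm{eff}}/2,\tau_{\mathrm{spec}}+g_{\mathrm{eff}}/2)\). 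Then \(|\widehat w_n(G)-w_G(P)|\ge 1\) forces some empirical eigenvalue to move by at least \(g_{\mathrm{eff}}/2\). By Weyl's inequality, this is implied by the operator-norm event \(\|L_{\mathrm{CS}}(G)-L_{\mathrm{CS}}^{\infty}(G)\|\ge g_{\mathrm{eff}}/2\), suitably comparing the \(n\)-point matrix with the restriction of the integral operator.

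The first step is discretization. Take an \(\eta_{\mathrm{stab}}/2\)-net \(\{G_1,\dots,G_N\}\) of \(\mathcal G_K\) in sup norm, with \(\log N = H_{\mathcal G}(\eta_{\mathrm{stab}}/2)\). Here \(\eta_{\mathrm{stab}}\) is chosen so that \((8B_Ys_xC_{\deg}/\sigma_y^2)\,\eta_{\mathrm{stab}}\le g_{\mathrm{eff}}/8\). Theorem~\ref{thm:local-stability}(1) applies both empirically and in the population. Hence for every \(G\) there is a \(G_j\) whose empirical and population spectra each lie within \(g_{\mathrm{eff}}/8\) of those of \(G\). So it suffices to show that for every net element the empirical-versus-population deviation is at most \(g_{\mathrm{eff}}/4\). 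The leftover margin absorbs the transfer.

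The second step is a fixed-\(G\) concentration bound. It requires two things. First, every degree \(D_{ii}\) must be bounded below and concentrate around its mean. This is an occupancy argument: cover \(X\) by \(O(r_x^{-d_X})\) balls of radius \(r_x/2\). With a density lower bound, each ball receives \(\gtrsim n r_x^{d_X}\) samples with high probability. A union bound over the balls contributes the \(d_X\log(1/r_x)\) term, and the degree normalization contributes the \(r_x^{d_X}\) in the denominator. Second, the normalized kernel matrix must concentrate in operator norm. For this I would use a matrix Bernstein (or U-statistic) bound on \(D^{-1/2}WD^{-1/2}\) against its population counterpart. With entries bounded by \(1/(n r_x^{d_X})\) after normalization, this gives a deviation of order \(\sqrt{\log(1/\delta'')/(n r_x^{d_X})}\). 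Setting this deviation at most \(g_{\mathrm{eff}}/4\), with \(\delta''=\delta'/N\), and adding the occupancy failure probability, yields exactly the stated sample-size condition \(n\gtrsim (H_{\mathcal G}+d_X\log(1/r_x)+\log(1/\delta'))/(r_x^{d_X}g_{\mathrm{eff}}^2)\).

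I expect the main obstacle to be the second step: justifying operator-norm concentration of the \emph{normalized} empirical Laplacian against a population operator that lives on a different space. The natural remedy is to compare through the Nyström/restriction operator, following von Luxburg--Belkin--Bousquet. One handles the random normalization \(D^{-1/2}\) by a perturbation argument that uses the degree lower bound from the occupancy step. That bound also supplies the uniform degree conditioning \(C_{\deg}\) needed to invoke Theorem~\ref{thm:local-stability}. I would try this route first. If the constants do not close, I would fall back to comparing eigenvalues only near \(\tau_{\mathrm{spec}}\) via a Davis--Kahan-style argument restricted to the low spectral subspace.
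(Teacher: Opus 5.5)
Your proposal is correct and follows essentially the same route as the paper's proof. Both take an $\eta_{\mathrm{stab}}/2$-net of $\mathcal G_K$ and use the local stability bound to pass from an arbitrary $G$ to a net element, run an occupancy argument that yields the $d_X\log(1/r_x)/r_x^{d_X}$ term, apply fixed-$G$ matrix concentration at a level of order $g_{\mathrm{eff}}$ to get the $g_{\mathrm{eff}}^{-2}$ factor, take a union bound at confidence $\delta'/|\mathcal N|$, and invoke Weyl's inequality to freeze the eigenvalue count at the threshold; your explicit band assumption around $\tau_{\mathrm{spec}}$ and your treatment of the comparison between the $n$-point matrix and the population operator make precise two points the paper leaves implicit.
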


\subsection{Structural Empirical Risk Minimization (ERM) Consistency}



\paragraph{The Evaluate-Detect-Transform cycle.}
The \textsf{UM}'s stack-based computation proceeds through an iterative cycle that maps directly onto the split-merge dynamics of width estimation:
1) \textbf{Evaluate} (\textsc{Read}): Classify observations under the current top-of-stack triple. Compute per-cell prediction errors and mismatch scores. This is the exploration phase: the current structural hypothesis is tested against data.
2) \textbf{Detect} (\textsc{Read} + CS analysis): Identify cells where the current classifier fails (cells with high impurity, large mismatch scores, or CS spectral width \(\widehat w(U_k) \ge 2\)). These are cells where the Urysohn precondition (single-basin support) is violated. Detection is the structural novelty signal: the data reveals that the current Metric Library has insufficient width.
3) \textbf{Transform} (\textsc{Push} / \textsc{Pop}): Modify the Metric Library. If a cell is detected as structurally mixed, \textsc{Push} a new Urysohn Triple that splits the cell into two subcells with distinct classifiers or the \emph{split} operation. If two adjacent cells are detected as structurally compatible (sharing a dominant basin), \textsc{Pop} their triples and replace them with a single merged triple or the \emph{merge} operation. Each transform step changes the structural budget \(K\) by \(\pm 1\).

The three phases above specify the qualitative control logic of the \UM.  
To make the cycle mathematically stable, the \textbf{Transform} step cannot be triggered by an arbitrary loss spike or a noisy spectral fluctuation.  
A high prediction error may indicate ordinary metric difficulty within the correct basin, while a spurious extra CS eigenmode may arise from finite-sample noise.  
The split and merge rules must use \emph{certified structural tests}: splits require agreement between predictive mismatch and CS-detected multi-basin structure, whereas merges require evidence that two cells are geometrically and metrically compatible. We start with a formal definition of the merge margin. 


\begin{definition}[Merge margin]
\label{def:merge-margin}
For two candidate cells \(U_a,U_b\), define the population merge gap
$\Delta_{\mathrm{merge}}(U_a,U_b)
:=
R^\star(U_a\cup U_b)
-
\frac{\mu(U_a)R^\star(U_a)+\mu(U_b)R^\star(U_b)}
{\mu(U_a)+\mu(U_b)}$,
where \(R^\star(U)\) is the minimum achievable risk on cell \(U\) under a
\(\gamma\)-contractive expert. The merge margin is
$\Gamma_{\min}
:=
\inf_{(a,b)\in\mathcal E_{\mathrm{cand}}}
|\Delta_{\mathrm{merge}}(U_a,U_b)|$.
\end{definition}
\noindent
Here \(\Gamma_{\min}>0\) denotes the minimum merge-test margin: the
smallest population-level gap between a truly compatible pair and an
incompatible pair among all candidate adjacent cell pairs. The following conditions make the certified control logic explicit.

\begin{theorem}[Safe merge condition]
\label{thm:safe-merge}
Let \(\Gamma_{\min}>0\) be the minimum merge-test margin separating
compatible from incompatible candidate cell pairs. If cells \(U_a,U_b\)
satisfy geometric admissibility
$\widehat w_\varepsilon(U_a\cup U_b)=1$,
contractive compatibility
$\widehat\kappa_{ab}\le \bar\gamma<1$,
and empirical merge gap below threshold, then with probability at least
\(1-\delta'\), the merged cell is \((\gamma',\delta')\)-contractive with
$\gamma'=\widehat\kappa_{ab}$.
It suffices to use
$m \ge C\frac{\log(K/\delta')}{\Gamma_{\min}^2}$
samples per candidate merge test.
\end{theorem}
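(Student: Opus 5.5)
The plan is to split the certification into three events, each holding with probability at least \(1-\delta'/3\), and to check that on their intersection the merged cell \(U_a\cup U_b\) meets both parts of Definition~\ref{def:contractive-set}. The \emph{geometric} event handles single-basin support: \(\widehat w_\varepsilon(U_a\cup U_b)=1\) must imply that the population CS width of the union is one. The \emph{contraction} event handles the Lipschitz part: the expert fitted on the union has modulus at most \(\widehat\kappa_{ab}\). The \emph{risk} event handles the loss part: the population merge gap \(\Delta_{\mathrm{merge}}(U_a,U_b)\) is small enough that the merged risk stays below the risk threshold. A union bound over the at most \(K\) candidate pairs in \(\mathcal E_{\mathrm{cand}}\) will produce the \(\log(K/\delta')\) factor.

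\textbf{Geometric event.} Restrict Theorem~\ref{thm:uniform-width-estimation} to the union cell. It gives \(|\widehat w(G)-w_G|<1\) uniformly over \(\mathcal G_K\) with probability at least \(1-\delta'/3\). Since widths are integers, \(\widehat w_\varepsilon=1\) then forces \(w_G(U_a\cup U_b)=1\). Hence the union does not mix two true basins. By the contrapositive of Assumption~\ref{ass:nondegeneracy}, the population merge gap of the pair cannot reach the incompatible level.

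\textbf{Contraction event.} Here I take \(\widehat\kappa_{ab}\) to be the modulus \(\kappa(g_{ab},U_a\cup U_b)\) of the fitted merged expert. This is deterministic once \(g_{ab}\) is fixed, so \(\gamma'=\widehat\kappa_{ab}\le\bar\gamma<1\) is automatic. If \(\widehat\kappa_{ab}\) is instead an empirical estimate on sample pairs, I would add a Lipschitz-extension or covering argument on the compact set \(X\). That argument would be absorbed into the same \(\log(K/\delta')\) budget.

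\textbf{Risk event.} This is the main step. The empirical merge gap \(\widehat\Delta_{ab}\) is a difference of three empirical risks: on \(U_a\cup U_b\), on \(U_a\), and on \(U_b\), with weights. The loss is \([0,1]\)-valued, so Hoeffding applies to each term, and a union bound over the \(K\) candidate pairs gives \(|\widehat\Delta_{ab}-\Delta_{\mathrm{merge}}(U_a,U_b)|\le C\sqrt{\log(K/\delta')/m}\) with probability at least \(1-\delta'/3\).

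Choosing \(m\ge C\log(K/\delta')/\Gamma_{\min}^2\) makes this deviation smaller than \(\Gamma_{\min}/2\). Set the empirical threshold at \(\Gamma_{\min}/2\). Because \(\Gamma_{\min}\) separates compatible from incompatible pairs, the test then accepts only pairs whose population gap lies on the compatible side. For such a pair, \(R^\star(U_a\cup U_b)\) is at most the weighted average of \(R^\star(U_a)\) and \(R^\star(U_b)\) plus the compatible-side gap. Each parent cell is already contractive, so that average is at most the original \(\delta\), and \(\delta'\) absorbs the slack.

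The main obstacle is that the minimizers \(R^\star\) are infima over a class, not fixed predictors, so a pointwise Hoeffding bound does not apply directly. I would first try to control the ERM excess risk using the uniform bound of Theorem~\ref{thm:phase-transition}(1), restricted to the \(\gamma\)-contractive subclass. If that forces the Pdim or entropy term into \(m\), I would state the result conditionally with \(R^\star\) evaluated at the fitted experts. A smaller risk is the overlap between the "risk tolerance" \(\delta'\) and the failure probability \(\delta'\) in the statement; I would track them separately and merge them at the end.
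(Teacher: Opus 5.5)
Your core argument is the paper's: the three certificates are handled separately, and the empirical merge gap is concentrated by Hoeffding to within $\Gamma_{\min}/2$ of $\Delta_{\mathrm{merge}}$. A union bound over at most $K$ candidate merges then gives $m\ge C\log(K/\delta')/\Gamma_{\min}^2$, after which the risk test alone places the pair on the compatible side of the margin.

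You should delete your ``geometric event.'' The paper uses $\widehat w_\varepsilon(U_a\cup U_b)=1$ only as a given certificate and extracts nothing probabilistic from it. Your version costs too much and does not follow, for three reasons.
\begin{enumerate}
\item Invoking Theorem~\ref{thm:uniform-width-estimation} imports its sample condition $n\ge C\,(H_{\mathcal G}(\eta_{\mathrm{stab}}/2)+d_X\log(1/r_x)+\log(1/\delta'))/(r_x^{d_X}g_{\mathrm{eff}}^2)$. The bound $m\ge C\log(K/\delta')/\Gamma_{\min}^2$ does not supply this, so as written you would not prove the stated sufficiency of $m$.
\item CS width one at a particular $G$ does not by itself show that the union avoids mixing true basins.
\item Passing from ``no mixing'' to ``gap not at the incompatible level'' uses the inverse of Assumption~\ref{ass:nondegeneracy}, not its contrapositive. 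The assumption gives mixing $\Rightarrow$ large excess risk, hence small risk $\Rightarrow$ no mixing.
\end{enumerate}
Your risk event already does the classification, so nothing is lost by removing this step.

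On contraction, the paper assumes $\widehat\kappa_{ab}$ uniformly controls the population modulus. It then extends compatible predictors from $U_a,U_b$ to the union by Kirszbraun. You instead read $\widehat\kappa_{ab}$ as the exact modulus of a fitted merged expert $g_{ab}$. That makes $\gamma'\le\bar\gamma$ automatic, but it means the risk bound must be proved for that same $g_{ab}$. The infimum $R^\star(U_a\cup U_b)$ is approached by $\gamma$-contractive experts whose modulus need not be at most $\widehat\kappa_{ab}$.

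Your fallback closes this link: evaluate the merge gap at the fitted experts, held fixed relative to the $m$ test samples. It also settles the infimum issue you flag, and that issue is genuine. The paper's Hoeffding step tacitly assumes the same fixed-expert reading, so your conditional formulation is precisely the setting in which the stated $m$ suffices.
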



\paragraph{Global split-merge convergence}

Theorem \ref{thm:safe-merge} allows us to state the E-D-T cycle's convergence properties more precisely. The split operation is the \textsc{Push} of a new Urysohn Triple that refines a structurally mixed cell; the merge operation is the \textsc{Pop} that consolidates two structurally compatible triples. The Lyapunov function guarantees that the Metric Library converges to the correct width in finitely many push-pop steps. To make this descent argument precise, we measure how far a current cell is from being supported on a single true basin, collect the regularity assumptions under which CS splits and safe merges are reliable, and then state the Lyapunov convergence theorem.

\begin{definition}[Impurity]
\label{def:impurity}
For a single cell U, we define \(I(U) := P(U)-\max_{j\in[w]} P(U\cap B_j)\). Total impurity for a collection of $\{U_k\}$'s becomes: \(I_{\mathrm{tot}}(\Pi) := \sum_{k=1}^K I(U_k)\).
\end{definition}

\begin{assumption}[Split-merge regularity]
\label{ass:split-merge-regularity}
There exists a width-realizing basin family
\(\mathcal B=\{B_1,\dots,B_w\}\) covering \(X\) up to a \(P\)-null boundary set.
Moreover, the following hold: 
(1) there exist \(\varepsilon_0,\eta_0>0\) such that any cell mixing two true basins with impurity at least \(\varepsilon_0\) is detectably non-contractive;
(2) whenever \(I(U)\ge\varepsilon_0\), the CS-based split returns subcells \(U^a,U^b\) with
$I(U^a)+I(U^b)\le (1-\alpha_{\mathrm{split}})I(U)$
for some \(\alpha_{\mathrm{split}}\in(0,1)\);
(3) the merge test of Theorem~\ref{thm:safe-merge} is correct with probability at least \(1-\delta_m\);
and (iv) every basin is visible:
$P(B_j)\ge K_{\max}\varepsilon_0+\varepsilon_0
\quad \text{for all }j\in[w]$.
\end{assumption}

\begin{theorem}[Global split-merge convergence]
\label{thm:global-split-merge}
Under Assumption~\ref{ass:split-merge-regularity}, choose \(a,b>0\) with \(b < a\alpha_{\mathrm{split}}\varepsilon_0\) and \(a\varepsilon_{\mathrm{merge}} < b\). Define \(V(\Pi) := a\,I_{\mathrm{tot}}(\Pi)+b\,K(\Pi)\). Then we have:
1) \(V\) strictly decreases at every split and merge step;
2) Termination in \(\le (aP(X)+bK_{\max})/\Delta_{\min}\) steps;
3) The terminal partition has \(K=w\), is \(\varepsilon_0\)-pure, and has no two cells sharing a dominant basin.
\end{theorem}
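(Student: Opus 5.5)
The plan is a standard Lyapunov descent argument in three parts: per-step decrease of \(V\), a termination count, and a characterization of the terminal state. \(V\) is bounded below by \(0\), so strict decrease by a uniform amount \(\Delta_{\min}>0\) gives termination. Throughout we condition on the event that every merge test issued is correct, which Assumption~\ref{ass:split-merge-regularity}(3) together with a union bound over at most the number of steps makes happen with high probability. We also use the fact that \(I(U)\ge 0\), and that impurity is subadditive under union: \(I(U_a\cup U_b)\le I(U_a)+I(U_b)+\varepsilon_{\mathrm{merge}}\), where \(\varepsilon_{\mathrm{merge}}\) is the impurity increase permitted when two cells pass the certified merge test.

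\textbf{Step 1 (splits).} A split is triggered only on a cell with \(I(U)\ge\varepsilon_0\), since detection needs non-contractivity, which by (1) needs impurity at least \(\varepsilon_0\). By (2), the split changes \(I_{\mathrm{tot}}\) by at most \(-\alpha_{\mathrm{split}}I(U)\le-\alpha_{\mathrm{split}}\varepsilon_0\), and it raises \(K\) by one. Hence
\[
\Delta V\le -a\alpha_{\mathrm{split}}\varepsilon_0+b<0,
\]
using \(b<a\alpha_{\mathrm{split}}\varepsilon_0\).

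\textbf{Step 2 (merges).} A correct merge occurs only when the pair shares a dominant basin. In that case the merged cell's impurity exceeds the sum of the two impurities by at most \(\varepsilon_{\mathrm{merge}}\). Since \(K\) drops by one,
\[
\Delta V\le a\varepsilon_{\mathrm{merge}}-b<0.
\]
Set
\[
\Delta_{\min}:=\min\bigl(a\alpha_{\mathrm{split}}\varepsilon_0-b,\;b-a\varepsilon_{\mathrm{merge}}\bigr).
\]
Initially \(I_{\mathrm{tot}}\le P(X)\) and \(K\le K_{\max}\), so \(V_0\le aP(X)+bK_{\max}\). Dividing by \(\Delta_{\min}\) gives the step bound in part 2.

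\textbf{Step 3 (terminal state).} At termination no split fires, so every cell has \(I(U_k)<\varepsilon_0\); this is \(\varepsilon_0\)-purity. No merge fires either, so under correct tests no two cells share a dominant basin. The dominant-basin map \(k\mapsto j(k)\) is therefore injective, which gives \(K\le w\).

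For \(K\ge w\) we need surjectivity. Suppose some basin \(B_j\) is dominant in no cell. Then all of \(B_j\)'s mass is non-dominant mass, so
\[
P(B_j)\le\sum_k I(U_k)<K\varepsilon_0\le K_{\max}\varepsilon_0.
\]
This contradicts visibility (iv), which requires \(P(B_j)\ge K_{\max}\varepsilon_0+\varepsilon_0\). Hence \(K=w\).

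\textbf{Main obstacle.} The delicate point is Step 2. One must pin down what \(\varepsilon_{\mathrm{merge}}\) controls, namely that a certified merge of two \(\varepsilon_0\)-pure cells with the same dominant basin raises total impurity by at most \(\varepsilon_{\mathrm{merge}}\). One must also check that the conditions on \(a\) and \(b\) are jointly satisfiable, which needs \(\varepsilon_{\mathrm{merge}}<\alpha_{\mathrm{split}}\varepsilon_0\). I would first try bounding the merged impurity directly: with a common dominant basin \(B_j\),
\[
I(U_a\cup U_b)=P(U_a\cup U_b)-P\bigl((U_a\cup U_b)\cap B_j\bigr)\le I(U_a)+I(U_b)
\]
when the cells are disjoint. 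That would give \(\varepsilon_{\mathrm{merge}}=0\) up to overlap and boundary mass, and I would treat the residual as the merge-test slack inherited from Theorem~\ref{thm:safe-merge}.
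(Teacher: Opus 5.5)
Your proposal is correct and follows the paper's proof essentially step for step: the same per-step bounds $-a\alpha_{\mathrm{split}}\varepsilon_0+b$ and $a\varepsilon_{\mathrm{merge}}-b$, the same $\Delta_{\min}$ and step count from $0\le V\le aP(X)+bK_{\max}$, and the same injectivity-plus-visibility argument for $K=w$. Your observation that merging two disjoint cells with a common dominant basin cannot increase impurity, so $\varepsilon_{\mathrm{merge}}$ can be taken to be $0$, is a sharpening the paper does not make. One small slip: Assumption~(1) says impurity $\ge\varepsilon_0$ $\Rightarrow$ detectable, not the converse, so the fact that splits fire only when $I(U)\ge\varepsilon_0$ should be taken (as the paper does) as part of the split rule rather than derived from~(1).
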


\paragraph{Structural ERM consistency}

The penalty must dominate the uniform estimation error. A generic Bayesian information criterion (BIC) penalty \citep{schwarz1978estimating} \(K\log n/n\) is too small: it vanishes faster than the \(O(n^{-1/2})\) fluctuation scale. A correct choice, informed by modern model-selection theory \citep{massart2007concentration,mcallester1999pac}, is \(\mathrm{pen}_n(K)=\lambda_n K\) with \(\lambda_n\to 0\) and \(\lambda_n/r_n(K)\to\infty\), such as \(\lambda_n=n^{-1/3}\).

\begin{remark}[When BIC is valid]
\label{rem:bic-special-case}
If each cell predictor belongs to a regular finite-dimensional parametric family and the loss is negative log-likelihood, then BIC suffices \citep{mclachlan2000finite}. For general bounded-loss structural ERM, BIC should not be the default.
\end{remark}

\begin{theorem}[Structural ERM consistency]
\label{thm:structural-erm-consistency}
Under structural gap, no gain above width, uniform convergence, penalty conditions, and partition identifiability, the structural ERM satisfies, almost surely: \(\widehat K_n = w\) eventually; \(R(\widehat\Pi_n,\widehat G_n)\to R_w^\star\); \(\mathrm{dist}(\widehat\Pi_n,\mathcal P_w^\star)\to 0\).
\end{theorem}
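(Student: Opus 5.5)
The plan is the classical penalized model-selection argument in the style of Massart, with the penalty $\mathrm{pen}_n(K)=\lambda_n K$ from the preceding paragraph. Write $R_K^\star:=\inf\{R(\Pi,G):|\Pi|=K\}$ for the population optimal structural risk with budget $K$, $\widehat R_n(\Pi,G)$ for its empirical counterpart, and $r_n(K)$ for the uniform deviation rate. The uniform-convergence hypothesis, instantiated through Theorem~\ref{thm:uniform-width-estimation} together with the per-cell bound of Theorem~\ref{thm:phase-transition}(1), gives
\[
\sup_{K\le K_{\max}}\sup_{|\Pi|=K,\,G}\bigl|\widehat R_n(\Pi,G)-R(\Pi,G)\bigr|\le r_n(K_{\max})
\]
outside an event of probability $\delta_n$. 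Choosing $\delta_n=n^{-2}$, which only inflates $r_n$ by a $\sqrt{\log n}$ factor still dominated by $\lambda_n$, makes $\sum_n\delta_n<\infty$. Borel--Cantelli then upgrades every high-probability statement below to ``almost surely, eventually.'' The three conclusions are handled separately.

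\emph{Underfitting ($K<w$).} The structural gap hypothesis, i.e.\ Assumption~\ref{ass:nondegeneracy} as used in Theorem~\ref{thm:phase-transition}(2), gives $R_K^\star\ge R_w^\star+\eta_{\min}$ with $\eta_{\min}:=\min_{K<w}\eta(w,K)>0$. On the good event the penalized empirical objective at any $K<w$ exceeds that at $w$ by at least
\[
\eta_{\min}-2r_n-\lambda_n w .
\]
This is positive for large $n$ because $\lambda_n\to0$ and $r_n\to0$. So $\widehat K_n\ge w$ eventually.

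\emph{Overfitting ($K>w$).} The ``no gain above width'' hypothesis gives $R_K^\star=R_w^\star$. The empirical risk can improve by at most $2r_n(K)$, while the penalty grows by $\lambda_n(K-w)\ge\lambda_n$. Since $\lambda_n/r_n\to\infty$, the penalty wins eventually, uniformly over the finitely many $K\le K_{\max}$. Combining the two cases yields $\widehat K_n=w$ eventually. This is exactly where BIC would fail, as noted in Remark~\ref{rem:bic-special-case}.

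\emph{Risk and partition.} Once $\widehat K_n=w$, the standard ERM sandwich gives
\[
R(\widehat\Pi_n,\widehat G_n)\le R_w^\star+2r_n(w)+o(1)\to R_w^\star .
\]
For the partition, I will use partition identifiability in its natural form: for every $\epsilon>0$ there is $\rho(\epsilon)>0$ such that $\mathrm{dist}(\Pi,\mathcal P_w^\star)\ge\epsilon$ forces $R(\Pi,G)\ge R_w^\star+\rho(\epsilon)$. Risk convergence then yields $\mathrm{dist}(\widehat\Pi_n,\mathcal P_w^\star)\to0$ by a contrapositive argument applied along a subsequence.

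The main obstacle is the overfitting step. One needs a deviation bound for the \emph{difference} of empirical risks between $K$ and $w$ cells that is genuinely $O(r_n)$ uniformly over partitions. A nested-ERM coupling handles this: any $w$-cell optimum extends to a $K$-cell one. One also has to check that $r_n$ at confidence $n^{-2}$ remains $o(\lambda_n)$; with $\lambda_n=n^{-1/3}$ and $r_n=O(\sqrt{\log n/n})$ this holds.
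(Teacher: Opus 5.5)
Your proof is correct and follows the paper's argument step for step. You eliminate $K<w$ via the structural gap. You eliminate $K>w$ because, under no-gain, the penalty increment dominates $r_n(K)+r_n(w)$. You get $R(\widehat\Pi_n,\widehat G_n)\to R_w^\star$ from the ERM sandwich at $K=w$, and partition convergence from identifiability along a subsequence.

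The differences are minor. The paper assumes almost-sure uniform convergence directly, so your Borel--Cantelli upgrade is an optional refinement. The nested-ERM coupling is unnecessary, since the one-sided bounds $\widehat R_{n,K}\ge R_w^\star-r_n(K)$ and $\widehat R_{n,w}\le R_w^\star+r_n(w)$ already suffice. You should also drop the claim that Theorems~\ref{thm:uniform-width-estimation} and~\ref{thm:phase-transition}(1) supply the uniform-convergence hypothesis. The former controls CS eigenvalue counts, and the latter is a per-cell bound for a fixed correct allocation, not a bound uniform over all $K$-cell partitions.
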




\section{The Metric Slingshot: A Theory of the Funnel}
\label{sec:metric-slingshot}

Section~\ref{sec:width-estimation} developed an empirical theory of the \emph{trap}.  The resulting bound, however, still contains the geometric occupancy factor $r_x^{-d_X}$.  We develop the complementary theory of the \emph{funnel} next.  Once the correct structural cell has been identified, the learner must contract the local metric, stabilize a local expert, and amortize future inference.  The \emph{metric slingshot} is the mechanism by which this funnel problem becomes tractable: it transports the task into a low-dimensional navigational coordinate system where pre-built contraction maps and reusable metric charts are available.
This section has three goals.  First, it formalizes the slingshot as a metric preconditioning architecture.  Second, it proves that contractions, risks, and sample complexity transfer through the slingshot with distortion controlled by the Lipschitz geometry of the embedding.  Third, it shows how the slingshot completes the trap-funnel decomposition: the trap discovers the correct basin, while the slingshot supplies the metric coordinates in which the corresponding funnel is easy.

\paragraph{The funnel problem}

Given a cell $U\subset X$ that is structurally pure, we must learn a local map whose dynamics are contractive, whose empirical risk generalizes, and whose future inference can be executed at constant cost.  The key difficulty is that even a pure cell may live in a high-dimensional ambient space.  A local expert trained directly in $X$ pays the geometric cost of $d_X$, while a local expert trained in an appropriate latent metric space pays only the intrinsic cost of that latent coordinate system.

\begin{definition}[Funnel instance]
\label{def:funnel-instance}
Fix a structural cell $U\subset X$ and let $P_U$ denote the conditional distribution of $(X,Y)$ given $X\in U$.  A \emph{funnel instance} is the local prediction problem $P_U: U\to Y$
with metric $d_X|_U$.  A predictor $F:U\to Y$ is called $(\gamma,\delta)$-funneling on $U$ if
  $d_Y(F(x),F(x'))\le \gamma d_X(x,x')
  \quad\text{for all }x,x'\in U$,
with $\gamma<1$, and
  $R_U(F):=\mathbb E[\ell(F(X),Y)\mid X\in U]\le \delta$.
\end{definition}
\noindent
With the above definition, the funnel becomes a local metric contraction problem.  The metric slingshot addresses it by replacing the original metric $d_X$ by a lower-dimensional coordinate metric $d_Z$ and by reusing a family of latent contraction maps.

\paragraph{Navigational latent spaces and slingshot maps}
To formalize the funnel side of MTF, we first isolate the metric substrate that makes local contraction inexpensive.  The central idea is that some latent spaces are already equipped with reusable local contraction maps, so a new task need not learn every funnel from scratch.

\begin{definition}[Navigational latent space]
\label{def:navigational-latent-space}
A \emph{navigational latent space} is a compact metric space $(Z,d_Z)$ of intrinsic dimension $d_Z$, equipped with a family of pre-built contraction maps
  $\mathcal G^0_Z:=\{G_c^0:Z_c\to H_c\}_{c\in\mathcal C}$,
where $Z_c\subseteq Z$ is a local coordinate patch and $H_c$ is a local feature or hidden-state space.  Each $G_c^0$ is assumed to be $\gamma_Z$-contractive on its patch:
  $d_{H_c}(G_c^0(z),G_c^0(z'))\le \gamma_Z d_Z(z,z')
  \quad\text{for }z,z'\in Z_c$,
with $\gamma_Z<1$.
\end{definition}
\noindent
The preceding definition specifies the reusable metric substrate: a space \(Z\) in which local contraction maps already exist.  A learning problem on \(X\), however, does not initially live in this navigational coordinate system.  The role of the slingshot is therefore to map task inputs into \(Z\), select the appropriate contraction patch, and attach a task-specific readout.  In this way, the hard metric part of learning is partially amortized by pre-existing contractions in \(Z\), while the remaining task dependence is carried by the embedding $\phi$, the indexer $\Sigma$, and the readout $\pi$ (refer to Fig. \ref{fig:slingshot}).

\begin{definition}[Metric slingshot]
\label{def:metric-slingshot}
A \emph{metric slingshot} for a problem $P$ on $X\times Y$ is a triple
  $(\phi,\pi,\Sigma)$,
where
1) $\phi:X\to Z$ is a metric embedding or coordinate map, called the \emph{what-to-where map};
 2) $\Sigma:X\to\Delta(\mathcal C)$ is a topological indexer assigning inputs to latent contraction patches;
3) $\pi_c:H_c\to Y$ is a task-specific readout attached to latent contraction map $G_c^0$.
Given a routed context $c=\Sigma(x)$, the slingshot predictor has the form
  $F_c(x) := \pi_c\circ G_c^0\circ \phi(x)$.
\end{definition}

\figSlingshot

The terminology ``slingshot'' emphasizes that the task is first thrown into a navigational coordinate system and then solved using the pre-existing metric structure of that system.  The map $\phi$ handles the difficult representational problem of placing inputs into coordinates; $G_c^0$ provides the local contraction; $\pi_c$ adapts the contracted representation to the task output. This composition clarifies where the metric benefit can be gained or lost.  
Although the contraction maps \(G_c^0\) are assumed to be cheap and stable in \(Z\), they are useful for the original problem only to the extent that \(\phi\) transports the geometry of each cell in \(X\) into \(Z\) without excessive distortion.  
The next step is to quantify how much contraction survives the pullback through \(\phi\), and how this affects within-basin learning.

\subsection{Slingshot Distortion and Contraction Transfer}
\label{subsec:contraction-transfer}

The central quantity controlling the funnel is the distortion of $\phi$ on a structural cell.  The weaker the distortion, the more faithfully the navigational metric represents the original local geometry. We can quantify the distortion associated with the slingshot locally as follows.

\begin{definition}[Local slingshot distortion]
\label{def:local-slingshot-distortion}
Let $U\subset X$.  The upper local distortion of $\phi$ on $U$ is
  $L_\phi(U):=\sup_{x\neq x'\in U}\frac{d_Z(\phi(x),\phi(x'))}{d_X(x,x')}$.
If $\phi$ is injective on $U$, its lower distortion is
  $\ell_\phi(U):=\inf_{x\neq x'\in U}\frac{d_Z(\phi(x),\phi(x'))}{d_X(x,x')}$.
When $0<\ell_\phi(U)\le L_\phi(U)<\infty$, the local condition number is given by
  $\kappa_\phi(U):=\frac{L_\phi(U)}{\ell_\phi(U)}$.
\end{definition}
\noindent
The local distortion constants provide a quantitative way to ask whether the navigational metric in \(Z\) is useful for the original geometry in \(X\).  If \(\phi\) expands distances too much on a cell, then even a strong contraction in \(Z\) may fail to remain contractive after being pulled back to \(X\).  Conversely, if \(\phi\) has small local distortion, then pre-built contraction maps in \(Z\) transfer directly into effective local funnels on \(X\).  The following theorem characterizes the relationship between metric contraction and distortion transfer.

\begin{theorem}[Contraction transfer through the slingshot]
\label{thm:contraction-transfer}
Let $U\subset X$ be routed to patch $Z_c$, and suppose $\phi(U)\subseteq Z_c$.  Assume $G_c^0:Z_c\to H_c$ is $\gamma_Z$-contractive and $\pi_c:H_c\to Y$ is $L_{\pi,c}$-Lipschitz.  Then the composed predictor
  $F_c:=\pi_c\circ G_c^0\circ\phi$
is $\gamma_X(U)$-contractive on $U$, with
 $\gamma_X(U)
  \le
  L_{\pi,c}\,\gamma_Z\,L_\phi(U)$.
In particular, if
  $L_{\pi,c}\gamma_Z L_\phi(U)<1$,
then $F_c$ is a strict contraction on $U$.
\end{theorem}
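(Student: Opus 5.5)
The plan is to prove Theorem~\ref{thm:contraction-transfer} by composing Lipschitz bounds along the chain $U\xrightarrow{\phi}Z_c\xrightarrow{G_c^0}H_c\xrightarrow{\pi_c}Y$. Here $\gamma_X(U)$ is read as the smallest constant with $d_Y(F_c(x),F_c(x'))\le\gamma_X(U)\,d_X(x,x')$ for all $x,x'\in U$. This is the funnel analogue of the Lipschitz modulus $\kappa(\cdot,U)$ from the Notation paragraph, with $|\cdot|$ replaced by $d_Y$ as in Definition~\ref{def:funnel-instance}. Proving the theorem therefore amounts to bounding one sup-ratio.

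First fix $x\neq x'\in U$. Since $\phi(U)\subseteq Z_c$, both $z=\phi(x)$ and $z'=\phi(x')$ lie in the patch where Definition~\ref{def:navigational-latent-space} guarantees $d_{H_c}(G_c^0(z),G_c^0(z'))\le\gamma_Z\,d_Z(z,z')$. This inclusion hypothesis is the only point needing care: without it the latent contraction is not available for $z,z'$.

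Next apply the three estimates in order:
\[
d_Y(F_c(x),F_c(x'))\le L_{\pi,c}\,d_{H_c}\bigl(G_c^0(z),G_c^0(z')\bigr)\le L_{\pi,c}\gamma_Z\,d_Z(z,z')\le L_{\pi,c}\gamma_Z L_\phi(U)\,d_X(x,x').
\]
The last inequality is the definition of $L_\phi(U)$ in Definition~\ref{def:local-slingshot-distortion}. If $L_\phi(U)=\infty$ the bound is vacuous, and it holds trivially when $\phi(x)=\phi(x')$.

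Finally, divide by $d_X(x,x')$ and take the supremum over pairs to obtain $\gamma_X(U)\le L_{\pi,c}\gamma_Z L_\phi(U)$. The strict-contraction clause then follows at once: if the product is below $1$, $F_c$ satisfies the contraction requirement of Definition~\ref{def:funnel-instance} with some $\gamma<1$.

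I expect no genuine obstacle. The argument is the submultiplicativity of Lipschitz constants under composition, so the work lies only in the bookkeeping of domains (using $\phi(U)\subseteq Z_c$) and in stating precisely that the routing $c=\Sigma(x)$ is constant on $U$, so that a single $G_c^0$ and a single $\pi_c$ apply throughout.
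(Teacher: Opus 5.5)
Your proposal is correct and matches the paper's proof essentially step for step. The paper also fixes $x,x'\in U$ and uses $\phi(U)\subseteq Z_c$ to invoke the $\gamma_Z$-contraction of $G_c^0$, then chains the Lipschitz bound on $\pi_c$ and the definition of $L_\phi(U)$ to obtain $d_Y(F_c(x),F_c(x'))\le L_{\pi,c}\gamma_Z L_\phi(U)\,d_X(x,x')$; your remarks on the degenerate cases are harmless additions the paper omits.
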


\begin{remark}
The theorem identifies the exact sense in which the slingshot solves the funnel problem.  The local contraction constant factors into three terms: distortion of the embedding, contraction of the navigational substrate, and Lipschitz complexity of the task readout.  The trap decides which $U$ is active; the funnel succeeds when the product of these three constants is below one (satisfying the contraction condition).
\end{remark}

\begin{corollary}[Funnel feasibility condition]
\label{cor:funnel-feasibility}
A routed cell $U$ is $(\gamma,\delta)$-feasible through the slingshot if there exists a patch $c$ such that
  $L_{\pi,c}\gamma_ZL_\phi(U)\le \gamma$
and
  $R_U(\pi_c\circ G_c^0\circ\phi)\le\delta$.
\end{corollary}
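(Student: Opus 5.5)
The proof is a direct unpacking of Definition~\ref{def:funnel-instance} and the \((\gamma,\delta)\)-contractive notion of Definition~\ref{def:contractive-set}, applied to the slingshot predictor. Fix the patch \(c\) supplied by the hypothesis and set \(F:=\pi_c\circ G_c^0\circ\phi\) on \(U\). The conclusion is that \(U\) is \((\gamma,\delta)\)-feasible, meaning some predictor in the admissible class is both \(\gamma\)-contractive and \(\delta\)-risky on \(U\). So the plan is to exhibit \(F\) as that witness and check the two conditions separately.

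For the contraction condition we invoke Theorem~\ref{thm:contraction-transfer}. The routing hypothesis \(\phi(U)\subseteq Z_c\) is implicit in ``routed to patch \(c\)'', and we state it explicitly so the theorem applies. The theorem gives \(\gamma_X(U)\le L_{\pi,c}\gamma_Z L_\phi(U)\). Chaining this with the assumed inequality \(L_{\pi,c}\gamma_Z L_\phi(U)\le\gamma\) yields \(d_Y(F(x),F(x'))\le\gamma\, d_X(x,x')\) for all \(x,x'\in U\). Hence \(\kappa(F,U)\le\gamma\), with \(\gamma<1\) by the standing convention \(\gamma\in(0,1)\). The underlying computation is just the three-fold composition of Lipschitz bounds, \(d_Y(\pi_c(h),\pi_c(h'))\le L_{\pi,c}d_{H_c}(h,h')\), then \(\le L_{\pi,c}\gamma_Z d_Z(\phi x,\phi x')\), then \(\le L_{\pi,c}\gamma_Z L_\phi(U)d_X(x,x')\). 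We will cite the theorem rather than redo this.

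For the risk condition there is nothing to prove: \(R_U(F)\le\delta\) is assumed verbatim. It coincides with \(\mathbb E[\ell(F(X),Y)\mid X\in U]\le\delta\), provided \(P(X\in U)>0\); if \(P(X\in U)=0\), the condition in Definition~\ref{def:contractive-set} is vacuous.

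The only real obstacle is a bookkeeping one rather than a mathematical one. Definition~\ref{def:contractive-set} asks for a witness \(g\in\mathcal G\), while \(F\) is built from slingshot components. We would resolve this by taking the funnel-side predictor class to contain, or be closed under, the slingshot compositions \(\pi_c\circ G_c^0\circ\phi\); this is the sense in which ``feasible through the slingshot'' is meant. A second point to note in passing is that \(\kappa\) is defined for real-valued \(g\), whereas Definition~\ref{def:funnel-instance} uses \(d_Y\). We treat \(d_Y(a,b)=|a-b|\) when \(Y\subseteq\mathbb R\), and otherwise read feasibility in the funnel sense of Definition~\ref{def:funnel-instance}. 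With these conventions fixed, the corollary follows immediately.
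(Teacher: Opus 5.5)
Your proposal is correct and follows the same route as the paper. You apply Theorem~\ref{thm:contraction-transfer} under the implicit routing hypothesis $\phi(U)\subseteq Z_c$ and chain it with $L_{\pi,c}\gamma_Z L_\phi(U)\le\gamma$ to get the contraction half, and you note that the risk half $R_U(\pi_c\circ G_c^0\circ\phi)\le\delta$ is assumed verbatim; your remarks about membership in $\mathcal G$ and $d_Y$ versus $|\cdot|$ are conventions the paper leaves implicit, not gaps.
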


\paragraph{Risk transfer and approximation through the slingshot}
The preceding results establish the \emph{geometric} side of funnel feasibility: after routing, the slingshot can turn a cell \(U\subset X\) into a strict contraction whenever the embedding distortion, latent contraction, and readout Lipschitz constant are jointly small.  But a contraction is useful for learning only if it contracts toward the \emph{right} target.  A map may be metrically stable while still solving the wrong task.  We need a complementary statistical condition showing that the contracted latent representation retains enough information for an accurate readout, which leads to a risk-transfer statement: local performance decomposes into how well the target can be realized after embedding into \(Z\), contraction by \(G_c^0\), and decoding by \(\pi_c\).  The next result separates approximation error into an embedding error, a latent contraction error, and a readout error.

\begin{definition}[Latent realization error]
\label{def:latent-realization-error}
Let $f^*:X\to Y$ denote the target regression map or Bayes-optimal predictor.  On a cell $U$, the \emph{latent realization error} of a slingshot patch $c$ is
  $\mathcal E_{\mathrm{lat}}(U,c)
  :=
  \inf_{\pi_c}\sup_{x\in U}
  d_Y\bigl(\pi_c(G_c^0(\phi(x))),f^*(x)\bigr)$.
\end{definition}

\begin{theorem}[Risk transfer through latent realization]
\label{thm:risk-transfer-slingshot}
Assume the loss $\ell(\hat y,y)$ is $L_\ell$-Lipschitz in its first argument and bounded by $1$. If
  $\sup_{x\in U}d_Y(F_c(x),f^*(x))\le \varepsilon_U$,
then
  $R_U(F_c)
  \le
  R_U(f^*)+L_\ell\varepsilon_U$.
Consequently, if $\mathcal E_{\mathrm{lat}}(U,c)\le\varepsilon_U$, then there exists a readout $\pi_c$ whose local risk is at most $R_U(f^*)+L_\ell\varepsilon_U$.
\end{theorem}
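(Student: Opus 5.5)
The plan is a direct pointwise comparison followed by integration over the conditional law $P_U$. For any $(x,y)$ with $x\in U$, the Lipschitz assumption on $\ell$ in its first argument gives
\[
\ell(F_c(x),y)\le \ell(f^*(x),y)+L_\ell\, d_Y\bigl(F_c(x),f^*(x)\bigr)\le \ell(f^*(x),y)+L_\ell\,\varepsilon_U ,
\]
where the second inequality uses the uniform hypothesis $\sup_{x\in U}d_Y(F_c(x),f^*(x))\le\varepsilon_U$. Here we read ``$L_\ell$-Lipschitz in its first argument'' with respect to $d_Y$ on the prediction space; this is the convention already used in Definition~\ref{def:funnel-instance}.

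Next we take conditional expectations given $X\in U$. Boundedness of $\ell$ by $1$ makes both $\ell(F_c(X),Y)$ and $\ell(f^*(X),Y)$ integrable. Measurability of $F_c=\pi_c\circ G_c^0\circ\phi$ can be assumed, as in the paper's standing conventions. Monotonicity and linearity of expectation then give $R_U(F_c)\le R_U(f^*)+L_\ell\varepsilon_U$, which is the first claim. We also note that the pointwise bound holds for every $y$, so no assumption on the noise structure of $Y$ given $X$ is needed. This is why $R_U(f^*)$, and not zero, appears on the right-hand side.

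For the consequence we use Definition~\ref{def:latent-realization-error}. Suppose $\mathcal E_{\mathrm{lat}}(U,c)\le\varepsilon_U$. If the infimum over readouts is attained, pick a minimizer $\pi_c$; then $F_c$ satisfies the uniform hypothesis and the first part applies directly. This attainment question is the only genuine obstacle. If the infimum is not attained, the first move is an approximation argument. For any $\varepsilon'>0$ choose $\pi_c$ with $\sup_{x\in U}d_Y(F_c(x),f^*(x))\le\varepsilon_U+\varepsilon'$, which gives risk at most $R_U(f^*)+L_\ell(\varepsilon_U+\varepsilon')$.

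To remove the $\varepsilon'$ we would argue attainment directly. One route is to restrict readouts to a uniformly Lipschitz class on the compact set $G_c^0(\phi(\overline U))$ and apply Arzel\`a--Ascoli together with lower semicontinuity of the sup-deviation. The other is simply to state the conclusion for the attained case, or with an arbitrarily small slack. Either way, the substance of the result is the one-line pointwise Lipschitz bound above.
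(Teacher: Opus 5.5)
Your proposal is correct and follows the paper's proof essentially line for line: the pointwise Lipschitz bound $\ell(F_c(x),y)\le \ell(f^*(x),y)+L_\ell\varepsilon_U$ is integrated against $P_U$, and the consequence uses an $\eta$-approximate readout from the infimum defining $\mathcal E_{\mathrm{lat}}(U,c)$. Your treatment of attainment is slightly more careful than the paper's: the paper writes ``letting $\eta\downarrow 0$'' as if that produced a single readout meeting the exact bound, though it too falls back on assuming the infimum is attained.
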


\begin{remark}
The slingshot does not assert that navigation solves every task for free.  It says that once the task can be expressed through a low-distortion latent coordinate system, the remaining approximation burden is pushed into the readout $\pi_c$. In other words, the funnel is cheap when the latent realization error is small.
\end{remark}

\paragraph{The width-transfer bound}

The slingshot also transforms the structural problem itself.  A good embedding should not increase the number of contractive cells too much, and may reduce the effective geometric cost of estimating them.  We state the transfer result in a form that makes the required geometric assumptions explicit.
Let $D_0^Z$ denote the maximal diameter of a latent set on which the pre-built maps can guarantee a $(\gamma_Z,\delta_Z)$ funnel.  Let $L_X(U)$ be the intrinsic length or diameter scale of $U$ relevant for covering it after embedding.

\begin{assumption}[Latent covering regularity]
\label{ass:latent-covering-regularity}
For every width-realizing cell $U\subset X$, the embedded set $\phi(U)$ can be covered by at most
  $N_Z(U)
  \le
  \left\lceil \frac{L_\phi(U)L_X(U)}{D_0^Z}\right\rceil$
latent patches, each of which is funnel-feasible in the sense of Corollary~\ref{cor:funnel-feasibility}.
\end{assumption}

\begin{theorem}[Width transfer bound]
\label{thm:width-transfer}
Let $\{U_1,\dots,U_w\}$ be a width-realizing cover of $P$ in $X$.  Under Assumption~\ref{ass:latent-covering-regularity}, the slingshot-transformed problem admits a latent contractive cover of size
  $w_Z(P_\phi)
  \le
  \sum_{k=1}^w
  \left\lceil \frac{L_\phi(U_k)L_X(U_k)}{D_0^Z}\right\rceil$.
In particular, if $L_\phi(U_k)\le \lambda$ and $L_X(U_k)\le L_X$ for all $k$, then
  $w_Z(P_\phi)
  \le
  w(P;\gamma,\delta)
  \left\lceil \lambda\frac{L_X}{D_0^Z}\right\rceil$.
\end{theorem}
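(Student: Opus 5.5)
The plan is to build the latent cover explicitly, one width-realizing cell at a time, and then count. Definition~\ref{def:width} shows that the width of the transformed problem is at most the size of any admissible cover. So it is enough to produce one latent contractive cover with the stated number of patches.

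\textbf{Step 1: cover each cell.} Start from a width-realizing cover $\{U_1,\dots,U_w\}$ of $X$, which exists because $w=w(P;\gamma,\delta)$ is attained as a minimum. For each $k$, invoke Assumption~\ref{ass:latent-covering-regularity}. It gives latent patches $V_{k,1},\dots,V_{k,N_k}$ covering $\phi(U_k)$, with
\[
N_k\le \left\lceil \frac{L_\phi(U_k)L_X(U_k)}{D_0^Z}\right\rceil .
\]
Each of these patches is funnel-feasible in the sense of Corollary~\ref{cor:funnel-feasibility}. That is, there is a patch index $c$ and a readout $\pi_c$ with $L_{\pi,c}\gamma_Z L_\phi\le\gamma$ and risk at most $\delta$ on the part of the cell routed there. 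Theorem~\ref{thm:contraction-transfer} then gives that the pulled-back predictor $\pi_c\circ G_c^0\circ\phi$ is $\gamma$-contractive on $U_k\cap\phi^{-1}(V_{k,i})$. Theorem~\ref{thm:risk-transfer-slingshot} gives its risk bound. So every piece is $(\gamma,\delta)$-contractive, in both the latent and the pulled-back sense.

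\textbf{Step 2: union and count.} Take the union $\{V_{k,i}\}_{k,i}$. Since $\bigcup_k U_k=X$, we have $\bigcup_k\phi(U_k)=\phi(X)$, and this family covers $\phi(X)$, the support of the transformed problem $P_\phi$. Its cardinality is $\sum_k N_k$, which is the first claimed bound. For the specialization, monotonicity of $t\mapsto\lceil t\rceil$ together with $L_\phi(U_k)\le\lambda$ and $L_X(U_k)\le L_X$ bounds each summand by $\lceil\lambda L_X/D_0^Z\rceil$. Summing $w$ such terms gives $w(P;\gamma,\delta)\lceil \lambda L_X/D_0^Z\rceil$.

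\textbf{Main obstacle: what counts as a cover here.} The delicate point is the definition of $w_Z(P_\phi)$. Definition~\ref{def:width} requires an open cover by contractive sets, but the assumption only supplies patches covering $\phi(U_k)$. Two issues follow.
\begin{enumerate}
\item \emph{Openness.} The patches must be open in $\phi(X)$, or in $Z$ after intersecting with $\phi(X)$. I would take this as part of the regularity assumption, reading ``patch'' as a relatively open coordinate patch $Z_c$.
\item \emph{Overlaps between cells.} A patch built for $U_k$ may also contain images of points from another cell $U_{k'}$. Contractivity and risk are only guaranteed for the conditional problem on $U_k$.
\end{enumerate}
To handle overlaps, I would define the transformed problem's cells as the pairs $(U_k, V_{k,i})$, that is, as the pulled-back sets $U_k\cap\phi^{-1}(V_{k,i})$. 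Routing is then done by the indexer $\Sigma$, not by latent location alone. The feasibility condition then only needs to hold on the routed portion, which is exactly what the assumption provides. I would state this convention explicitly at the start of the proof, since the count $\sum_k N_k$ is unaffected by it.
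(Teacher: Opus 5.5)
Your proposal is correct and matches the paper's proof essentially step for step. The paper likewise applies Assumption~\ref{ass:latent-covering-regularity} to each width-realizing cell, pulls the latent patches back to the pieces $\phi^{-1}(Z_{k,j})\cap U_k$, takes their feasibility from Corollary~\ref{cor:funnel-feasibility}, sums the counts, and uses monotonicity of the ceiling for the specialization. Your remarks on openness and cross-cell overlaps, and your choice of the pulled-back pieces as the cells, only make explicit what the paper leaves implicit; your appeal to Theorem~\ref{thm:risk-transfer-slingshot} is redundant, since funnel-feasibility already includes the risk bound.
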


\begin{remark}[Tightness of the Lipschitz factor]
\label{rem:lipschitz-tightness}
The Lipschitz dependence is unavoidable for worst-case embeddings: a map that stretches a one-dimensional curve by a factor $\lambda$ can increase the number of diameter-$D_0^Z$ latent patches by the same factor.  Whether tighter average-case bounds hold for learned slingshots remains open.
\end{remark}

\paragraph{Funnel generalization after trap resolution}

Once the trap has assigned samples to the correct cell, the remaining learning problem is ordinary local statistical learning in the latent coordinate space.  The slingshot improves this stage because the relevant metric entropy is controlled by $d_Z$ and the readout class, not by the ambient dimension $d_X$.
Let
  $\mathcal F_c
  :=
  \{x\mapsto \pi(G_c^0(\phi(x))) : \pi\in\Pi_c\}$
be the local slingshot hypothesis class for context $c$.

\begin{theorem}[Per-funnel generalization bound]
\label{thm:per-funnel-generalization}
Assume $\ell\circ\mathcal F_c$ is uniformly bounded in $[0,1]$.  Let $n_c$ samples be routed correctly to cell $U_c$.  With probability at least $1-\delta_c$,
  $\sup_{F\in\mathcal F_c}
  |R_c(F)-\widehat R_c(F)|
  \le
  2\mathfrak R_{n_c}(\ell\circ\mathcal F_c)
  +
  \sqrt{\frac{\log(2/\delta_c)}{2n_c}}$.
If the readout class $\Pi_c$ has pseudo-dimension $p_c$, then
  $\mathfrak R_{n_c}(\ell\circ\mathcal F_c)
  =
  O\!\left(\sqrt{\frac{p_c\log n_c}{n_c}}\right)$.
\end{theorem}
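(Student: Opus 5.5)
The plan has two stages. First comes a standard uniform-convergence argument over the routed sample. Then the Rademacher complexity is controlled through the pseudo-dimension of the readout class.

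\emph{Stage one: the uniform deviation bound.} Condition on the event that the trap has routed exactly $n_c$ samples to $U_c$. Given that event and correct routing, these samples are i.i.d.\ from $P_{U_c}$, so the problem reduces to ordinary i.i.d.\ learning on $U_c$. Write $\Phi(S)=\sup_{F\in\mathcal F_c}(R_c(F)-\widehat R_c(F))$.
\begin{itemize}
\item Since $\ell\circ\mathcal F_c\subseteq[0,1]$, changing one sample changes $\Phi$ by at most $1/n_c$.
\item McDiarmid's inequality then gives $\Phi\le\mathbb E\Phi+\sqrt{\log(2/\delta_c)/(2n_c)}$ with probability at least $1-\delta_c/2$.
\item The usual ghost-sample symmetrization argument bounds $\mathbb E\Phi\le 2\mathfrak R_{n_c}(\ell\circ\mathcal F_c)$.
\item Repeating the argument for $-\Phi$ and taking a union bound yields the two-sided statement with probability at least $1-\delta_c$.
\end{itemize}
No structure of the slingshot is used in this stage. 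The only subtlety is making explicit that conditioning on correct routing preserves independence, and that $n_c$ is treated as fixed; it can be handled by conditioning on the routing indicators.

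\emph{Stage two: the complexity bound.} Here I reduce from the composite class to the readout class. The maps $G_c^0$ and $\phi$ are fixed, not learned. The class $\mathcal F_c$ is therefore $\Pi_c$ precomposed with the fixed map $h=G_c^0\circ\phi$. Precomposition with a fixed map cannot increase pseudo-dimension, because any set shattered by $\mathcal F_c$ pulls back to a set shattered by $\Pi_c$ through the points $h(x_i)$. Hence $\mathrm{Pdim}(\mathcal F_c)\le p_c$.

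Next I pass through the loss. Using the standing assumption that $\ell$ is $L$-Lipschitz in its first argument, Talagrand's contraction lemma gives $\mathfrak R_{n_c}(\ell\circ\mathcal F_c)\le L\,\mathfrak R_{n_c}(\mathcal F_c)$, with the outputs clipped to a bounded range. For vector-valued $Y$ this step would need the vector contraction inequality instead, at the cost of a constant.

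Finally I bound $\mathfrak R_{n_c}(\mathcal F_c)$ for a bounded class of pseudo-dimension $p_c$. Haussler's covering bound gives $\log N(\varepsilon,\mathcal F_c,L_2(P_n))\lesssim p_c\log(1/\varepsilon)$. Massart's finite-class lemma at scale $\varepsilon=1/\sqrt{n_c}$, or alternatively Dudley's integral, then yields $O(\sqrt{p_c\log n_c/n_c})$.

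\emph{Main obstacle.} I expect the hardest step to be passing the Lipschitz loss through a general output metric $d_Y$. The pseudo-dimension is defined for real-valued functions, so I will assume scalar outputs, or else treat each coordinate separately. Once that is settled, the remaining steps are routine.
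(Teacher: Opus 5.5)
Your first stage is correct and simply unpacks the ``standard Rademacher generalization inequality'' the paper cites; your remark about conditioning on the routing indicators covers a point the paper skips. Your pull-back argument for $\mathrm{Pdim}(\mathcal F_c)\le p_c$ is the paper's step too, and you justify it properly: shattering forces the points $h(x_i)$ to be distinct.

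You and the paper differ in how you pass through the loss. The paper asserts that composing with a fixed bounded loss ``preserves the standard pseudo-dimension growth-function bound up to constants.'' You instead use Lipschitz contraction followed by Haussler's covering bound. Your route is the right mechanism, but the phrase ``with the outputs clipped to a bounded range'' hides a genuine gap. Clipping replaces $\mathcal F_c$ by a different class, and $\ell\circ\mathrm{clip}\circ\mathcal F_c\neq\ell\circ\mathcal F_c$ unless the loss is insensitive to clipping. The hypotheses bound only $\ell\circ\mathcal F_c$, not $\mathcal F_c$ itself. Without such a bound Haussler's estimate does not apply, and $\mathfrak R_{n_c}(\mathcal F_c)$ can be infinite.

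This cannot be repaired by a cleverer argument, because the second claim is false as stated. Here is a counterexample.
\begin{itemize}
\item Take $X=Z=[1,2]$ with a uniform marginal, $\phi=\mathrm{id}$, and $G_c^0(z)=z/2$.
\item Take $\Pi_c=\{h\mapsto ah: a\in\mathbb R\}$. It has pseudo-dimension $1$, since $h>0$ makes $ah\ge t$ monotone in $a$.
\item Take $\ell(\hat y,y)=\tfrac12(1+\sin\hat y)$, which is $[0,1]$-valued and $\tfrac12$-Lipschitz.
\item Almost surely the sample points $x_i$ are linearly independent over $\mathbb Q$. By Kronecker's theorem, the vectors $(ax_i/2 \bmod 2\pi)_{i\le n}$, $a\in\mathbb R$, are then dense in the torus.
\item Hence every sign vector $\sigma$ is approximately realized by some $a$ with $\sin(ax_i/2)\approx\sigma_i$. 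This gives $\mathfrak R_{n}(\ell\circ\mathcal F_c)\ge\mathbb E_\sigma\frac1n\#\{i:\sigma_i=1\}=\tfrac12$ for every $n$.
\end{itemize}

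The same example refutes the paper's composition step: $\Pi_c$ has pseudo-dimension $1$, yet $\ell\circ\mathcal F_c$ realizes arbitrary sign patterns. The fix is to add a hypothesis such as $\sup_{\pi\in\Pi_c}\sup_{h\in H_c}|\pi(h)|\le B$, or to assume a loss that is invariant under clipping predictions to a bounded interval. With either, clipping is harmless and your contraction--Haussler--Massart chain goes through verbatim, with the $O(\cdot)$ constant proportional to $LB$.

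The obstacle you flagged, vector-valued $Y$, is not the real one, since pseudo-dimension already presupposes scalar outputs. The real obstacle is boundedness of the readout class.
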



\begin{corollary}[Oracle rate after correct routing]
\label{cor:oracle-rate-slingshot}
If the E-D-T cycle has converged to the correct width $w$ and routes samples to the correct cells, then the total excess risk decomposes as
  $R(\widehat F)-R(F^*)
  \le
  \sum_{c=1}^w P(U_c)
  \left[
    \mathrm{Approx}_c
    +
    O\!\left(\sqrt{\frac{p_c\log n_c+\log(w/\delta)}{n_c}}\right)
  \right]$,
where $\mathrm{Approx}_c$ is the latent realization error on cell $U_c$.  After trap resolution, the funnel obeys the standard local statistical rate.
\end{corollary}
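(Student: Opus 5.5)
The plan is to decompose the global excess risk cell by cell, then bound each cell's contribution with Theorem~\ref{thm:risk-transfer-slingshot} for the approximation part and Theorem~\ref{thm:per-funnel-generalization} for the estimation part. Let $\{U_1,\dots,U_w\}$ be the terminal partition produced by the E-D-T cycle, which has $K=w$ cells by Theorem~\ref{thm:global-split-merge}. By hypothesis every sample is routed to its correct cell. Since the routed predictor is $\widehat F(x)=\widehat F_{c}(x)$ for $x\in U_c$, conditioning on the cell gives
\[
R(\widehat F)-R(F^*)=\sum_{c=1}^w P(U_c)\bigl[R_{U_c}(\widehat F_c)-R_{U_c}(F^*)\bigr].
\]
This identity holds because the cells partition $X$ up to a $P$-null boundary set, as in Assumption~\ref{ass:split-merge-regularity}. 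It is also the point where correct routing is essential: misrouted mass would contribute a term of order $\eta(w,K)$ in the sense of Theorem~\ref{thm:phase-transition}.

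For each cell I split the local excess risk in the standard way. Let $F_c^\circ\in\mathcal F_c$ be a best-in-class slingshot predictor, meaning a readout that nearly attains $\mathcal E_{\mathrm{lat}}(U_c,c)$. Then
\[
R_{U_c}(\widehat F_c)-R_{U_c}(F^*)\le \bigl[R_{U_c}(F_c^\circ)-R_{U_c}(F^*)\bigr]+\bigl[R_{U_c}(\widehat F_c)-\widehat R_c(\widehat F_c)\bigr]+\bigl[\widehat R_c(F_c^\circ)-R_{U_c}(F_c^\circ)\bigr].
\]
Here I use that $\widehat F_c$ is the empirical risk minimizer over $\mathcal F_c$, so $\widehat R_c(\widehat F_c)\le \widehat R_c(F_c^\circ)$. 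The first bracket is bounded by Theorem~\ref{thm:risk-transfer-slingshot} as $L_\ell\,\mathcal E_{\mathrm{lat}}(U_c,c)$, which I absorb into $\mathrm{Approx}_c$ by taking $\mathrm{Approx}_c$ to mean $L_\ell$ times the latent realization error. Each of the last two brackets is at most the uniform deviation controlled by Theorem~\ref{thm:per-funnel-generalization}. That deviation is $O\bigl(\sqrt{p_c\log n_c/n_c}\bigr)+\sqrt{\log(2/\delta_c)/(2n_c)}$.

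To obtain a statement holding simultaneously over all cells, I set $\delta_c=\delta/w$ and take a union bound. This produces the $\log(w/\delta)$ term. Then I merge the two square roots via $\sqrt a+\sqrt b\le\sqrt{2(a+b)}$, giving $O\bigl(\sqrt{(p_c\log n_c+\log(w/\delta))/n_c}\bigr)$. Weighting by $P(U_c)$ and summing yields the claimed bound.

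The main obstacle is that $n_c$ is random and depends on the routing, while Theorem~\ref{thm:per-funnel-generalization} is stated for a fixed number of i.i.d.\ samples from $P_{U_c}$. I would first handle this by conditioning on the routing counts $(n_1,\dots,n_w)$. Given these counts and correct routing, the samples in cell $c$ are i.i.d.\ from $P_{U_c}$, so the bound applies conditionally and then integrates out. This requires that the terminal partition is not itself fit to the same samples. If it is, I would use sample splitting: one part of the data runs the E-D-T cycle and the other fits the readouts. That keeps the statement honest at the price of constant factors.
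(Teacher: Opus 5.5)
Your proposal is correct and follows the same route as the paper: apply Theorem~\ref{thm:per-funnel-generalization} to each cell with $\delta_c=\delta/w$, take a union bound, split each cell's excess risk into an approximation term and an estimation term, and sum with weights $P(U_c)$. Your version makes explicit what the paper leaves implicit. That includes the three-term ERM decomposition, the use of Theorem~\ref{thm:risk-transfer-slingshot} for the approximation term (with the factor $L_\ell$ absorbed into $\mathrm{Approx}_c$), and the conditioning on the random counts $n_c$ (with sample splitting if the partition is data-dependent).
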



\subsection{Structural Decoupling by Architecture}
\label{subsec:slingshot-decoupling}

The slingshot is an architectural solution to the circular dependency between trap and funnel: routing should not be updated by ordinary task-loss gradients, and local contraction maps should not be overwritten by trap errors. We formally have the following result of structural decoupling by architecture.

\begin{proposition}[Structural decoupling by architecture]
\label{prop:architectural-decoupling}
Assume that:
1) the indexer $\Sigma$ is trained only by mismatch or novelty signals;
2) the embedding $\phi$ is trained only by self-supervised latent-coordinate prediction or is frozen after pretraining;
3) the pre-built contraction maps $\{G_c^0\}$ are frozen;
4) the readout $\pi_c$ for context $c$ is updated only on samples routed to $c$.
Then the trap and funnel gradients decouple:
  $\nabla_{\theta_\Sigma}L_{\mathrm{funnel}}=0,
  \quad
  \nabla_{\theta_G}L_{\mathrm{trap}}=0$.
\end{proposition}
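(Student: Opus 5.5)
The plan is to make explicit the computational graph of the slingshot architecture and then verify, parameter block by parameter block, that no differentiable path connects the relevant loss to the relevant parameters. Concretely, I would partition the trainable parameters as $\theta_\Sigma$ (indexer), $\theta_\phi$ (embedding), $\theta_{G^0}$ (contraction maps) and $\theta_\pi=\{\theta_{\pi_c}\}_c$ (readouts). Following Definition~\ref{def:structural-decoupling}, the local predictors are $F_c=\pi_c\circ G_c^0\circ\phi$, so $\theta_G=(\theta_\phi,\theta_{G^0},\theta_\pi)$. I would then fix the two losses precisely. $L_{\mathrm{funnel}}=\sum_c \widehat{\mathbb E}[\ell(F_c(x),y)\,\mathbf 1[\Sigma(x)=c]]$ is the task loss evaluated on routed samples. $L_{\mathrm{trap}}$ is the mismatch/novelty objective on which $\Sigma$ is trained, a function of $\theta_\Sigma$ and of mismatch statistics only.

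For the first identity, $\nabla_{\theta_\Sigma}L_{\mathrm{funnel}}=0$, I would use assumption~4 together with the hard routing of Definition~\ref{def:mtf}. The routing enters $L_{\mathrm{funnel}}$ only through the indicators $\mathbf 1[\Sigma(x)=c]$, which are piecewise constant in $\theta_\Sigma$. Their derivative is therefore zero almost everywhere, or the routing is treated as a stop-gradient. Equivalently, by assumption~1, $\theta_\Sigma$ is updated only via mismatch signals, so the funnel loss is by construction not a function through which gradients reach $\theta_\Sigma$. The main obstacle sits here. If $\Sigma$ outputs a soft distribution in $\Delta(\mathcal C)$ and the prediction is a mixture $\sum_c\Sigma_c(x)F_c(x)$, the gradient is genuinely nonzero. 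I would handle this by stating that routing is hard, or sampled with the sample treated as a non-differentiated input, and remark that this is the sense in which ``trained only by mismatch signals'' must be read.

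For the second identity, $\nabla_{\theta_G}L_{\mathrm{trap}}=0$, I would check each component of $\theta_G$ in turn. The contraction maps $G_c^0$ are frozen by assumption~3, so their gradient is identically zero. The embedding $\phi$ is either frozen or trained only on the self-supervised coordinate loss by assumption~2, so $L_{\mathrm{trap}}$ does not depend on $\theta_\phi$ as a differentiated argument. For the readouts, the mismatch statistics feeding $L_{\mathrm{trap}}$ may be computed from outputs of $\pi_c$. I would therefore again specify that these are passed as detached quantities, so that $\partial L_{\mathrm{trap}}/\partial\theta_{\pi_c}=0$. Assumption~4 then ensures each $\pi_c$ receives updates only from its own routed funnel loss. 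Combining the blocks gives both identities. The argument is essentially bookkeeping of the chain rule; the real content is pinning down the hard-routing/stop-gradient convention in the first step.
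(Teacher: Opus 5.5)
Your proposal is correct and follows essentially the same route as the paper: a block-by-block check that the computational graph of each loss has no differentiable path into the other subsystem's parameters. The paper infers ``no differentiable path'' directly from the training assumptions, whereas you spell out the hard-routing/stop-gradient convention (and the detaching of mismatch statistics) that this inference implicitly relies on---a worthwhile clarification, since under soft mixture routing $\sum_c \Sigma_c(x)F_c(x)$ the identity $\nabla_{\theta_\Sigma}L_{\mathrm{funnel}}=0$ would genuinely fail.
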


\begin{remark}[Exclusion of unconstrained backbones]
If $\phi$ is freely updated by task loss, then old routing fibers may drift, corrupting the metric coordinates on which previous experts were trained.  The decoupling proposition requires either a fixed slingshot, a separately trained self-supervised slingshot, or a context-preserving update rule.
\end{remark}

\paragraph{The bidirectional bootstrap}

The slingshot might appear circular: to learn $\phi$, the system needs a navigational coordinate system, while to use the coordinate system, it needs the correct contexts.  The resolution is a bidirectional bootstrap in which spatial prediction trains the coordinate map, which structural indexing then uses.

\begin{definition}[Bidirectional bootstrap]
\label{def:bidirectional-bootstrap}
The bidirectional bootstrap consists of four stages:
1) \emph{Where-to-what}: learn $\phi:X\to Z$ by self-supervised prediction of latent or spatial coordinates;
 2) \emph{What-to-where}: embed task inputs by $z=\phi(x)$;
3) \emph{Trap discovery}: train $\Sigma$ and run CS width estimation in $Z$;
4) \emph{Funnel reuse}: solve local tasks using $\pi_c\circ G_c^0$ on the active latent patch.
\end{definition}

\begin{proposition}[No circular dependency]
\label{prop:no-circular-dependency}
The bidirectional bootstrap is well-defined and does not require prior knowledge of $w(P)$.
\end{proposition}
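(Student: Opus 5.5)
The plan is to show that the four stages of Definition~\ref{def:bidirectional-bootstrap} form a directed acyclic dependency chain. Each stage's inputs are either raw data or outputs of earlier stages. The width \(w(P)\) is never an input; it appears only as an output of stage 3. Concretely, I would list for each stage its trainable objects, its training signal, and the quantities it consumes. Then I would check two things: that each stage's optimization problem is well-posed (an objective exists and is finite), and that no stage consumes anything produced later.

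\emph{Stage 1 (where-to-what).} The map \(\phi\) is fit by a self-supervised objective that predicts latent or spatial coordinates from \(x\). This objective involves only the marginal of \(X\) (and spatial coordinates), not \(Y\), \(\Sigma\), \(\{G_c^0\}\), or any cell count. Given a minimizer, or any fixed output of the training procedure, \(\phi\) is a fixed map. By condition~2 of Proposition~\ref{prop:architectural-decoupling}, it is then frozen.

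\emph{Stage 2 (what-to-where).} This stage is just composition, \(z_i=\phi(x_i)\), and is trivially well-defined.

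\emph{Stage 3 (trap discovery).} Running CS width estimation in \(Z\) requires three ingredients:
\begin{itemize}
\item the embedded sample \(\{z_i\}\);
\item a current predictor \(G\), which can be initialized arbitrarily, e.g.\ with a single cell \(K=1\);
\item the hyperparameters \(r_x,\sigma_y,\tau_{\mathrm{spec}}\).
\end{itemize}
None of these encodes \(w\). The number of cells is then adjusted by the E-D-T split--merge cycle, which changes \(K\) by \(\pm1\) per step. By Theorem~\ref{thm:global-split-merge}, the cycle terminates at \(K=w\) from any initial partition; alternatively, the penalized structural ERM of Theorem~\ref{thm:structural-erm-consistency} selects \(\widehat K_n\) over a range \(K\le K_{\max}\). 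Thus \(w\) is \emph{discovered} rather than assumed. The only prior quantity needed is an upper bound \(K_{\max}\), which I will argue is a resource budget rather than knowledge of \(w\). If needed, \(K_{\max}\) can be taken as \(n\) or grown adaptively.

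\emph{Stage 4 (funnel reuse).} Once \(\Sigma\) routes inputs, each readout \(\pi_c\) is fit on samples routed to \(c\), composed with the frozen \(G_c^0\). Stage 4 feeds nothing back into stages 1--3, because by Proposition~\ref{prop:architectural-decoupling} there is no gradient path from \(L_{\mathrm{funnel}}\) into \(\theta_\Sigma\) or into \(\phi\). So the dependency graph has edges \(1\to2\to3\to4\) only.

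The main obstacle is stage 3. The split--merge cycle itself alternates between the predictor \(G\) and the partition, which looks like exactly the circularity the proposition is meant to exclude. I would handle this in two steps. First, observe that this inner loop is a fixed-point iteration \emph{within} stage 3: it uses only the frozen \(z_i\) and its own iterates, so it creates no dependency across stages. Second, invoke Theorem~\ref{thm:global-split-merge}, whose Lyapunov argument guarantees termination from any initialization without reference to \(w\); Theorem~\ref{thm:local-stability} additionally gives well-posedness of each iterate. Stated this way, the inner loop is a well-defined terminating procedure whose output includes \(w\), so the overall pipeline is acyclic and does not require prior knowledge of \(w(P)\).
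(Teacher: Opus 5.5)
Your core argument matches the paper's proof: a stage-by-stage check that each of the four stages uses only data, the learned $\phi$, the embedded sample, hyperparameters, and the routing discovered so far. So $w(P)$ is never an input and appears only as an output of trap discovery. That check alone proves the proposition, and the paper's proof stops there.

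What you add beyond that check is unnecessary, and as written it is not supported by the proposition's hypotheses.
\begin{itemize}
\item In stage 3 you make well-definedness depend on the split--merge loop terminating at $K=w$ (Theorem~\ref{thm:global-split-merge}) or on $\widehat K_n=w$ (Theorem~\ref{thm:structural-erm-consistency}). These need Assumption~\ref{ass:split-merge-regularity} or the ERM gap and identifiability conditions, none of which the proposition assumes. The paper deliberately separates the two issues: the bootstrap ``is well-defined, but its convergence requires additional stability assumptions''.
\item Both results also need $K_{\max}\ge w$, which is an a priori upper bound on $w$. Your fallback $K_{\max}=n$ breaks basin visibility: $P(B_j)\ge (K_{\max}+1)\varepsilon_0$ together with $P(B_j)\le 1$ forces $\varepsilon_0\le 1/(n+1)$, so the fixed detection threshold of that assumption cannot be kept. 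No result in the paper covers growing $K_{\max}$ adaptively.
\item Theorem~\ref{thm:local-stability} is a perturbation bound valid only near $G^\star$; it does not give well-posedness of the iterates.
\item Your acyclicity claim (edges $1\to2\to3\to4$ only) is justified by importing the hypotheses of Proposition~\ref{prop:architectural-decoupling}, and the proposition does not need it. The paper later runs the bootstrap as the alternation $\phi_{t+1}=\mathcal T(\phi_t,\Sigma_t)$ of Theorem~\ref{thm:local_bootstrap_convergence}, which has a feedback edge from stage 3 to stage 1. The conclusion still holds there, because the check is made stage by stage.
\end{itemize}
Drop the convergence and acyclicity claims and keep the stage-by-stage check; that is a complete proof.
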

\noindent
The bidirectional bootstrap is well-defined, but its convergence requires
additional stability assumptions. In particular, spatial coordinate prediction
must produce a task-useful low-distortion embedding, CS indexing must be stable
under small embedding perturbations, and subsequent updates of \(\phi\) must
preserve consolidated routing contexts. Without these assumptions, global convergence is false:
coordinate objectives may learn representations that are spatially accurate but
structurally useless, and unconstrained updates of \(\phi\) can corrupt old
routing contexts. Under these assumptions, the bootstrap
reduces to a locally contractive alternating procedure, suggesting local
convergence to a stable pair \((\phi,\Sigma)\) modulo context-preserving
transformations (Theorem \ref{thm:local_bootstrap_convergence}). 

\begin{theorem}[Local bootstrap convergence]
\label{thm:local_bootstrap_convergence}
Suppose the coordinate update and CS-indexing update satisfy the two local
stability inequalities above, with
$q_\Phi + C_\Phi C_\Sigma < 1$.
Then the bidirectional bootstrap converges linearly to the equivalence class
\(([\phi^\star],\Sigma^\star)\):
$\mathcal E_t \le q^t \mathcal E_0$.
\end{theorem}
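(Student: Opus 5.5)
The statement refers to "the two local stability inequalities above" without writing them down, so the first step is to fix them. Let $\mathcal E_t$ measure the distance of the current iterate from the fixed point. I write it as a sum of two terms, $\mathcal E_t := e^\Phi_t + e^\Sigma_t$. The first term is $e^\Phi_t := \inf_{h\in\mathcal H} d_\Phi(\phi_t,\, h\circ\phi^\star)$, the distance of the coordinate map to the orbit $[\phi^\star]$ under the group $\mathcal H$ of context-preserving transformations. The second term is $e^\Sigma_t := d_\Sigma(\Sigma_t,\Sigma^\star)$, a partition or routing distance, for instance $d_{\mathrm{part}}$ from Theorem~\ref{thm:local-stability}. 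The stability assumptions I take are:
\[
e^\Phi_{t+1}\le q_\Phi\, e^\Phi_t + C_\Phi\, e^\Sigma_{t+1},\qquad e^\Sigma_{t+1}\le C_\Sigma\, e^\Phi_t .
\]
In words, the coordinate update contracts toward $[\phi^\star]$ up to a perturbation driven by routing error. The CS indexing step is Lipschitz in the embedding error, which is the content of parts (1)--(2) of Theorem~\ref{thm:local-stability} transported through $\phi$ by Theorem~\ref{thm:contraction-transfer}. The order of updates matches Definition~\ref{def:bidirectional-bootstrap}: first index from the current $\phi$, then update $\phi$.

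The core computation is direct. Substituting the second inequality into the first gives $e^\Phi_{t+1}\le (q_\Phi + C_\Phi C_\Sigma)\, e^\Phi_t =: q\, e^\Phi_t$ with $q<1$. Iterating yields $e^\Phi_t\le q^t e^\Phi_0$, and then $e^\Sigma_{t+1}\le C_\Sigma q^t e^\Phi_0$. Summing the two terms, $\mathcal E_t\le (1+C_\Sigma/q)\,q^t\,\mathcal E_0$, which is linear convergence. To obtain exactly $\mathcal E_t\le q^t\mathcal E_0$ as stated, I would either define $\mathcal E_t$ as a weighted Lyapunov combination $e^\Phi_t + \beta e^\Sigma_t$ with $\beta$ chosen so the coupled recursion contracts by $q$, or simply define $\mathcal E_t := e^\Phi_t$. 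The weighted form needs $q_\Phi + C_\Phi C_\Sigma + \beta C_\Sigma\le q$. Since equality $q = q_\Phi + C_\Phi C_\Sigma$ leaves no room for $\beta>0$, I expect to fall back on $\mathcal E_t:=e^\Phi_t$, or to accept a constant prefactor.

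Two well-posedness points need checking. First, the infimum over $\mathcal H$ must be compatible with the updates: applying $h\in\mathcal H$ to $\phi$ has to leave the CS indexer output unchanged, because CS weights depend only on $d_Z$ and on predictions through the frozen $G^0_c$. This is why convergence holds only to the equivalence class. Second, the iterates must remain in the neighbourhood where the local inequalities hold. This follows by induction, since $\mathcal E_t$ is non-increasing once $\mathcal E_0$ lies within the stability radius.

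I expect the main obstacle to be that first point: showing that the quotient by context-preserving transformations is genuinely respected, so that $e^\Phi$ is a valid pseudometric with both the coordinate update and the CS step equivariant under $\mathcal H$. If equivariance fails, the contraction could act on a representative rather than on the class.
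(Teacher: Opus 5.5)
Your proposal is correct and is essentially the paper's proof: you substitute the indexing bound $b_t\le C_\Sigma a_t$ into the coordinate-update inequality to get $a_{t+1}\le(q_\Phi+C_\Phi C_\Sigma)a_t$, and then iterate. The paper pairs $\Sigma_t=\mathcal S(\phi_t)$ with $\phi_t$ and sets $\mathcal E_t:=\max\{a_t,b_t/C_\Sigma\}$, which collapses to $a_t$ (your fallback $\mathcal E_t:=e^\Phi_t$), so the constant prefactor you found is an artifact of using a sum with shifted indices, and the equivariance issue you flag does not arise because the hypotheses are stated directly in terms of $d_\Phi(\phi,[\phi^\star])$ for whatever representative is current.
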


\paragraph{Grid scales as optimal funnel coverage}
The convergence result explains how the slingshot can stabilize a task-specific coordinate system and its associated structural indexer.  It does not yet explain why a navigational substrate should contain \emph{multiple} contraction scales in the first place.  Once \(\phi\) and \(\Sigma\) have stabilized, the remaining funnel problem is metric coverage: inputs may induce displacements of widely varying length, so a single contraction radius is either too fine for long-range navigation or too coarse for local precision.  The next result shows that, under a natural scale-invariant distribution of path lengths, the optimal family of contraction scales is geometrically spaced.
The navigational system uses multiple metric scales.  In the slingshot theory, these scales solve a coverage problem: a finite family of contraction radii must cover a large dynamic range of path lengths while minimizing worst-case relative distortion.

\begin{theorem}[Grid spacing theorem]
\label{thm:grid-spacing}
Suppose path lengths are log-uniformly distributed on $[L_{\min},L_{\max}]$, with dynamic range $R=L_{\max}/L_{\min}$.  Among $M$ contraction scales $s_1<\cdots<s_M$, the minimax log-covering solution is geometric:
  $s_m=s_1r^{m-1},
  r^*=R^{1/(M-1)}$.
\end{theorem}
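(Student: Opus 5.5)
The natural route is to make the minimax criterion explicit and then use a standard equalization argument in logarithmic coordinates. Set $u=\log L$ on $[\log L_{\min},\log L_{\max}]$, an interval of length $\log R$; under the log-uniform assumption $u$ is uniform there. I will interpret the relative distortion of serving a path of length $L$ by scale $s$ as $|\log(L/s)|$. The worst-case log-covering cost of scales $s_1<\dots<s_M$ is then
\[
D(s_1,\dots,s_M)=\sup_{L\in[L_{\min},L_{\max}]}\ \min_m |\log L-\log s_m|,
\]
which is the covering radius of the point set $\{u_m=\log s_m\}$ in the interval. Because the problem is stated for a dynamic range rather than a fixed anchor, I take the scales to be anchored at the endpoints, $s_1=L_{\min}$ and $s_M=L_{\max}$, since scales must be available at both extremes of the navigational range. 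With this anchoring, the exponent $R^{1/(M-1)}$ in the statement is exactly what comes out. If the endpoints are left free, the optimum instead has $M$ equal gaps of length $\log R/M$ with half-gaps at the ends, and is still geometric with ratio $R^{1/M}$. I would prove the anchored version and note the free version as a remark.

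With endpoints anchored, let the gaps be $g_m=u_{m+1}-u_m\ge0$ for $m=1,\dots,M-1$, so that $\sum_m g_m=\log R$. A path lying between two consecutive scales is served by the nearer one, so the worst point in gap $m$ has cost $g_m/2$. Hence $D=\tfrac12\max_m g_m$. Since the maximum of the gaps is at least their average, $\max_m g_m\ge \log R/(M-1)$, with equality if and only if all gaps are equal. So the minimax value is $\log R/(2(M-1))$, and it is attained only by $u_m=u_1+(m-1)\log R/(M-1)$. Exponentiating gives $s_m=s_1 r^{m-1}$ with $r^*=R^{1/(M-1)}$, which establishes both existence and uniqueness of the geometric solution.

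I would add a remark on the role of log-uniformity. The minimax objective depends only on the support in log coordinates; the distributional assumption is what makes log-distance the natural, scale-invariant distortion measure. It also means the average-case criterion, namely expected squared log-error under uniform $u$, is minimized by the same equal spacing. That follows from a convexity (Jensen) argument on $\sum_m g_m^2$ subject to fixed $\sum_m g_m$.

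The main obstacle is not the calculation but pinning down the criterion. The statement does not define the distortion measure or the boundary convention, so the step I expect to be hardest is justifying $|\log(L/s)|$ as the distortion and the endpoint anchoring. For the first, I would argue from scale invariance: any distortion function of $L/s$ that is symmetric under $L/s\mapsto s/L$ and increasing in $|\log(L/s)|$ gives the same minimax configuration, because the equalization argument only uses monotonicity in log-distance. That removes most of the dependence on the specific choice.
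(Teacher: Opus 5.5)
Your proposal is correct and follows the same route as the paper. You pass to $u=\log L$, view the scales as points on an interval of length $\log R$, and show that the minimax configuration is uniform spacing with $u_1=\log L_{\min}$ and $u_M=\log L_{\max}$, so that $\Delta=\log R/(M-1)$ and $r^*=R^{1/(M-1)}$. The paper leaves this endpoint anchoring implicit and simply asserts that uniform spacing is minimax. Your explicit anchoring, together with the correct observation that free endpoints would give $R^{1/M}$, and your max-versus-average gap argument supply exactly the justification the paper omits. One small nit in your side remark: expected squared log-error scales like $\sum_m g_m^3$ rather than $\sum_m g_m^2$, though convexity still forces equal gaps.
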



\paragraph{Slingshot-reduced width estimation}

The trap estimator from Section~\ref{sec:width-estimation} can now be run in $Z$ rather than $X$, which is the second role of the slingshot: it reduces the geometric occupancy term in structural discovery.

\begin{corollary}[Slingshot-reduced width estimation]
\label{cor:slingshot-width-estimation}
If the CS operator is built in $Z$ using $d_Z$ and local scale $r_z$, then the uniform width-estimation bound becomes
  $n
  \ge
  C\cdot
  \frac{
  H_{\mathcal G}(\eta_{\mathrm{stab}}/2)+d_Z\log(1/r_z)+\log(1/\delta')
  }{r_z^{d_Z}g_{\mathrm{eff}}^2}$.
For a two-dimensional navigational latent space, the geometric occupancy cost is $r_z^{-2}$ rather than $r_x^{-d_X}$.
\end{corollary}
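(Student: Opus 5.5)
The plan is to obtain Corollary~\ref{cor:slingshot-width-estimation} by re-running Theorem~\ref{thm:uniform-width-estimation} with the metric space \((X,d_X)\) replaced by \((Z,d_Z)\). The sample is replaced by the embedded sample \(\{(\phi(x_i),y_i)\}\). The first step is to check that the hypotheses of that theorem survive the change of space. Because \((\phi,\Sigma)\) is fixed, either frozen after pretraining or at the fixed point of Theorem~\ref{thm:local_bootstrap_convergence}, the pairs \((\phi(x_i),y_i)\) are i.i.d.\ draws from the pushforward \(P_\phi\). The CS kernel of Definition~\ref{def:cs-kernel} built with \(\mathbf 1[d_Z(z_i,z_j)\le r_z]\) is therefore a legitimate CS kernel on a compact metric space of intrinsic dimension \(d_Z\). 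The predictor class is unchanged: it is the class of slingshot predictors \(\pi_c\circ G_c^0\) composed with the frozen \(\phi\). Hence the entropy term \(H_{\mathcal G}(\eta_{\mathrm{stab}}/2)\) carries over verbatim. The stability radius \(\eta_{\mathrm{stab}}\) is also unchanged, since Theorem~\ref{thm:local-stability} depends only on output bounds, local degree, and \(\sigma_y\), and none of these reference the ambient dimension.

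The second step is to retrace the three cost terms in the proof of Theorem~\ref{thm:uniform-width-estimation} and identify which ones depend on the metric space. The occupancy term comes from covering the space by \(r\)-balls, of which there are \(O(r^{-d})\), and requiring each ball to receive \(\Omega(\log(\cdot))\) samples so that local degrees concentrate. This yields \(d\log(1/r)\) in the numerator and the factor \(r^{d}\) in the denominator. In \(Z\) these become \(d_Z\log(1/r_z)\) and \(r_z^{d_Z}\). For this to work we need a lower density bound for \(P_\phi\) on \(r_z\)-balls of order \(r_z^{d_Z}\). I would derive it from the lower distortion \(\ell_\phi\) on each cell: an \(r_z\)-ball in \(Z\) pulls back to a set containing an \((r_z/L_\phi)\)-ball of \(U\). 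Alternatively, and more simply, I would state it directly as the Ahlfors-regularity condition implicit in "intrinsic dimension \(d_Z\)". The spectral resolution term \(g_{\mathrm{eff}}^{-2}\) comes from Davis--Kahan / Weyl perturbation via Theorem~\ref{thm:spectral-gap-amplification}. For this term I would argue that \(g_{\mathrm{eff}}\) is computed in \(Z\). Contraction transfer (Theorem~\ref{thm:contraction-transfer}) and the bounded width inflation of Theorem~\ref{thm:width-transfer} keep the within-cell conductance and cross-cell prediction gap \(\Delta_y\) of the same order, so the same symbol \(g_{\mathrm{eff}}\) (now the \(Z\)-gap) appears.

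The third step is to specialize to \(d_Z=2\). This gives occupancy cost \(r_z^{-2}\) in place of \(r_x^{-d_X}\), which is immediate.

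The main obstacle is the second step's claim that the relevant width and gap are preserved under \(\phi\). Theorem~\ref{thm:width-transfer} only bounds \(w_Z\) by a multiple of \(w\), so the estimator in \(Z\) consistently estimates \(w_Z(P_\phi)\) rather than \(w\) itself. I would handle this by reading the corollary as a statement about estimating \(w_G(P_\phi)\) uniformly over \(\mathcal G_K\), exactly parallel to Theorem~\ref{thm:uniform-width-estimation}. When \(\lambda L_X\le D_0^Z\), the ceiling in Theorem~\ref{thm:width-transfer} equals one and the two widths coincide.
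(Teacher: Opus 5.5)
Your proposal follows the paper's own argument: re-run Theorem~\ref{thm:uniform-width-estimation} in $(Z,d_Z)$, keep the entropy term, reinterpret $g_{\mathrm{eff}}$ as the gap of the CS graph built in $Z$, and replace only the occupancy factor $r_x^{-d_X}$ by $r_z^{-d_Z}$, with somewhat more hypothesis-checking than the paper's proof provides. One caution: your first justification of the mass lower bound does not work, because pulling an $r_z$-ball back through the upper distortion only gives $P_\phi(B_Z(\phi(x),r_z))\ge P\bigl(B_X(x,r_z/L_\phi(U))\cap U\bigr)$, which scales like $(r_z/L_\phi(U))^{d_X}$ and brings back exactly the $d_X$ exponent you are trying to remove, so rely instead on your second option, a $d_Z$-dimensional regularity assumption on $P_\phi$ itself, which is what the paper tacitly assumes.
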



\paragraph{The complete funnel theorem}

We can now state the role of the metric slingshot as a single theorem.  It is the funnel counterpart of the width-estimation theory: once the trap identifies the correct basin, the slingshot gives a low-dimensional contraction system, a local generalization bound, and an amortized inference path.

\begin{theorem}[Metric slingshot theorem for the funnel]
\label{thm:metric-slingshot-funnel}
Let $\{U_1,\dots,U_w\}$ be a structurally correct width-realizing cover of $P$.  Suppose each cell $U_c$ is routed to a latent patch $Z_c$ satisfying:
1) $\phi(U_c)\subseteq Z_c$;
2) $G_c^0$ is $\gamma_Z$-contractive on $Z_c$;
3) $\pi_c$ is $L_{\pi,c}$-Lipschitz with $L_{\pi,c}\gamma_ZL_\phi(U_c)<1$;
4) the latent realization error on $U_c$ is at most $\varepsilon_c$;
5) $n_c$ samples are correctly routed to $U_c$.
Then the slingshot predictor $F_c=\pi_c\circ G_c^0\circ\phi$ is contractive on $U_c$, has risk
  $R_c(F_c)
  \le
  R_c(f^*)+L_\ell\varepsilon_c+O\!\left(\sqrt{\frac{p_c\log n_c+\log(w/\delta)}{n_c}}\right)$,
with probability at least $1-\delta$ simultaneously over all $c$, and future inference inside a stabilized cell requires only a routed forward pass through
  $\pi_c\circ G_c^0\circ\phi$.
\end{theorem}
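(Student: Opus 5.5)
The plan is to assemble Theorem~\ref{thm:metric-slingshot-funnel} from the three transfer results already proved, one cell at a time, and then take a union bound over the $w$ cells. There are three parts: contraction, risk, and inference cost.

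\emph{Contraction.} For each $c$, hypotheses 1--3 are exactly the hypotheses of Theorem~\ref{thm:contraction-transfer} with $U=U_c$. That theorem gives that $F_c$ is $\gamma_X(U_c)$-contractive, with $\gamma_X(U_c)\le L_{\pi,c}\gamma_Z L_\phi(U_c)<1$. This part is immediate.

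\emph{Risk.} The bound splits into an approximation term and an estimation term.

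\begin{enumerate}
\item \emph{Approximation.} By hypothesis 4 and Definition~\ref{def:latent-realization-error}, there is a readout $\pi_c^\circ$ with $\sup_{x\in U_c} d_Y(\pi_c^\circ(G_c^0(\phi(x))),f^*(x))\le\varepsilon_c$. If the infimum is not attained, use $\varepsilon_c+\epsilon'$ and let $\epsilon'\to0$. Theorem~\ref{thm:risk-transfer-slingshot} then gives $R_c(\pi_c^\circ\circ G_c^0\circ\phi)\le R_c(f^*)+L_\ell\varepsilon_c$.

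\item \emph{Estimation.} Take $F_c$ to be the empirical risk minimizer over $\mathcal F_c$ on the $n_c$ correctly routed samples; hypothesis 5 ensures these are i.i.d.\ draws from $P_{U_c}$. Apply Theorem~\ref{thm:per-funnel-generalization} with $\delta_c=\delta/w$. The standard ERM decomposition then gives
\[
R_c(F_c)\le R_c(\pi_c^\circ\circ G_c^0\circ\phi)+2\sup_{F\in\mathcal F_c}|R_c(F)-\widehat R_c(F)|.
\]
The supremum is $O\bigl(\sqrt{(p_c\log n_c+\log(w/\delta))/n_c}\bigr)$ by the pseudo-dimension bound on the Rademacher term.

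\item \emph{Union bound.} Summing the failure probabilities over $c=1,\dots,w$ makes the bound hold with probability at least $1-\delta$ simultaneously over all cells. This mirrors Corollary~\ref{cor:oracle-rate-slingshot}.
\end{enumerate}

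\emph{Inference cost.} The claim about future inference is structural. Once the cell is stabilized, $\Sigma$ selects $c$, and the maps $\phi$ and $G_c^0$ are frozen, as in Proposition~\ref{prop:architectural-decoupling}. Prediction is therefore a single forward evaluation of the fixed composition $\pi_c\circ G_c^0\circ\phi$, with no further optimization.

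\emph{Main obstacle: contraction of the learned readout.} Contraction is shown for a readout with Lipschitz constant $L_{\pi,c}$, but the risk bound is for the ERM readout. Hypothesis 3 must apply to the readout actually selected. I would therefore restrict $\Pi_c$ to $L_{\pi,c}$-Lipschitz maps, so that both the near-optimal $\pi_c^\circ$ and the ERM output lie in a class where Theorem~\ref{thm:contraction-transfer} applies uniformly. This requires stating, as an implicit assumption, that the realizing readout in hypothesis 4 can be chosen $L_{\pi,c}$-Lipschitz.

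A second, smaller point is that the loss must be bounded in $[0,1]$, so that Theorem~\ref{thm:per-funnel-generalization} applies. This follows from the standing assumption on $\ell$ in the Notation paragraph.
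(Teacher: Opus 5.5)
Your proposal is correct and follows the paper's proof essentially step for step. It obtains contraction from Theorem~\ref{thm:contraction-transfer}, the approximation term from Theorem~\ref{thm:risk-transfer-slingshot} via the latent realization error, the estimation term from Theorem~\ref{thm:per-funnel-generalization} with $\delta_c=\delta/w$ and a union bound, and treats inference as a single routed forward pass. Your explicit ERM decomposition, together with restricting $\Pi_c$ to $L_{\pi,c}$-Lipschitz readouts that contain the realizing readout, makes rigorous two things the paper leaves implicit: it only says the approximation bound holds ``up to statistical estimation error,'' and it tacitly applies hypothesis 3 to the learned $\pi_c$.
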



\begin{remark}
The metric slingshot does not replace topological indexing.  It assumes that the trap has chosen the correct basin and then supplies the coordinates in which metric contraction is cheap.  In this sense, the slingshot is a theory of the funnel, while the CS operator and split--merge dynamics are the theory of the trap.
\end{remark}



\section{Summary: From Width to Estimation to the Metric Slingshot}
\label{sec:strlt-summary}

The core logic of StrLT is threefold. The starting point is the distinction between two forms of difficulty in learning: the \emph{trap} and the \emph{funnel}.  Width theory formalizes the trap.  The width \(w(P;\gamma,\delta)\) of a problem is the minimum number of locally contractive cells required to cover the input space.  This quantity is not reducible to classical capacity measures such as VC dimension (the VC-width separation theorem). The phase transition theorem makes this obstruction operational: when \(K<w\), any \(K\)-context learner suffers an irreducible structural gap \(\eta(w,K)>0\), independent of sample size; when \(K\ge w\), the problem decomposes into ordinary within-cell statistical learning problems.

The second step is to make width observable from data.  The CS operator corrects the failure of the graph Laplacian by combining geometric locality with predictive compatibility.  
The associated CS Laplacian \(L_{\mathrm{CS}}\) converts task-induced non-contractivity into spectral separation.  The spectral gap amplification theorem shows that cross-basin conductance is exponentially suppressed by the prediction gap \(\Delta_y\) relative to within-basin variation \(\delta_y\).  The local stability theorem then shows that small prediction error induces small spectral error, small spectral error induces small partition error, and retraining on the resulting cells improves the predictor up to finite-sample noise.  Finally, the uniform width-estimation theorem guarantees that, with sufficient samples, \(\widehat w_n(G)\) uniformly estimates the population structural width over the candidate predictor class. Together, these results make the CS operator the empirical engine of trap discovery.

The third step is to reduce the metric cost of the funnel.  The metric slingshot introduces a navigational latent space \((Z,d_Z)\) equipped with pre-built contraction maps \(G_c^0\).  A slingshot \((\phi,\Sigma,\pi)\) maps inputs \(x\in X\) into \(Z\), selects a latent contraction patch through the indexer \(\Sigma\), and produces predictions of the form
$F_c(x)=\pi_c\circ G_c^0\circ \phi(x)$.
The contraction transfer theorem shows that the local contraction constant on a routed cell \(U\) factors as
$\gamma_X(U)\le L_{\pi,c}\gamma_Z L_\phi(U)$. The slingshot solves the funnel when the product of embedding distortion, latent contraction, and readout complexity is below one.  Risk transfer then adds the statistical requirement that the contracted latent representation must retain enough information to approximate the Bayes predictor.  Consequently, the slingshot separates the funnel problem into three interpretable costs: geometric distortion, latent contraction, and readout approximation.

Combining the three yields the main architectural message of StrLT.  Width theory says that learning fails when the number of contexts \(K\) is below the intrinsic width \(w\).  CS-based width estimation provides the mechanism by which a \UM ~detects when the current cell is structurally mixed and must be split, or when two cells are compatible and can be merged.  The metric slingshot explains how, after the trap has been resolved, each local funnel can be solved efficiently by reusing a pre-existing navigational metric substrate.  In formula form, the theory decomposes the total difficulty of learning into a structural term and a metric term,
with the slingshot further reducing the geometric occupancy cost in width estimation by replacing \(d_X\) with the intrinsic latent dimension \(d_Z\). StrLT does not replace SLT but complements it.  Classical generalization theory governs the funnel after the correct context has been identified, while StrLT governs the trap: how many contexts are needed, how they are discovered, and how they are stabilized.



\bibliography{ref,references}

\appendix
\section{Proofs for Section~\ref{sec:width-theory}}
\label{app:width-theory-proofs}

This appendix provides proofs for the propositions and theorems in
Section~\ref{sec:width-theory}.  Several results require standard
regularity assumptions that are implicit in the main text; we state them
explicitly here when they are used.

\subsection{Proof of Proposition~\ref{prop:width-basic}}

\begin{proof}
We prove the three claims in order.

\smallskip
\noindent
\textbf{(1) Monotonicity in \(\gamma\) and \(\delta\).}
Assume
$
\gamma_1\le \gamma_2,
\qquad
\delta_1\le \delta_2.
$
Let
$
\{U_1,\dots,U_K\}
$
be any open cover witnessing that
$
w(P;\gamma_1,\delta_1)\le K.
$
Thus, for each \(k\), there exists \(g_k\in\mathcal G\) such that
$
\kappa(g_k,U_k)\le \gamma_1
\qquad\text{and}\qquad
\mathbb E[\ell(g_k(X),Y)\mid X\in U_k]\le \delta_1.
$
Since \(\gamma_1\le \gamma_2\) and \(\delta_1\le \delta_2\), the same
predictors also satisfy
$
\kappa(g_k,U_k)\le \gamma_2
\qquad\text{and}\qquad
\mathbb E[\ell(g_k(X),Y)\mid X\in U_k]\le \delta_2.
$
Therefore the same cover is feasible at thresholds \((\gamma_2,\delta_2)\).
Taking the minimum over all feasible covers at \((\gamma_1,\delta_1)\) gives
$
w(P;\gamma_2,\delta_2)\le w(P;\gamma_1,\delta_1).
$

\smallskip
\noindent
\textbf{(2) Finiteness under finite local solvability.}
By assumption, there exists a finite open cover
$
\{V_1,\dots,V_m\}
$
of \(X\) such that on each \(V_i\), the target admits a continuous local
predictor \(g_i\in\mathcal G\). Since \(X\) is compact and each \(V_i\) may
be replaced, if necessary, by a relatively compact open subset of \(V_i\)
still forming a finite cover, the functions \(g_i\) are uniformly continuous
on the relevant closures. Hence each \(g_i\) has finite local modulus on
\(V_i\) after possibly shrinking the sets, and its conditional risk is finite
because \(\ell\in[0,1]\). Therefore, for sufficiently relaxed thresholds
\((\gamma,\delta)\), each \(V_i\) is \((\gamma,\delta)\)-contractive.
Thus
$
w(P;\gamma,\delta)\le m<\infty.
$
Strictly speaking, if \(\gamma\) is fixed in \((0,1)\), this argument requires
that the local predictors be \(\gamma\)-contractive on the chosen cover.
The statement in the main text should therefore be read as a finiteness
claim for thresholds relaxed enough to include the available local predictors.

\smallskip
\noindent
\textbf{(3) Characterization of width one.}
First suppose
$
w(P;\gamma,\delta)=1.
$
Then there exists an open cover with one set, necessarily \(U_1=X\), and a
predictor \(g\in\mathcal G\) such that
$
\kappa(g,X)\le\gamma,
\qquad
\mathbb E[\ell(g(X),Y)]\le\delta.
$
Hence the entire problem admits a global \((\gamma,\delta)\)-contractive
predictor.
Conversely, if there exists \(g\in\mathcal G\) satisfying
$
\kappa(g,X)\le\gamma,
\qquad
\mathbb E[\ell(g(X),Y)]\le\delta,
$
then the one-set cover \(\{X\}\) is feasible. Therefore
$
w(P;\gamma,\delta)\le 1.
$
Since width is a positive integer, \(w(P;\gamma,\delta)=1\).
\end{proof}

\subsection{Proof of Theorem~\ref{thm:strict-hierarchy}}

\begin{proof}
Let
$
M=\bigvee_{j=1}^w S^1_j
$
be the bouquet of \(w\) circles with common basepoint \(x_0\), and write
\(C_j\) for the \(j\)-th circle.

\smallskip
\noindent
\textbf{Upper bound: \(w(P;\gamma,\delta)\le w\).}
By assumption, for each \(j\in[w]\), there exists a neighborhood \(U_j\)
of \(C_j\) and a predictor \(g_j\in\mathcal G\) such that \(U_j\) is
\((\gamma,\delta)\)-contractive. Choosing the neighborhoods so that
$
M\subseteq \bigcup_{j=1}^w U_j,
$
we obtain an open cover of \(M\) by \(w\) \((\gamma,\delta)\)-contractive
sets. Hence
$
w(P;\gamma,\delta)\le w.
$

\smallskip
\noindent
\textbf{Lower bound: \(w(P;\gamma,\delta)\ge w\).}
Suppose, toward a contradiction, that
$
w(P;\gamma,\delta)\le w-1.
$
Then there exists an open cover
$
\mathcal U=\{U_1,\dots,U_{w-1}\}
$
of \(M\) by \((\gamma,\delta)\)-contractive sets.
Since the common basepoint \(x_0\in M\) is covered by \(\mathcal U\), at
least one set, say \(U_\ell\), contains \(x_0\). Because \(U_\ell\) is open,
it contains a neighborhood of \(x_0\) in the bouquet topology. Any
neighborhood of \(x_0\) in the bouquet intersects each incident circle
\(C_j\) in a nontrivial arc adjacent to \(x_0\). In particular, \(U_\ell\)
intersects at least two distinct circles, say \(C_a\) and \(C_b\), in
neighborhoods of \(x_0\).
By assumption (2) of the theorem, no predictor in \(\mathcal G\) is
simultaneously \((\gamma,\delta)\)-contractive on any such open set.
Thus \(U_\ell\) cannot be \((\gamma,\delta)\)-contractive, contradicting
the feasibility of the cover.
Therefore no cover by fewer than \(w\) feasible sets exists, and
$
w(P;\gamma,\delta)\ge w.
$
Combining the two inequalities gives
$
w(P;\gamma,\delta)=w.
$
Finally, for a bouquet of \(w\) circles,
$
\beta_1(M)=w,
$
because \(M\) deformation retracts onto a graph with one vertex and \(w\)
independent cycles. Hence
$
w(P;\gamma,\delta)=w=\beta_1(M).
$
\end{proof}

\subsection{Proof of Theorem~\ref{thm:vc-width-separation}}

\begin{proof}
We prove the two separations separately.

\smallskip
\noindent
\textbf{(1) Width diverges while VC dimension remains bounded.}
For each \(w\ge 1\), let
$
M_w=\bigvee_{j=1}^w S^1_j
$
be a bouquet of \(w\) circles. Let \(P_w\) be a learning problem on \(M_w\)
satisfying the two assumptions of Theorem~\ref{thm:strict-hierarchy}.
Then
$
w(P_w;\gamma,\delta)=w.
$
Let \(\mathcal G\) be any fixed predictor class of bounded VC dimension;
for example, take affine threshold functions in \(\mathbb R^2\), whose VC
dimension is \(3\). Since \(\mathcal G\) is fixed as \(w\to\infty\),
$
\operatorname{VC}(\mathcal G)=O(1),
$
while
$
w(P_w)\to\infty.
$
This proves the first separation.

\smallskip
\noindent
\textbf{(2) VC dimension diverges while width remains one.}
Let
$
X=[0,1]
$
with its usual metric. For each \(d\), let \(\mathcal G_d\) be the class
of real polynomials of degree at most \(d\), or equivalently polynomial
threshold functions of degree at most \(d\). It is standard that
$
\operatorname{VC}(\mathcal G_d)\ge d+1,
$
and hence
$
\operatorname{VC}(\mathcal G_d)\to\infty
\qquad\text{as }d\to\infty.
$
Now define a problem \(Q_d\) whose target is globally contractive. For
example, take the regression target
$
f^\star(x)=\lambda x
$
with \(0\le \lambda<\gamma<1\), and use a loss for which this predictor
achieves risk at most \(\delta\). Since \(f^\star\in\mathcal G_d\) for all
\(d\ge1\), the whole domain \(X\) is one \((\gamma,\delta)\)-contractive
cell:
$
\kappa(f^\star,X)=\lambda<\gamma,
\qquad
\mathbb E[\ell(f^\star(X),Y)]\le\delta.
$
Thus
$
w(Q_d;\gamma,\delta)=1
$
for all \(d\), while
$
\operatorname{VC}(\mathcal G_d)\to\infty.
$

The two constructions show that neither width nor VC dimension controls
the other. Hence the two quantities are orthogonal complexity measures.
\end{proof}

\subsection{Proof of Theorem~\ref{thm:phase-transition}}

The theorem in the main text uses the informal notation
\(\mathrm{Gap}(n;K)\). For the proof, we make explicit the standard
interpretation: for \(K\ge w\), the gap is the uniform generalization gap
over the \(K\) local classes, and for \(K<w\), the gap is the population
excess risk caused by under-resolving the true basins.

\begin{proof}
Let \(w=w(P;\gamma,\delta)\).

\smallskip
\noindent
\textbf{Part (1): \(K\ge w\).}
By the definition of width, there exists an open cover
$
\{U_1^\star,\dots,U_w^\star\}
$
of \(X\) such that each \(U_j^\star\) is \((\gamma,\delta)\)-contractive.
If \(K\ge w\), then the learner has at least enough structural slots to
assign one cell to each width-realizing basin. Additional cells, if present,
may be unused or may refine existing cells.
Conditioned on the correct structural allocation, prediction reduces to
\(K\) ordinary within-cell learning problems. Let \(\mathcal Z_c\) denote
the effective local hypothesis class or representation class inside cell
\(c\). Assume that for each cell the loss class has covering number
\(N(\mathcal Z_c,\varepsilon)\) at scale \(\varepsilon\), and that the
bounded loss \(\ell\in[0,1]\) satisfies the usual uniform convergence
bound. Then, for a fixed cell \(c\), with probability at least
\(1-\delta'/K\),
$
\sup_{g\in\mathcal Z_c}
\left|
R_c(g)-\widehat R_c(g)
\right|
\le
C\sqrt{\frac{\log N(\mathcal Z_c,\varepsilon)+\log(K/\delta')}{n_c}},
$
where \(n_c\) is the number of samples assigned to cell \(c\).
Assuming balanced cell sampling, or more generally absorbing the minimum
cell mass into the constant, one has \(n_c\gtrsim n/K\). Applying a union
bound over the \(K\) cells yields, with probability at least \(1-\delta'\),
$
\mathrm{Gap}(n;K)
\le
C
\sqrt{
\frac{K\log N(\mathcal Z_c,\varepsilon)+\log(K/\delta')}{n}
}.
$
This proves the metric-rate bound in the regime \(K\ge w\).

\smallskip
\noindent
\textbf{Part (2): \(K<w\).}
Suppose \(K<w\). Since \(w\) is the minimum number of
\((\gamma,\delta)\)-contractive cells needed to cover \(X\), no \(K\)-cell
cover can assign each true basin to its own feasible cell. Equivalently,
every \(K\)-cell structural allocation must contain at least one cell that
mixes points from two distinct true basins. This is the pigeonhole
obstruction: \(K\) cells cannot separate \(w\) incompatible basins.
By Assumption~\ref{ass:nondegeneracy}, any such forced mixing incurs
population excess risk at least
$
\eta(w,K)>0.
$
This lower bound is a property of the population problem and does not
depend on the sample size \(n\). Therefore, for every \(n\),
$
\mathrm{Gap}(n;K)\ge \eta(w,K)>0.
$
This proves the structural error floor for \(K<w\).
\end{proof}

\subsection{Proof of Theorem~\ref{thm:topology-geometry}}

As stated in the main text, the theorem requires a mild independence or
bounded-overlap condition on the fundamental cycles. Without such a
condition, one contractive set could intersect several cycles at once, and
the multiplicative factor \(\beta_1(M)\) would not follow. We state the
regularized form used in the proof.

\begin{assumption}[Cycle independence / bounded overlap]
\label{ass:cycle-overlap}
There exist \(\beta_1(M)\) rectifiable representatives
$
C_1,\dots,C_{\beta_1(M)}
$
of independent fundamental cycles, each of length at least \(L\), such that
any set of diameter at most \(D_0\) intersects at most \(\Delta_0\) of these
cycle representatives in a length-relevant way.
\end{assumption}

Under this assumption, the theorem holds with \(c=1/\Delta_0\), up to
absolute constants depending on the convention for diameter versus arc length.

\begin{proof}
Let
$
\mathcal U=\{U_1,\dots,U_K\}
$
be any \((\gamma,\delta)\)-contractive cover of \(M\). By assumption, each
\((\gamma,\delta)\)-contractive set has diameter at most \(D_0\).
Fix one cycle representative \(C_j\). Since \(C_j\) has length at least
\(L\), and each set \(U_i\) has diameter at most \(D_0\), the intersection
of \(U_i\) with \(C_j\) can cover at most \(O(D_0)\) arclength of \(C_j\).
Thus covering \(C_j\) requires at least
$
c_0\frac{L}{D_0}
$
sets, for a universal constant \(c_0>0\). Equivalently,
$
\#\{i:U_i\cap C_j\neq\emptyset\}
\ge
c_0\frac{L}{D_0}.
$
Summing this inequality over the \(\beta_1(M)\) independent cycle
representatives gives
$
\sum_{j=1}^{\beta_1(M)}
\#\{i:U_i\cap C_j\neq\emptyset\}
\ge
c_0\,\beta_1(M)\frac{L}{D_0}.
$
On the other hand, by the bounded-overlap assumption, each set \(U_i\)
can contribute to at most \(\Delta_0\) of the cycle counts. Therefore
$
\sum_{j=1}^{\beta_1(M)}
\#\{i:U_i\cap C_j\neq\emptyset\}
\le
K\Delta_0.
$
Combining the two inequalities yields
$
K\Delta_0
\ge
c_0\,\beta_1(M)\frac{L}{D_0}.
$
Hence
$
K
\ge
\frac{c_0}{\Delta_0}\,
\beta_1(M)\frac{L}{D_0}.
$
Since \(\mathcal U\) was an arbitrary feasible cover, taking the minimum
over all such covers gives
$
w(P;\gamma,\delta)
\ge
c\,\beta_1(M)\frac{L}{D_0},
$
with
$
c=\frac{c_0}{\Delta_0}>0.
$
\end{proof}

\subsection{Proof of Theorem~\ref{thm:sample-lower-bound}}

\begin{proof}
We prove the result by a coupon-collector reduction.
Consider a worst-case distribution \(P\) with \(w\) contractive basins
$
B_1,\dots,B_w
$
of equal probability mass:
$
P(X\in B_j)=\frac1w,
\quad j=1,\dots,w.
$
Assume that identifying all \(w\) basins requires observing at least one
sample from each basin. This assumption is unavoidable in the worst case:
if a basin has never been sampled, then an algorithm cannot distinguish
between a problem with that basin and a problem in which that basin is
absent or merged with another basin, while producing the same observations.
Let \(T\) be the number of samples needed until all \(w\) basins have been
observed at least once. This is exactly the classical coupon-collector
problem with \(w\) equally likely coupons. The expected collection time is
$
\mathbb E[T]
=
w H_w
=
w\sum_{j=1}^w \frac1j
=
\Theta(w\log w).
$
Therefore, any algorithm that identifies all basins in this worst-case
family requires \(\Omega(w\log w)\) samples in expectation.
To obtain a probability lower bound, recall the standard coupon-collector
estimate: for \(n=cw\log w\) with sufficiently small absolute constant
\(c>0\), the probability that all \(w\) coupons have appeared is bounded
away from one. Equivalently, with probability bounded away from zero, at
least one basin remains unseen. On that event, no algorithm can certify the
existence and identity of all \(w\) basins.
So any algorithm that identifies all \(w\) contractive basins with
probability bounded away from zero in the worst case must use at least
$
\Omega(w\log w)
$
samples.
\end{proof}

\section{Proofs for Section~\ref{sec:width-estimation}}
\label{app:layer1-proofs}

This appendix gives proofs for the results in Section~\ref{sec:width-estimation}.
Several statements in the main text are intentionally compressed; the proofs below
make explicit the regularity assumptions needed for the bounds.

\subsection{Proof of Theorem~\ref{thm:amortized}}

\begin{proof}
Let \(\partial\tau\) denote the decision boundary of \(\tau\), and assume that
$
\Width(\tau)=\mathcal H^{d-1}(\partial\tau)<\infty .
$
Let \(\{B_1,\dots,B_k\}\) be a collection of basis triples used to approximate
\(f\) to accuracy \(\varepsilon\). The lower bound requires the following
standard admissibility condition: there exists a geometric constant \(C>0\)
such that a single basis triple can resolve at most \(C\varepsilon\) units of
\((d-1)\)-dimensional boundary measure at accuracy \(\varepsilon\). Equivalently,
if \(A_i\subseteq \partial\tau\) is the portion of the decision boundary resolved
by \(B_i\), then
$
\mathcal H^{d-1}(A_i)\le C\varepsilon .
$
Since the collection \(\{B_i\}_{i=1}^k\) approximates \(f\) to accuracy
\(\varepsilon\) on all of \(\Sigma\), every boundary portion of \(\partial\tau\)
must be resolved by at least one basis triple. Thus
$
\partial\tau \subseteq \bigcup_{i=1}^k A_i .
$
By subadditivity of Hausdorff measure,
$
\Width(\tau)
=
\mathcal H^{d-1}(\partial\tau)
\le
\sum_{i=1}^k \mathcal H^{d-1}(A_i)
\le
\sum_{i=1}^k C\varepsilon
=
kC\varepsilon .
$
Rearranging gives
$
k\ge \frac{\Width(\tau)}{C\varepsilon}.
$
\end{proof}

\subsection{Proof of Theorem~\ref{thm:geodesic}}

\begin{proof}
The statement assumes that the total decision-boundary measure \(\Width\) is
distributed evenly across the \(d\) stack levels, so that the effective boundary
measure handled at each level is \(\Width/d\). More generally, one may write
$
\Width=\sum_{i=1}^d L_i,
$
where \(L_i\ge0\) is the amount of boundary structure handled at stack level \(i\).
At level \(i\), class-aware contraction shrinks within-class geodesic distances
by a factor \(\lambda_i<1\). The contribution of level \(i\) to the
contracted inference path length is bounded by
$
\lambda_i L_i .
$
Summing over the stack gives
$
\operatorname{PathLen}
\le
\sum_{i=1}^d \lambda_i L_i .
$
Under the equi-partition assumption \(L_i=\Width/d\), this becomes
$
\operatorname{PathLen}
\le
\sum_{i=1}^d \lambda_i\frac{\Width}{d}.
$
If \(\lambda_i=\lambda<1\) for all \(i\), then
$
\operatorname{PathLen}
\le
\lambda\sum_{i=1}^d\frac{\Width}{d}
=
\lambda \Width
<
\Width .
$
\end{proof}

\subsection{Proof of Proposition~\ref{prop:laplacian-blindness}}

\begin{proof}
Let
$
M=\bigvee_{j=1}^w S_j^1
$
be a bouquet of \(w\ge2\) circles, with common basepoint \(x_0\). As a topological
space, \(M\) is connected. For a sufficiently dense sample cloud and a spatial
graph radius \(r_x\) large enough to connect neighboring samples along each circle
and through the basepoint, the resulting proximity graph is connected with high
probability. Therefore the standard graph Laplacian has exactly one zero
eigenvalue in the ideal connected graph case, and exactly one near-zero eigenvalue
in the finite-sample perturbative case.
However, by the strict hierarchy theorem for bouquets from Section~\ref{sec:width-theory},
if each circle is individually \((\gamma,\delta)\)-contractive but no predictor is
simultaneously \((\gamma,\delta)\)-contractive across two distinct circles near
the basepoint, then
$
w(P;\gamma,\delta)=w .
$
Thus the plain graph Laplacian estimates connectedness of the support, which is
one, whereas the true structural width is \(w\). Since \(w\) can be made
arbitrarily large while the proximity graph remains connected, the plain graph
Laplacian cannot be a consistent estimator of width in general.
\end{proof}

\subsection{Proof of Theorem~\ref{thm:spectral-gap-amplification}}

\begin{proof}
Let \(B_1,\dots,B_w\) denote the true contractive basins. Write
$
A_{ij}:=\mathbf 1[d_X(x_i,x_j)\le r_x]
$
for the underlying spatial adjacency, and define
$
W_{ij}^{\mathrm{CS}}
=
A_{ij}\exp\!\left(
-\frac{|G(x_i)-G(x_j)|^2}{\sigma_y^2}
\right).
$
Assume that within a true basin, the prediction variation is at most
\(\delta_y\):
$
x_i,x_j\in B_\ell,\ A_{ij}=1
\quad\Longrightarrow\quad
|G(x_i)-G(x_j)|\le \delta_y ,
$
and across distinct basins, the prediction gap is at least \(\Delta_y\):
$
x_i\in B_\ell,\ x_j\in B_m,\ \ell\neq m,\ A_{ij}=1
\quad\Longrightarrow\quad
|G(x_i)-G(x_j)|\ge \Delta_y .
$
Then same-basin CS weights satisfy
$
W_{ij}^{\mathrm{CS}}
\ge
A_{ij}\exp\!\left(-\frac{\delta_y^2}{\sigma_y^2}\right),
$
whereas cross-basin CS weights satisfy
$
W_{ij}^{\mathrm{CS}}
\le
A_{ij}\exp\!\left(-\frac{\Delta_y^2}{\sigma_y^2}\right).
$
Thus the ratio of a cross-basin CS weight to a within-basin CS weight is at most
$
\frac{\exp(-\Delta_y^2/\sigma_y^2)}
{\exp(-\delta_y^2/\sigma_y^2)}
=
\exp\!\left(
-\frac{\Delta_y^2-\delta_y^2}{\sigma_y^2}
\right).
$
This proves the claimed exponential suppression of inter-cell conductance.
It remains to relate this suppression to the spectral gap. Let
\(\phi_{\mathrm{in}}\) denote a uniform lower bound on the conductance of the
within-basin CS subgraphs, and let \(\phi_{\mathrm{out}}\) denote the
inter-basin conductance of the underlying spatial graph before CS reweighting.
After CS reweighting, the effective inter-basin conductance is bounded by
$
\phi_{\mathrm{out}}^{\mathrm{CS}}
\le
\exp\!\left(
-\frac{\Delta_y^2-\delta_y^2}{\sigma_y^2}
\right)
\phi_{\mathrm{out}} .
$
In the ideal block-disconnected graph obtained by removing all inter-basin
edges, the first \(w\) eigenvalues of the normalized Laplacian are zero, and
the \((w+1)\)-st eigenvalue is controlled from below by the worst within-block
expansion. By the higher-order Cheeger inequality \citep{chung2005laplacians} and standard block-expander
estimates, there exists a universal constant \(c_1>0\) such that
$
\lambda_{w+1}^{\mathrm{ideal}}
\ge
c_1\frac{\phi_{\mathrm{in}}^2}{w^4}.
$
Adding back the exponentially suppressed cross-basin edges perturbs the
normalized Laplacian. By Weyl's inequality \citep{weyl1949inequalities}, the decrease of the relevant
spectral gap is at most a universal constant times the inter-basin conductance
perturbation:
$
\|L_{\mathrm{CS}}-L_{\mathrm{ideal}}\|_{\mathrm{op}}
\le
C_1
\exp\!\left(
-\frac{\Delta_y^2-\delta_y^2}{\sigma_y^2}
\right)
\phi_{\mathrm{out}} .
$
Therefore the effective spectral gap satisfies
$
g_{\mathrm{eff}}
\ge
c_1\frac{\phi_{\mathrm{in}}^2}{w^4}
-
C_1
\exp\!\left(
-\frac{\Delta_y^2-\delta_y^2}{\sigma_y^2}
\right)\phi_{\mathrm{out}} .
$
\end{proof}

\subsection{Proof of Theorem~\ref{thm:local-stability}}

\begin{proof}
Let
$
\eta:=\|G-G^\star\|_\infty .
$
Assume that \(|G(x)|,|G^\star(x)|\le B_Y\) for all sample points, that each
sample point has at most \(s_x\) spatial neighbors inside radius \(r_x\), and
that the normalized degrees are uniformly conditioned by a constant
\(C_{\deg}\). Let \(W(G)\) and \(W(G^\star)\) denote the two CS matrices.
First, for a fixed edge \((i,j)\), set
$
a_{ij}=|G(x_i)-G(x_j)|,
\quad
a^\star_{ij}=|G^\star(x_i)-G^\star(x_j)|.
$
Since \(\|G-G^\star\|_\infty\le\eta\),
$
|a_{ij}-a^\star_{ij}|\le 2\eta .
$
Moreover \(a_{ij},a^\star_{ij}\le 2B_Y\). Hence
$
|a_{ij}^2-(a^\star_{ij})^2|
=
|a_{ij}-a^\star_{ij}|\,|a_{ij}+a^\star_{ij}|
\le
2\eta\cdot 4B_Y
=
8B_Y\eta .
$
The function \(u\mapsto \exp(-u/\sigma_y^2)\) is \(1/\sigma_y^2\)-Lipschitz
on \(u\ge0\). Therefore
$
|W_{ij}(G)-W_{ij}(G^\star)|
\le
\frac{8B_Y}{\sigma_y^2}\eta .
$
Because each row has at most \(s_x\) nonzero entries, and degree normalization
is uniformly conditioned, the induced perturbation of normalized Laplacians
satisfies
$
\|L_{\mathrm{CS}}(G)-L_{\mathrm{CS}}(G^\star)\|_{\mathrm{op}}
\le
\frac{8B_Y s_x C_{\deg}}{\sigma_y^2}\eta .
$
By Weyl's eigenvalue perturbation inequality,
$
|\lambda_k(L_{\mathrm{CS}}(G))-\lambda_k(L^\star)|
\le
\frac{8B_Y s_x C_{\deg}}{\sigma_y^2}\eta ,
$
which proves spectral stability.
For partition stability, assume that the population or reference CS Laplacian
has eigengap \(g^\star>0\) separating the structural eigenspace from its
orthogonal complement. By the Davis-Kahan sin-\(\Theta\) theorem \citep{davis1970rotation},
$
d_{\mathrm{subspace}}(E(G),E^\star)
\le
\frac{
\|L_{\mathrm{CS}}(G)-L^\star\|_{\mathrm{op}}
}{g^\star}.
$
If the spectral clustering map from eigenspaces to partitions is locally
Lipschitz with constant \(C_\Pi/\tau^\star\), where \(\tau^\star\) is the
cluster-separation margin in the spectral embedding, then
$
d_{\mathrm{part}}(\Pi(G),\Pi^\star)
\le
\frac{C_\Pi}{\tau^\star}
\frac{
\|L_{\mathrm{CS}}(G)-L^\star\|_{\mathrm{op}}
}{g^\star}.
$
Substituting the spectral perturbation bound gives
$
d_{\mathrm{part}}(\Pi(G),\Pi^\star)
\le
\frac{C_\Pi}{\tau^\star}
\frac{8B_Y s_x C_{\deg}}{g^\star\sigma_y^2}\eta .
$
Finally, assume that the within-cell learning map is locally Lipschitz in
partition distance with constant \(L_{\mathrm{learn}}\), and that using \(m\)
samples per cell contributes statistical error
$
b_m=O\!\left(\sqrt{\frac{\log(K/\delta')}{m}}\right).
$
Then the next predictor satisfies
$
\eta_{t+1}
\le
L_{\mathrm{learn}}\,d_{\mathrm{part}}(\Pi(G_t),\Pi^\star)+b_m.
$
Using the partition stability bound yields
$
\eta_{t+1}
\le
L_{\mathrm{learn}}
\frac{C_\Pi}{\tau^\star}
\frac{8B_Y s_x C_{\deg}}{g^\star\sigma_y^2}\eta_t
+
b_m .
$
Therefore, we conclude with
$
\eta_{t+1}\le \alpha\eta_t+b_m,
$
with
$
\alpha=
L_{\mathrm{learn}}
\frac{C_\Pi}{\tau^\star}
\frac{8B_Y s_x C_{\deg}}{g^\star\sigma_y^2}.
$
\end{proof}

\subsection{Proof of Theorem~\ref{thm:uniform-width-estimation}}

\begin{proof}
Let \(\mathcal G_K\) be the predictor class under consideration. The proof has
three components: a finite cover of \(\mathcal G_K\), concentration of the
empirical CS operator for each cover element, and stability of the eigenvalue
count under a spectral gap.
Let \(\mathcal N\) be an \(\eta_{\mathrm{stab}}/2\)-net of \(\mathcal G_K\)
in \(L_\infty\), with
$
|\mathcal N|
\le
\exp\bigl(H_{\mathcal G}(\eta_{\mathrm{stab}}/2)\bigr).
$
For any \(G\in\mathcal G_K\), choose \(G_0\in\mathcal N\) with
$
\|G-G_0\|_\infty\le \eta_{\mathrm{stab}}/2.
$
By the local stability theorem (Theorem \ref{thm:local-stability}), the choice of $G_0$ changes the CS eigenvalues by less than
half the stability margin, so it suffices to control the empirical CS spectrum
on the finite net.
Fix \(G_0\in\mathcal N\). The CS graph uses local neighborhoods of radius
\(r_x\). Under standard random geometric graph occupancy assumptions on a
\(d_X\)-dimensional compact metric space, a sample size
$
n
\gtrsim
\frac{d_X\log(1/r_x)+\log(1/\delta_0)}{r_x^{d_X}}
$
ensures that all \(r_x\)-balls needed by the population cover are populated
with probability at least \(1-\delta_0\). Conditional on this occupancy event,
matrix concentration for kernel random graphs gives
$
\|L_{\mathrm{CS},n}(G_0)-L_{\mathrm{CS}}(G_0)\|_{\mathrm{op}}
\le
\frac{g_{\mathrm{eff}}}{2}
$
with probability at least \(1-\delta_0\), provided
$
n
\gtrsim
\frac{\log(1/\delta_0)}{r_x^{d_X}g_{\mathrm{eff}}^2}.
$
The factor \(r_x^{-d_X}\) is the effective inverse local sample size, and
\(g_{\mathrm{eff}}^{-2}\) is the usual spectral-resolution cost.
Choose
$
\delta_0
=
\frac{\delta'}{|\mathcal N|}.
$
A union bound over all \(G_0\in\mathcal N\) gives, with probability at least
\(1-\delta'\),
$
\sup_{G_0\in\mathcal N}
\|L_{\mathrm{CS},n}(G_0)-L_{\mathrm{CS}}(G_0)\|_{\mathrm{op}}
\le
\frac{g_{\mathrm{eff}}}{2}.
$
By Weyl's inequality \citep{weyl1949inequalities}, no eigenvalue can cross the spectral threshold
\(\tau_{\mathrm{spec}}\) as long as the perturbation is smaller than the
spectral margin \(g_{\mathrm{eff}}/2\). Therefore, the empirical eigenvalue count
\(\widehat w_n(G_0)\) equals the population width count \(w_{G_0}(P)\) for all
net elements. The stability margin then extends the equality from the net to
all \(G\in\mathcal G_K\).
Collecting the entropy, occupancy, and spectral-resolution terms, it suffices
that
$
n
\ge
C
\frac{
H_{\mathcal G}(\eta_{\mathrm{stab}}/2)
+
d_X\log(1/r_x)
+
\log(1/\delta')
}{
r_x^{d_X}g_{\mathrm{eff}}^2
}.
$
Under this condition,
$
\Pr\!\left(
\sup_{G\in\mathcal G_K}
|\widehat w_n(G)-w_G(P)|\ge 1
\right)
\le
\delta'.
$
\end{proof}

\subsection{Proof of Theorem~\ref{thm:safe-merge}}

\begin{proof}
Let \(U_a,U_b\) be candidate cells. The merge test has three components.
First, geometric admissibility
$
\widehat w_\varepsilon(U_a\cup U_b)=1
$
certifies that the union is geometrically compatible with a single structural
cell at scale \(\varepsilon\).
Second, contractive compatibility gives an empirical cross-cell Lipschitz
estimate
$
\widehat\kappa_{ab}\le \bar\gamma<1.
$
Assume the empirical estimate uniformly controls the population Lipschitz
modulus up to the sampling slack absorbed into \(\bar\gamma\). Then there exists
a map defined on \(U_a\cup U_b\) whose Lipschitz modulus is at most
$
\gamma'=\widehat\kappa_{ab}<1 .
$
Equivalently, one may define compatible predictors on \(U_a\) and \(U_b\) and
use Kirszbraun's extension theorem \citep{kirszbraun1934zusammenziehende} to extend the resulting Lipschitz map to the
union without increasing its Lipschitz constant.
Third, the empirical merge gap is below the merge threshold. By the definition
of \(\Gamma_{\min}\), compatible and incompatible pairs are separated at the
population level by at least \(\Gamma_{\min}\). Since the loss is bounded in
\([0,1]\), Hoeffding's inequality \citep{hoeffding1963probability} gives, for \(m\) samples per candidate pair,
$
\Pr\left(
|\widehat\Delta_{\mathrm{merge}}
-
\Delta_{\mathrm{merge}}|
>
\Gamma_{\min}/2
\right)
\le
2\exp(-c m\Gamma_{\min}^2)
$
for a universal constant \(c>0\). Taking a union bound over at most \(K\)
candidate merges, the merge decisions are all correct with probability at least
\(1-\delta'\) whenever
$
m
\ge
C\frac{\log(K/\delta')}{\Gamma_{\min}^2}.
$
On this event, the pair \(U_a,U_b\) is truly mergeable. Their union
admits a \(\gamma'\)-contractive predictor with risk bounded by the prescribed
\(\delta'\)-level. So \(U_a\cup U_b\) is \((\gamma',\delta')\)-contractive
with
$
\gamma'=\widehat\kappa_{ab}.
$
\end{proof}

\subsection{Proof of Theorem~\ref{thm:global-split-merge}}

\begin{proof}
Define
$
V(\Pi)=aI_{\mathrm{tot}}(\Pi)+bK(\Pi),
$
where
$
I_{\mathrm{tot}}(\Pi)=\sum_{k=1}^K I(U_k),
\qquad
K(\Pi)=K.
$
We first show that \(V\) decreases under splits (claim 1). If a cell \(U\) is split,
then by Assumption~\ref{ass:split-merge-regularity},
$
I(U^a)+I(U^b)\le (1-\alpha_{\mathrm{split}})I(U).
$
For a split to be triggered, \(I(U)\ge\varepsilon_0\). Hence the impurity
decrease is at least
$
I(U)-I(U^a)-I(U^b)
\ge
\alpha_{\mathrm{split}}I(U)
\ge
\alpha_{\mathrm{split}}\varepsilon_0.
$
The split increases the number of cells by one. Therefore
$
\Delta V_{\mathrm{split}}
\le
-a\alpha_{\mathrm{split}}\varepsilon_0+b.
$
By the choice
$
b<a\alpha_{\mathrm{split}}\varepsilon_0,
$
we have
$
\Delta V_{\mathrm{split}}<0.
$
Next consider a merge. A correct merge decreases the number of cells by one.
Let \(\varepsilon_{\mathrm{merge}}\) denote the maximum impurity increase allowed
by the safe merge test. Then
$
\Delta I_{\mathrm{tot}}\le \varepsilon_{\mathrm{merge}},
\qquad
\Delta K=-1.
$
Thus
$
\Delta V_{\mathrm{merge}}
\le
a\varepsilon_{\mathrm{merge}}-b.
$
By the choice
$
a\varepsilon_{\mathrm{merge}}<b,
$
we have
$
\Delta V_{\mathrm{merge}}<0.
$
Thus \(V\) strictly decreases at every valid split or merge step.
Let
$
\Delta_{\min}
:=
\min\{
a\alpha_{\mathrm{split}}\varepsilon_0-b,\,
b-a\varepsilon_{\mathrm{merge}}
\}>0.
$
Then each transform step decreases \(V\) by at least \(\Delta_{\min}\).
Moreover,
$
0\le V(\Pi)\le aP(X)+bK_{\max}.
$
Therefore the number of transform steps is at most
$
\frac{aP(X)+bK_{\max}}{\Delta_{\min}}.
$
This proves finite termination (claim 2).
It remains to characterize the terminal partition (claim 3). At termination, no split
fires. Hence every cell has impurity less than \(\varepsilon_0\); that is, the
partition is \(\varepsilon_0\)-pure.
Also at termination, no merge fires. By the robust merge condition, no two
cells sharing the same dominant true basin remain mergeable. Hence no two
terminal cells share a dominant basin.
Finally, basin visibility implies that every true basin \(B_j\) has enough mass
to be the dominant basin of at least one terminal cell. If a basin had
no cell in which it was dominant, its total mass would be spread across at
most \(K_{\max}\) cells as non-dominant impurity, contributing at most
\(K_{\max}\varepsilon_0\), contradicting
$
P(B_j)\ge K_{\max}\varepsilon_0+\varepsilon_0 .
$
Therefore each of the \(w\) true basins appears as a dominant basin of some
terminal cell, so \(K\ge w\). Since no two cells share a dominant basin and
there are only \(w\) basins, \(K\le w\). Hence \(K=w\).
The terminal partition has \(K=w\), is \(\varepsilon_0\)-pure, and has no
two cells sharing a dominant basin.
\end{proof}

\subsection{Proof of Theorem~\ref{thm:structural-erm-consistency}}

\begin{proof}
For each \(K\), let
$
R_K^\star
:=
\inf_{\Pi,G:|\Pi|=K} R(\Pi,G)
$
be the population structural risk at width \(K\), and let
$
\widehat R_{n,K}
:=
\inf_{\Pi,G:|\Pi|=K} \widehat R_n(\Pi,G)
$
be its empirical counterpart. The structural ERM selects
$
\widehat K_n
\in
\arg\min_{K\le K_{\max}}
\left\{
\widehat R_{n,K}+\mathrm{pen}_n(K)
\right\}.
$
Assume the following standard conditions.
First, there is a structural gap below width:
$
R_K^\star\ge R_w^\star+\eta_K
\quad
\forall K<w,
$
with \(\eta_K>0\).
Second, there is no population risk gain above the true width:
$
R_K^\star=R_w^\star
\quad
\text{for all }K\ge w.
$
Third, uniform convergence holds for each \(K\le K_{\max}\):
$
\sup_{\Pi,G:|\Pi|=K}
|\widehat R_n(\Pi,G)-R(\Pi,G)|
\le r_n(K),
\quad
r_n(K)\to0
$
almost surely.
Fourth, the penalty satisfies
$
\mathrm{pen}_n(K)\to0,
$
is increasing in \(K\), and for every \(K>w\),
$
\frac{
\mathrm{pen}_n(K)-\mathrm{pen}_n(w)
}{
r_n(K)+r_n(w)
}
\to\infty .
$
We show that \(\widehat K_n=w\) eventually almost surely.
For \(K<w\), uniform convergence gives
$
\widehat R_{n,K}
\ge
R_K^\star-r_n(K)
\ge
R_w^\star+\eta_K-r_n(K),
$
whereas
$
\widehat R_{n,w}
\le
R_w^\star+r_n(w).
$
Therefore
$
\widehat R_{n,K}+\mathrm{pen}_n(K)
-
\bigl(\widehat R_{n,w}+\mathrm{pen}_n(w)\bigr)
\ge
\eta_K-r_n(K)-r_n(w)
+
\mathrm{pen}_n(K)-\mathrm{pen}_n(w).
$
Since \(r_n(K),r_n(w)\to0\) and \(\eta_K>0\), the right-hand side is positive
for all sufficiently large \(n\). Thus underfitted models \(K<w\) are
eventually eliminated.
For \(K>w\), the no-gain condition gives \(R_K^\star=R_w^\star\). Uniform
convergence yields
$
\widehat R_{n,K}
\ge
R_w^\star-r_n(K),
\qquad
\widehat R_{n,w}
\le
R_w^\star+r_n(w).
$
Hence
$
\widehat R_{n,K}+\mathrm{pen}_n(K)
-
\bigl(\widehat R_{n,w}+\mathrm{pen}_n(w)\bigr)
\ge
\mathrm{pen}_n(K)-\mathrm{pen}_n(w)
-
r_n(K)-r_n(w).
$
By the penalty domination condition, this is positive for all sufficiently
large \(n\). Thus overfitted models \(K>w\) are eventually eliminated.
Consequently,
$
\widehat K_n=w
$
for all sufficiently large \(n\), almost surely.
Once \(\widehat K_n=w\), structural ERM is ordinary ERM over the fixed
width-\(w\) class. Uniform convergence then implies
$
R(\widehat\Pi_n,\widehat G_n)\to R_w^\star .
$
Finally, assume partition identifiability: for every \(\epsilon>0\), there
exists \(\rho(\epsilon)>0\) such that every width-\(w\) partition at distance
at least \(\epsilon\) from the population optimizer set
\(\mathcal P_w^\star\) has risk at least \(R_w^\star+\rho(\epsilon)\). If
$
\mathrm{dist}(\widehat\Pi_n,\mathcal P_w^\star)
\not\to0,
$
then for some \(\epsilon>0\) a subsequence remains at distance at least
\(\epsilon\), and hence has risk at least \(R_w^\star+\rho(\epsilon)\), which
contradicts
$
R(\widehat\Pi_n,\widehat G_n)\to R_w^\star .
$
Therefore
$
\mathrm{dist}(\widehat\Pi_n,\mathcal P_w^\star)\to0.
$
\end{proof}

\section{Proofs for Section~\ref{sec:metric-slingshot}}
\label{app:metric-slingshot-proofs}

This appendix gives proofs for the propositions, corollaries, and theorems in
Section~\ref{sec:metric-slingshot}.  We make explicit several standard
regularity assumptions that are implicit in the main text.

\subsection{Proof of Theorem~\ref{thm:contraction-transfer}}

\begin{proof}
Let \(x,x'\in U\). Since \(\phi(U)\subseteq Z_c\), both \(\phi(x)\) and
\(\phi(x')\) lie in the patch on which \(G_c^0\) is \(\gamma_Z\)-contractive.
Therefore
$
d_{H_c}\bigl(G_c^0(\phi(x)),G_c^0(\phi(x'))\bigr)
\le
\gamma_Z\, d_Z(\phi(x),\phi(x')).
$
Since \(\pi_c:H_c\to Y\) is \(L_{\pi,c}\)-Lipschitz,
$
d_Y(F_c(x),F_c(x'))
=
d_Y\!\left(
\pi_c(G_c^0(\phi(x))),
\pi_c(G_c^0(\phi(x')))
\right)
$
satisfies
$
d_Y(F_c(x),F_c(x'))
\le
L_{\pi,c}\,
d_{H_c}\bigl(G_c^0(\phi(x)),G_c^0(\phi(x'))\bigr).
$
Combining the two inequalities gives
$
d_Y(F_c(x),F_c(x'))
\le
L_{\pi,c}\gamma_Z d_Z(\phi(x),\phi(x')).
$
By the definition of the upper local distortion \(L_\phi(U)\),
$
d_Z(\phi(x),\phi(x'))
\le
L_\phi(U)d_X(x,x').
$
Thus
$
d_Y(F_c(x),F_c(x'))
\le
L_{\pi,c}\gamma_Z L_\phi(U)\, d_X(x,x').
$
Hence \(F_c\) is \(\gamma_X(U)\)-contractive with
$
\gamma_X(U)\le L_{\pi,c}\gamma_ZL_\phi(U).
$
If
$
L_{\pi,c}\gamma_ZL_\phi(U)<1,
$
then the contraction constant is strictly less than one, so \(F_c\) is a strict
contraction on \(U\).
\end{proof}

\subsection{Proof of Corollary~\ref{cor:funnel-feasibility}}

\begin{proof}
By Theorem~\ref{thm:contraction-transfer}, if \(U\) is routed to patch \(c\) and
$
L_{\pi,c}\gamma_ZL_\phi(U)\le \gamma,
$
then
$
d_Y(F_c(x),F_c(x'))
\le
\gamma d_X(x,x')
\qquad
\text{for all }x,x'\in U.
$
Thus \(F_c=\pi_c\circ G_c^0\circ\phi\) satisfies the contraction part of
\((\gamma,\delta)\)-feasibility.  The second assumed condition,
$
R_U(F_c)\le\delta,
$
is exactly the local-risk part of \((\gamma,\delta)\)-feasibility. Therefore
\(U\) is \((\gamma,\delta)\)-feasible through the slingshot.
\end{proof}

\subsection{Proof of Theorem~\ref{thm:risk-transfer-slingshot}}

\begin{proof}
Let \(F_c=\pi_c\circ G_c^0\circ\phi\). By assumption,
$
\sup_{x\in U}d_Y(F_c(x),f^*(x))\le \varepsilon_U.
$
Since \(\ell(\hat y,y)\) is \(L_\ell\)-Lipschitz in its first argument, for every
\((x,y)\) with \(x\in U\),
$
\ell(F_c(x),y)
\le
\ell(f^*(x),y)
+
L_\ell d_Y(F_c(x),f^*(x)).
$
Using the uniform approximation bound gives
$
\ell(F_c(x),y)
\le
\ell(f^*(x),y)+L_\ell\varepsilon_U.
$
Taking conditional expectation over \(P_U\) yields
$
R_U(F_c)
=
\mathbb E[\ell(F_c(X),Y)\mid X\in U]
\le
\mathbb E[\ell(f^*(X),Y)\mid X\in U]+L_\ell\varepsilon_U.
$
Hence
$
R_U(F_c)\le R_U(f^*)+L_\ell\varepsilon_U.
$
Now suppose
$
\mathcal E_{\mathrm{lat}}(U,c)\le \varepsilon_U.
$
By definition,
$
\mathcal E_{\mathrm{lat}}(U,c)
=
\inf_{\pi_c}
\sup_{x\in U}
d_Y\bigl(\pi_c(G_c^0(\phi(x))),f^*(x)\bigr).
$
Therefore, for every \(\eta>0\), there exists a readout \(\pi_c^\eta\) such that
$
\sup_{x\in U}
d_Y\bigl(\pi_c^\eta(G_c^0(\phi(x))),f^*(x)\bigr)
\le
\varepsilon_U+\eta.
$
Applying the first part of the proof gives
$
R_U(\pi_c^\eta\circ G_c^0\circ\phi)
\le
R_U(f^*)+L_\ell(\varepsilon_U+\eta).
$
Letting \(\eta\downarrow0\), or assuming the infimum is attained, gives the
claimed bound
$
R_U(\pi_c\circ G_c^0\circ\phi)
\le
R_U(f^*)+L_\ell\varepsilon_U.
$
\end{proof}

\subsection{Proof of Theorem~\ref{thm:width-transfer}}

\begin{proof}
Let
$
\{U_1,\dots,U_w\}
$
be a width-realizing cover of \(P\) in \(X\), so that
$
w=w(P;\gamma,\delta).
$
By Assumption~\ref{ass:latent-covering-regularity}, for each \(U_k\), the
embedded set \(\phi(U_k)\subseteq Z\) can be covered by at most
$
N_Z(U_k)
\le
\left\lceil
\frac{L_\phi(U_k)L_X(U_k)}{D_0^Z}
\right\rceil
$
latent patches, and each such latent patch is funnel-feasible.
For each \(k\), choose such a latent patch cover
$
\phi(U_k)\subseteq \bigcup_{j=1}^{N_Z(U_k)} Z_{k,j}.
$
Pulling this cover back through \(\phi\) gives a cover of \(U_k\):
$
U_k
\subseteq
\bigcup_{j=1}^{N_Z(U_k)}
\phi^{-1}(Z_{k,j})\cap U_k.
$
Since each \(Z_{k,j}\) is funnel-feasible by Assumption~\ref{ass:latent-covering-regularity},
the corresponding pulled-back region is feasible through the slingshot by
Corollary~\ref{cor:funnel-feasibility}. Therefore the full domain \(X\) admits a
latent contractive cover of size at most
$
\sum_{k=1}^w N_Z(U_k)
\le
\sum_{k=1}^w
\left\lceil
\frac{L_\phi(U_k)L_X(U_k)}{D_0^Z}
\right\rceil.
$
Thus
$
w_Z(P_\phi)
\le
\sum_{k=1}^w
\left\lceil
\frac{L_\phi(U_k)L_X(U_k)}{D_0^Z}
\right\rceil.
$
If, in addition,
$
L_\phi(U_k)\le \lambda,
\qquad
L_X(U_k)\le L_X
\quad
\text{for all }k,
$
then each summand satisfies
$
\left\lceil
\frac{L_\phi(U_k)L_X(U_k)}{D_0^Z}
\right\rceil
\le
\left\lceil
\lambda\frac{L_X}{D_0^Z}
\right\rceil.
$
Hence
$
w_Z(P_\phi)
\le
w(P;\gamma,\delta)
\left\lceil
\lambda\frac{L_X}{D_0^Z}
\right\rceil.
$
\end{proof}

\subsection{Proof of Theorem~\ref{thm:per-funnel-generalization}}

\begin{proof}
Let
$
\mathcal A_c:=\ell\circ\mathcal F_c
=
\{(x,y)\mapsto \ell(F(x),y):F\in\mathcal F_c\}.
$
By assumption, every function in \(\mathcal A_c\) is bounded in \([0,1]\).
Let \(S_c=\{(x_i,y_i)\}_{i=1}^{n_c}\) be the samples routed correctly to cell
\(U_c\). The standard Rademacher generalization inequality for bounded loss
classes states that, with probability at least \(1-\delta_c\),
$
\sup_{a\in\mathcal A_c}
\left|
\mathbb E[a]-\frac1{n_c}\sum_{i=1}^{n_c}a(x_i,y_i)
\right|
\le
2\mathfrak R_{n_c}(\mathcal A_c)
+
\sqrt{\frac{\log(2/\delta_c)}{2n_c}}.
$
Substituting \(a=\ell\circ F\) gives
$
\sup_{F\in\mathcal F_c}
|R_c(F)-\widehat R_c(F)|
\le
2\mathfrak R_{n_c}(\ell\circ\mathcal F_c)
+
\sqrt{\frac{\log(2/\delta_c)}{2n_c}}.
$
If the readout class \(\Pi_c\) has pseudo-dimension \(p_c\), and the maps
\(G_c^0\) and \(\phi\) are fixed, then the composed class
$
\mathcal F_c
=
\{x\mapsto \pi(G_c^0(\phi(x))):\pi\in\Pi_c\}
$
has pseudo-dimension at most \(p_c\). Since \(\ell\) is bounded and the
composition with a fixed loss preserves the standard pseudo-dimension
growth-function bound up to constants, the Rademacher complexity satisfies
$
\mathfrak R_{n_c}(\ell\circ\mathcal F_c)
=
O\!\left(
\sqrt{\frac{p_c\log n_c}{n_c}}
\right).
$
\end{proof}

\subsection{Proof of Corollary~\ref{cor:oracle-rate-slingshot}}

\begin{proof}
Assume the E-D-T cycle has converged to the correct width \(w\) and that
samples are routed correctly to the cells \(U_1,\dots,U_w\). Apply
Theorem~\ref{thm:per-funnel-generalization} to each cell \(U_c\) with
confidence parameter
$
\delta_c=\frac{\delta}{w}.
$
A union bound over \(c=1,\dots,w\) implies that, with probability at least
\(1-\delta\), all per-cell generalization bounds hold simultaneously.
For each cell, decompose the local excess risk into approximation and
estimation terms:
$
R_c(\widehat F_c)-R_c(F_c^*)
\le
\mathrm{Approx}_c
+
O\!\left(
\sqrt{\frac{p_c\log n_c+\log(w/\delta)}{n_c}}
\right),
$
where \(\mathrm{Approx}_c\) is the latent realization error or approximation
error on \(U_c\). Weighting by the cell probabilities \(P(U_c)\) and summing
over \(c\) gives
$
R(\widehat F)-R(F^*)
\le
\sum_{c=1}^w P(U_c)
\left[
\mathrm{Approx}_c
+
O\!\left(
\sqrt{
\frac{p_c\log n_c+\log(w/\delta)}{n_c}
}
\right)
\right].
$
Therefore, after trap resolution, the remaining funnel obeys the standard local
statistical rate.
\end{proof}

\subsection{Proof of Proposition~\ref{prop:architectural-decoupling}}

\begin{proof}
Let \(\theta_\Sigma\) denote the parameters of the indexer, and let
\(\theta_G\) denote the parameters of the metric learners, including the
pre-built contraction maps and local readouts where appropriate.
By assumption, \(\Sigma\) is trained only by mismatch or novelty signals, not
by the task-loss gradient. Therefore the computational graph for the funnel
loss \(L_{\mathrm{funnel}}\) contains no differentiable path into
\(\theta_\Sigma\). Hence
$
\nabla_{\theta_\Sigma}L_{\mathrm{funnel}}=0.
$
Similarly, the trap loss \(L_{\mathrm{trap}}\) is used only to train the
indexer or novelty mechanism. The pre-built contraction maps \(G_c^0\) are
frozen, and the readout \(\pi_c\) for context \(c\) is updated only on samples
already routed to \(c\). Therefore, the trap-loss computational graph contains
no update path into the metric-learner parameters \(\theta_G\). Hence
$
\nabla_{\theta_G}L_{\mathrm{trap}}=0.
$
Thus, the trap and funnel gradients decouple.
\end{proof}

\subsection{Proof of Proposition~\ref{prop:no-circular-dependency}}

\begin{proof}
We verify that none of the four stages of the bidirectional bootstrap requires
knowledge of the unknown width \(w(P)\).
Stage 1) the where-to-what stage, learns \(\phi:X\to Z\) by self-supervised
prediction of latent or spatial coordinates. This objective is defined from
observed coordinates or predictive signals and does not require knowing the
number of structural basins.
Stage 2) the what-to-where stage, uses the learned map \(\phi\) to embed task
inputs as \(z=\phi(x)\). This is a forward computation and likewise does not
require knowing \(w(P)\).
Stage 3) trap discovery, trains \(\Sigma\) and estimates structural width using
the CS operator in \(Z\). The CS operator uses observed latent distances,
predictions, and mismatch signals. It estimates the number of contractive
components; it does not take the true width as input.
Stage 4) funnel reuse, solves local tasks using the active latent patch and its
readout. This stage is conditioned on the currently discovered routing and does
not assume the final number of contexts.
Therefore, each stage is well-defined without prior knowledge of \(w(P)\). The
width is an output of the discovery process rather than an input to it.
\end{proof}

\subsection{Proof of Theorem~\ref{thm:local_bootstrap_convergence}}

The theorem in the main text refers to two local stability inequalities. For
clarity, we state them explicitly here. Let
$
a_t:=d_\Phi(\phi_t,[\phi^\star]),
\quad
b_t:=d_{\mathrm{part}}(\Sigma_t,\Sigma^\star),
$
where \(d_\Phi\) is a distance on embeddings modulo context-preserving
transformations and \(d_{\mathrm{part}}\) is a partition distance modulo
relabeling.
Assume the CS-indexing update \(\mathcal S\) and the coordinate update
\(\mathcal T\) satisfy
$
d_{\mathrm{part}}(\mathcal S(\phi),\Sigma^\star)
\le
C_\Sigma d_\Phi(\phi,[\phi^\star])
$
and
$
d_\Phi(\mathcal T(\phi,\Sigma),[\phi^\star])
\le
q_\Phi d_\Phi(\phi,[\phi^\star])
+
C_\Phi d_{\mathrm{part}}(\Sigma,\Sigma^\star).
$
The bootstrap iterates are
$
\Sigma_t=\mathcal S(\phi_t),
\quad
\phi_{t+1}=\mathcal T(\phi_t,\Sigma_t).
$

\begin{proof}
Define the bootstrap error
$
\mathcal E_t
:=
\max\left\{
a_t,\frac{b_t}{C_\Sigma}
\right\},
$
with the convention \(\mathcal E_t=a_t\) if \(C_\Sigma=0\).
Assume first \(C_\Sigma>0\). Since \(\Sigma_t=\mathcal S(\phi_t)\), the
CS-indexing stability inequality gives
$
b_t
=
d_{\mathrm{part}}(\Sigma_t,\Sigma^\star)
=
d_{\mathrm{part}}(\mathcal S(\phi_t),\Sigma^\star)
\le
C_\Sigma a_t.
$
Hence
$
\frac{b_t}{C_\Sigma}\le a_t,
$
so
$
\mathcal E_t=a_t.
$
Now use the coordinate-update stability inequality:
$
a_{t+1}
=
d_\Phi(\phi_{t+1},[\phi^\star])
=
d_\Phi(\mathcal T(\phi_t,\Sigma_t),[\phi^\star])
\le
q_\Phi a_t+C_\Phi b_t.
$
Using \(b_t\le C_\Sigma a_t\), we obtain
$
a_{t+1}
\le
(q_\Phi+C_\Phi C_\Sigma)a_t.
$
Let
$
q:=q_\Phi+C_\Phi C_\Sigma.
$
By assumption \(q<1\), and thus
$
a_{t+1}\le q a_t.
$
Since \(\mathcal E_t=a_t\) after each indexing step, this implies
$
\mathcal E_{t+1}\le q\mathcal E_t.
$
Iterating gives
$
\mathcal E_t\le q^t\mathcal E_0.
$
Therefore \(\mathcal E_t\to0\) linearly, which means
$
d_\Phi(\phi_t,[\phi^\star])\to0
$
and, since \(b_t\le C_\Sigma a_t\),
$
d_{\mathrm{part}}(\Sigma_t,\Sigma^\star)\to0.
$
Thus the bootstrap converges linearly to
$
([\phi^\star],\Sigma^\star).
$
If \(C_\Sigma=0\), then the indexing stability inequality gives \(b_t=0\) in
the local neighborhood. The coordinate update becomes
$
a_{t+1}\le q_\Phi a_t.
$
Since \(q=q_\Phi<1\), the same conclusion follows.
\end{proof}

\subsection{Proof of Theorem~\ref{thm:grid-spacing}}

\begin{proof}
Let the dynamic range be
$
R=\frac{L_{\max}}{L_{\min}}.
$
Taking logarithms transforms the interval of path lengths
$
[L_{\min},L_{\max}]
$
into the interval
$
[\log L_{\min},\log L_{\max}]
$
of length
$
\log R.
$
A multiplicative contraction scale \(s_m\) corresponds to a point
$
u_m:=\log s_m
$
on this logarithmic interval.
The minimax log-covering problem asks us to choose \(M\) points
$
u_1<\cdots<u_M
$
so that the largest uncovered logarithmic gap is minimized. On an interval,
the minimax solution is to space the points uniformly. Therefore
$
u_m
=
u_1+(m-1)\Delta,
$
where
$
\Delta=\frac{\log L_{\max}-\log L_{\min}}{M-1}
=
\frac{\log R}{M-1}.
$
Exponentiating gives
$
s_m
=
s_1 e^{(m-1)\Delta}
=
s_1\left(e^\Delta\right)^{m-1}.
$
Thus the optimal ratio is
$
r^*
=
e^\Delta
=
\exp\!\left(\frac{\log R}{M-1}\right)
=
R^{1/(M-1)}.
$
Hence the minimax log-covering solution is geometric:
$
s_m=s_1(r^*)^{m-1}.
$
\end{proof}

\subsection{Proof of Corollary~\ref{cor:slingshot-width-estimation}}

\begin{proof}
The proof of the uniform width-estimation theorem from
Section~\ref{sec:width-estimation} depends on three terms: predictor entropy,
geometric occupancy, and spectral resolution. The predictor entropy term
$
H_{\mathcal G}(\eta_{\mathrm{stab}}/2)
$
is unchanged by replacing \(X\) with \(Z\), because the predictor class being
searched is the same up to composition with the fixed embedding. The spectral
resolution term
$
g_{\mathrm{eff}}^{-2}
$
is also unchanged in form, although the numerical value of \(g_{\mathrm{eff}}\)
is now computed for the CS graph in \(Z\).
The only structural change is the occupancy term. In the original space \(X\),
local neighborhoods of radius \(r_x\) in a \(d_X\)-dimensional metric space
require sample complexity proportional to
$
r_x^{-d_X}.
$
If the CS operator is instead built in \(Z\) using metric \(d_Z\) and local
scale \(r_z\), then the corresponding metric occupancy cost is
$
r_z^{-d_Z}.
$
Replacing \(d_X,r_x\) by \(d_Z,r_z\) in the uniform width-estimation theorem
therefore gives
$
n
\ge
C\cdot
\frac{
H_{\mathcal G}(\eta_{\mathrm{stab}}/2)
+
d_Z\log(1/r_z)
+
\log(1/\delta')
}{
r_z^{d_Z}g_{\mathrm{eff}}^2
}.
$
When \(d_Z=2\), the geometric occupancy factor is \(r_z^{-2}\), rather than
\(r_x^{-d_X}\).
\end{proof}

\subsection{Proof of Theorem~\ref{thm:metric-slingshot-funnel}}

\begin{proof}
Fix a cell \(U_c\). Assumptions (1)--(3) imply that
$
\phi(U_c)\subseteq Z_c,
$
that \(G_c^0\) is \(\gamma_Z\)-contractive on \(Z_c\), and that \(\pi_c\) is
\(L_{\pi,c}\)-Lipschitz with
$
L_{\pi,c}\gamma_ZL_\phi(U_c)<1.
$
By Theorem~\ref{thm:contraction-transfer}, the composed predictor
$
F_c=\pi_c\circ G_c^0\circ\phi
$
is a strict contraction on \(U_c\).
Assumption (4) says that the latent realization error on \(U_c\) is at most
\(\varepsilon_c\). By Theorem~\ref{thm:risk-transfer-slingshot}, there exists
a readout whose local approximation risk satisfies
$
R_c(F_c)
\le
R_c(f^*)+L_\ell\varepsilon_c
$
up to statistical estimation error.
Now apply Theorem~\ref{thm:per-funnel-generalization} to each cell \(U_c\)
with confidence parameter \(\delta/w\). A union bound over the \(w\) cells
implies that, with probability at least \(1-\delta\), all per-cell
generalization bounds hold simultaneously. For cell \(U_c\), this gives the
estimation term
$
O\!\left(
\sqrt{
\frac{p_c\log n_c+\log(w/\delta)}{n_c}
}
\right).
$
Combining approximation and estimation terms yields
$
R_c(F_c)
\le
R_c(f^*)+
L_\ell\varepsilon_c+
O\!\left(
\sqrt{
\frac{p_c\log n_c+\log(w/\delta)}{n_c}
}
\right)
$
simultaneously for all \(c=1,\dots,w\).
Finally, after a cell has stabilized and the trap no longer needs to perform
additional structural search inside it, inference on that cell is the direct
routed computation
$
x\mapsto \pi_c(G_c^0(\phi(x))).
$
This is a single forward pass through the slingshot pipeline and therefore
requires only the amortized cost of evaluating the fixed maps and the active
readout.
\end{proof}
\end{document}